\theoremstyle{plain}
\newcounter{theoremcounter}
\newtheorem{theorem}[theoremcounter]{Theorem}
\newtheorem{lemma}[theoremcounter]{Lemma}
\newcommand{\op}{\mathsf{op}}
\theoremstyle{definition}
\newcounter{definitioncounter}
\newtheorem{definition}[definitioncounter]{Definition}
\newcommand{\argmin}{\operatornamewithlimits{argmin}}
\title{Dropout with Expectation-linear \\ Regularization}
\author{
  Xuezhe Ma, Yingkai Gao \\
  Language Technologies Institute \\
  Carnegie Mellon University \\
  {\tt \{xuezhem, yingkaig\}@cs.cmu.edu} \\
  \And
  Zhiting Hu, Yaoliang Yu \\
  Machine Learning Department \\
  Carnegie Mellon University \\
  {\tt \{zhitinghu, yaoliang\}@cs.cmu.edu} \\
  \AND
  Yuntian Deng \\
  School of Engineering and Applied Sciences \\
  Harvard University \\
  \texttt{dengyuntian@gmail.com} \\
  \And
  Eduard Hovy \\
  Language Technologies Institute \\
  Carnegie Mellon University \\
  \texttt{hovy@cmu.edu} \\
}
\begin{document}

\maketitle

\begin{abstract}
Dropout, a simple and effective way to train deep neural networks, has led to a number of impressive empirical
successes and spawned many recent theoretical investigations. However, the gap between dropout's training and
inference phases, introduced due to tractability considerations, has largely remained under-appreciated. 
In this work, we first formulate dropout as a tractable approximation of some latent variable model, 
leading to a clean view of parameter sharing and enabling further theoretical analysis. 
Then, we introduce (approximate) expectation-linear dropout neural networks, whose inference gap we 
are able to formally characterize. Algorithmically, we show that our proposed measure of the inference 
gap can be used to regularize the standard dropout training objective, resulting in an \emph{explicit} 
control of the gap. Our method is as simple and efficient as standard dropout. 
We further prove the upper bounds on the loss in accuracy due to expectation-linearization, describe 
classes of input distributions that expectation-linearize easily.
Experiments on three image classification benchmark datasets demonstrate that reducing the inference 
gap can indeed improve the performance consistently.
\end{abstract}

\section{Introduction}
Deep neural networks \citep[DNNs, e.g.,][]{LeCunBH15,Schmidhuber15}, if trained properly, have been demonstrated to significantly improve the benchmark performances in a wide range of application domains. 
As neural networks go deeper and deeper, naturally, its model complexity also increases quickly, 
hence the pressing need to \emph{reduce overfitting} in training DNNs. A number of techniques have emerged 
over the years to address this challenge, among which dropout~\citep{hinton2012improving,srivastava2013improving} 
has stood out for its simplicity and effectiveness. In a nutshell, dropout \emph{randomly} 
``drops'' neural units during training as a means to prevent feature co-adaptation---a sign of 
overfitting \citep{hinton2012improving}. Simple as it appears to be, dropout has led to several 
record-breaking performances~\citep{hinton2012improving,ma2016end}, and thus spawned a lot of recent interests 
in analyzing and justifying dropout from the theoretical perspective, and also in further improving dropout from 
the algorithmic and practical perspective.

In their pioneering work, \citet{hinton2012improving} and \citet{srivastava2014dropout} interpreted dropout 
as an extreme form of model combination (aka. model ensemble) with extensive parameter/weight sharing, 
and they proposed to learn the combination through minimizing an appropriate expected loss. 
Interestingly, they also pointed out that for a single logistic neural unit, the output of dropout is in 
fact the geometric mean of the outputs of the model ensemble with shared parameters. Subsequently, 
many theoretical justifications of dropout have been explored, and we can only mention a few here due 
to space limits. Building on the weight sharing perspective, \citet{baldi2013understanding,baldi2014dropout}
analyzed the ensemble averaging property of dropout in deep non-linear logistic networks, and supported the 
view that dropout is equivalent to applying stochastic gradient descent on some regularized loss function.
\citet{wager2013dropout} treated dropout as an adaptive regularizer for generalized linear models (GLMs).
\citet{helmbold2016fundamental} discussed the differences between dropout and traditional weight decay
regularization. In terms of statistical learning theory, \citet{gao2014dropout} studied 
the Rademacher complexity of different types of dropout, showing that dropout is able to reduce the  
Rademacher complexity polynomially for shallow neural networks (with one or no hidden layers) and 
exponentially for deep neural networks. This latter work~\citep{gao2014dropout} formally demonstrated that 
dropout, due to its regularizing effect, contributes to reducing the inherent model complexity, 
in particular the variance component in the generalization error.

Seen as a model combination technique, it is intuitive that dropout contributes to reducing the variance of 
the model performance. Surprisingly, dropout has also been shown to play some role in reducing the model bias. 
For instance, \citet{jain2015drop} studied the ability of dropout training to escape local minima, hence 
leading to reduced model bias. Other studies~\citep{chen2014dropout,helmbold2014inductive,wager2014altitude} 
focus on the effect of the dropout noise on models with shallow architectures. We noted in passing that 
there are also some work~\citep{kingma2015variational,gal2015dropout,gal2016dropout:rnn} trying to understand dropout from 
the Bayesian perspective. 

In this work, we first formulate dropout as a tractable approximation of a latent variable model, and give 
a clean view of weight sharing (\S 3). Then, we focus on an \emph{inference gap} in dropout that has somehow 
gotten under-appreciated: In the inference phase, for computational tractability considerations, the model 
ensemble generated by dropout is approximated by a \emph{single} model with scaled weights, resulting in 
a gap between training and inference, and rendering the many previous theoretical findings inapplicable. 
In general, this inference gap can be very large and no attempt (to our best knowledge) has been made to 
control it. We make three contributions in bridging this gap: Theoretically, we introduce 
expectation-linear dropout neural networks, through which we are able to explicitly quantify the 
inference gap (\S 4). In particular, our theoretical results explain why the max-norm constraint 
on the network weights, a standard practice in training DNNs, can lead to a small inference gap 
hence potentially improve performance. Algorithmically, we propose to add a sampled version of the 
inference gap to regularize the standard dropout training objective~(\emph{expectation-linearization}), 
hence allowing explicit control of the inference gap, and analyze the interaction 
between expectation-linearization and the model accuracy (\S 5). 
Experimentally, through three benchmark datasets we show that our regularized 
dropout is not only as simple and efficient as standard dropout but also consistently 
leads to improved performance (\S 6).

\section{Dropout Neural Networks}
In this section we set up the notations, review the dropout neural network model, and discuss the 
inference gap in standard dropout training that we will attempt to study in the rest of the paper.
\subsection{DNNs and Notations}
\label{subsec:notation}
Throughout we use uppercase letters for random variables (and occasionally for matrices as well), and 
lowercase letters for realizations of the corresponding random variables. 
Let $X \in \mathcal{X}$ be the input of the neural network, $Y \in \mathcal{Y}$ be the desired output, 
and $D = \{(x_1, y_1), \ldots, (x_N, y_N)\}$ be our training sample, where $x_i, i=1,\ldots, N,$ (resp. $y_i$) 
are usually i.i.d. samples of $X$ (resp. $Y$).

Let $\mathbf{M}$ denote a deep neural network with $L$ hidden layers, indexed by $l \in \{1, \ldots, L \}$. 
Let $\mathbf{h}^{(l)}$ denote the output vector from layer $l$. As usual,  $\mathbf{h}^{(0)} = x$ is 
the input, and $\mathbf{h}^{(L)}$ is the output of the neural network.
Denote $\theta = \{\theta_l: l = 1, \ldots, L\}$ as the set of parameters in the network $\mathbf{M}$, 
where $\theta_l$ assembles the parameters in layer $l$. With dropout, we need to introduce a set of 
dropout random variables $S = \{\Gamma^{(l)}: l = 1, \ldots, L\}$,
where $\Gamma^{(l)}$ is the dropout random variable for layer $l$. 
Then the deep neural network $\mathbf{M}$ can be described as:
\begin{equation}\label{eq:dnn}
\mathbf{h}^{(l)} = f_l(\mathbf{h}^{(l - 1)} \odot \gamma^{(l)}; \theta_l), \quad l = 1, \ldots, L,
\end{equation}
where $\odot$ is the element-wise product and $f_l$ is the transformation function of layer $l$. For example, 
if layer $l$ is a fully connected layer with weight matrix $W$, bias vector $b$, 
and sigmoid activation function $\sigma(x) = \frac{1}{1 + \exp(-x)}$, then $f_l(x) = \sigma(W x + b)$). 
We will also use $\mathbf{h}^{(l)}(x, s; \theta)$ to denote the output of layer $l$ with input $x$ and 
dropout value $s$, under parameter $\theta$.

In the simplest form of dropout, which is also called standard dropout, $\Gamma^{(l)}$ 
is a vector of independent Bernoulli random variables, each of which has probability $p_l$ of 
being 1 and $1 - p_l$ of being 0. This corresponds to dropping each of the weights independently 
with probability $p_l$.

\subsection{Dropout Training}
The standard dropout neural networks can be trained using stochastic gradient decent (SGD), 
with a sub-network sampled by dropping neural units for each training instance in a mini-batch. 
Forward and backward pass for that training instance are done only on the sampled sub-network. 
Intuitively, dropout aims at, simultaneously and jointly, training an ensemble of exponentially many 
neural networks (one for each configuration of dropped units) while sharing the same weights/parameters. 

The goal of the stochastic training procedure of dropout can be understood as minimizing an 
expected loss function, after marginalizing out the dropout variables~\citep{srivastava2013improving,wang2013fast}. 
In the context of maximal likelihood estimation, dropout training can be formulated as:
\begin{equation}\label{eq:expect-loss}
\theta^* = \argmin\limits_{\theta} \mathrm{E}_{S_D}[-l(D, S_D; \theta)] = \argmin\limits_{\theta} \mathrm{E}_{S_D}\Big[ -\sum\limits_{i=1}^{N} \log p(y_i|x_i, S_i; \theta)\Big],
\end{equation}
where recall that $D$ is the training sample, $S_D = \{S_1, \ldots, S_N\}$ is the dropout variable 
(one for each training instance), and $l(D, S_D; \theta)$ is the (conditional) log-likelihood function 
defined by the conditional distribution $p(y|x, s; \theta)$ of output $y$ given input $x$, under 
parameter $\theta$ and dropout variable $s$. Throughout we use the notation $\mathrm{E}_Z$ to denote 
the conditional expectation where all random variables except $Z$ are conditioned on.

Dropout has also been shown to work well with regularization, such as L2 weight
decay~\citep{tikhonov1943stability}, Lasso~\citep{tibshirani1996regression}, 
KL-sparsity\citep{bradley2008differential,hinton2010practical}, 
and max-norm regularization~\citep{srebro2004maximum},
among which the max-norm regularization --- that constrains the norm of the incoming weight 
matrix to be bounded by some constant --- was found to be especially useful for 
dropout~\citep{srivastava2013improving,srivastava2014dropout}.

\subsection{Dropout Inference and Gap}\label{subsec:dropout:inference}
As mentioned before, dropout is effectively training an ensemble of neural networks with weight sharing. 
Consequently, at test time, the output of each network in the ensemble should be averaged to deliver 
the final prediction. This averaging over exponentially many sub-networks is, however, intractable, 
and standard dropout typically implements an approximation by introducing a \emph{deterministic} 
scaling factor for each layer to replace the \emph{random} dropout variable:
\begin{equation}\label{eq:prediction}
\mathrm{E}_S[\mathbf{H}^{(L)}(x, S; \theta)] \stackrel{?}{\approx} \mathbf{h}^{(L)}(x, \mathrm{E}[S]; \theta),
\end{equation}
where the right-hand side is the output of a single deterministic neural network whose weights are 
scaled to match the \emph{expected} number of active hidden units on the left-hand side. 
Importantly, the right-hand side can be easily computed since it only involves a single 
deterministic network.

\citet{bulo2016dropout} combined dropout with knowledge distillation methods~\citep{hinton2015distilling} 
to better approximate the averaging processing of the left-hand side.
However, the quality of the approximation in \eqref{eq:prediction} is largely unknown, 
and to our best knowledge, no attempt has been made to \emph{explicitly} control this inference gap.
The main goal of this work is to explicitly quantify, algorithmically control, and experimentally 
demonstrate the inference gap in \eqref{eq:prediction}, in the hope of improving the generalization 
performance of DNNs eventually. To this end, in the next section we first present a latent variable 
model interpretation of dropout, which will greatly facilitate our later theoretical analysis.

\section{Dropout as Latent Variable Models}\label{sec:lvm}
With the end goal of studying the inference gap in \eqref{eq:prediction} in mind, in this section, 
we first formulate dropout neural networks as a latent variable model (LVM) in \S~\ref{subsec:lvm}. 
Then, we point out the relation between the training procedure of LVM and that of standard dropout 
in \S~\ref{subsec:training}. The advantage of formulating dropout as a LVM is that we need only deal 
with a single model (with latent structure), instead of an ensemble of exponentially many different 
models (with weight sharing). This much simplified view of dropout enables us to understand and 
analyze the model parameter $\theta$ in a much more straightforward and intuitive way.

\subsection{An LVM Formulation of Dropout}\label{subsec:lvm}
A latent variable model consists of two types of variables: the observed variables that represent
the empirical (observed) data and the latent variables that characterize the hidden (unobserved) structure. 
To formulate dropout as a latent variable model, the input $x$ and output $y$ are regarded as observed 
variables, while the dropout variable $s$, representing the sub-network structure, is hidden. 
Then, upon fixing the input space $\mathcal{X}$, the output space $\mathcal{Y}$, 
and the latent space $\mathcal{S}$ for dropout variables, the conditional probability 
of $y$ given $x$ under parameter $\theta$ can be written as
\begin{equation}\label{eq:lvm}
p(y|x; \theta) = \int_{\mathcal{S}} p(y|x, s; \theta) p(s) d\mu(s),
\end{equation}
where $p(y|x, s; \theta)$ is the conditional distribution modeled by the neutral network with 
configuration $s$ (same as in Eq.~\eqref{eq:expect-loss}), $p(s)$ is the distribution of dropout 
variable $S$ (e.g. Bernoulli), here assumed to be independent of the input $x$, and $\mu(s)$ 
is the base measure on the space $\mathcal{S}$.

\subsection{LVM Dropout training vs. Standard Dropout Training}\label{subsec:training}
Building on the above latent variable model formulation \eqref{eq:lvm} of dropout, we are now ready to 
point out a simple relation between the training procedure of LVM and that of standard dropout. 
Given an i.i.d.  training sample $D$, the maximum likelihood estimate for the LVM formulation of 
dropout in \eqref{eq:lvm} is equivalent to minimizing the following negative log-likelihood function:
\begin{equation}\label{eq:lvm:train}
\theta^* = \argmin\limits_{\theta} -l(D;\theta) = \argmin\limits_{\theta} -\sum\limits_{i=1}^{N} \log p(y_i|x_i; \theta),
\end{equation}
where $p(y|x; \theta)$ is given in Eq.~\eqref{eq:lvm}. Recall the dropout training objective 
$\mathrm{E}_{S_D}[-l(D, S_D; \theta)]$ in Eq.~\eqref{eq:expect-loss}. We have the following 
theorem as a simple consequence of Jensen's inequality (details in Appendix~\ref{appendix:proof:thm1}):
\begin{theorem}\label{thm:loss-bound}
The expected loss function of standard dropout (Eq.~\eqref{eq:expect-loss}) is an upper bound of 
the negative log-likelihood of LVM dropout (Eq.~\eqref{eq:lvm:train}):
\begin{equation}\label{eq:train:rel}
-l(D;\theta) \leq \mathrm{E}_{S_D}[-l(D, S_D; \theta)].
\end{equation}
\end{theorem}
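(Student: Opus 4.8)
The plan is to unwind both sides of \eqref{eq:train:rel} into expectations over the dropout variable and then apply Jensen's inequality to the concave logarithm, one training instance at a time. First I would rewrite the left-hand side using the LVM definition \eqref{eq:lvm}: since $p(s)$ is a probability distribution on $\mathcal{S}$ with base measure $\mu$, the integral $\int_{\mathcal{S}} p(y_i|x_i,s;\theta)\,p(s)\,d\mu(s)$ is precisely the expectation $\mathrm{E}_S[p(y_i|x_i,S;\theta)]$ over a single dropout draw $S$, so that
\[
-l(D;\theta) = -\sum_{i=1}^N \log \mathrm{E}_S\big[p(y_i|x_i,S;\theta)\big].
\]

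Next I would rewrite the right-hand side. Because the $S_i$ are independent across training instances and each is distributed as $S$, linearity of expectation pulls the sum outside, and the $i$-th summand depends on $S_D$ only through $S_i$, whose marginal law is that of $S$; hence
\[
\mathrm{E}_{S_D}[-l(D,S_D;\theta)] = -\sum_{i=1}^N \mathrm{E}_S\big[\log p(y_i|x_i,S;\theta)\big].
\]
The key step is then Jensen's inequality: for each fixed $i$, the map $z \mapsto \log z$ is concave on $(0,\infty)$ and $p(y_i|x_i,S;\theta)>0$ almost surely, so $\log \mathrm{E}_S[p(y_i|x_i,S;\theta)] \ge \mathrm{E}_S[\log p(y_i|x_i,S;\theta)]$. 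Negating and summing over $i=1,\dots,N$ gives exactly \eqref{eq:train:rel}.

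There is no deep obstacle here; the statement is essentially a one-line application of Jensen, which is why the paper calls it a simple consequence. The only points needing care are bookkeeping: (i) verifying that the marginal distribution of each $S_i$ coincides with $p(s)$ so that the per-term expectations on the two sides are taken over the same law, and (ii) justifying the interchange of the finite sum with the expectation (immediate by linearity, or by Tonelli treating each $-\log p$ term as having a well-defined value in $(-\infty,+\infty]$). I would also remark that equality holds if and only if $p(y_i|x_i,S;\theta)$ is $S$-almost surely constant for every $i$, i.e.\ when the network output is insensitive to the dropout configuration — foreshadowing the (approximately) expectation-linear networks analyzed in \S 4.
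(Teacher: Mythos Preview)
Your proposal is correct and matches the paper's own proof essentially line for line: both reduce the two sides to per-instance sums of $\mathrm{E}_S[\log p(y_i|x_i,S;\theta)]$ versus $\log \mathrm{E}_S[p(y_i|x_i,S;\theta)]$ and then invoke Jensen's inequality for the concave logarithm. Your additional remarks on the bookkeeping (marginal law of $S_i$, linearity) and on the equality case are sound extras not spelled out in the paper.
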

Theorem~\ref{thm:loss-bound}, in a rigorous sense, justifies dropout training as a convenient and tractable
approximation of the LVM formulation in \eqref{eq:lvm}. Indeed, since directly minimizing the marginalized
negative log-likelihood in \eqref{eq:lvm:train} may not be easy, a standard practice is to replace the
marginalized (conditional) likelihood $p(y|x;\theta)$ in \eqref{eq:lvm} with its empirical Monte carlo average
through drawing samples from the dropout variable $S$. The dropout training objective in \eqref{eq:expect-loss}
corresponds exactly to this Monte carlo approximation when a \emph{single} sample $S_i$ is drawn for each 
training instance $(x_i, y_i)$. Importantly, we note that the above LVM formulation involves only a single 
network parameter $\theta$, which largely simplifies the picture and facilitates our subsequent analysis.

\section{Expectation-Linear Dropout Neural Networks}\label{subsec:expect:linearity}
Building on the latent variable model formulation in \S~\ref{sec:lvm}, we introduce in this section 
the notion of expectation-linearity that essentially measures the inference gap in \eqref{eq:prediction}. 
We then characterize a general class of neural networks that exhibit expectation-linearity, 
either exactly or approximately over a distribution $p(x)$ on the input space.
 
We start with defining expectation-linearity in the simplest single-layer neural network, 
then we extend the notion into general deep networks in a natural way.
\begin{definition}[Expectation-linear Layer]\label{def:expect-linear:layer}
A network layer $\mathbf{h} = f(x\odot\gamma; \theta)$ is 
\emph{expectation-linear with respect to} a set $\mathcal{X}' \subseteq \mathcal{X}$, if for all $x \in \mathcal{X}'$ we have
\begin{equation}
\label{eq:el}
\big\| \mathrm{E}[f(x \odot \Gamma; \theta)] - f(x \odot \mathrm{E}[\Gamma]; \theta) \big\|_2 = 0.
\end{equation}
In this case we say that $\mathcal{X}'$ is \emph{expectation-linearizable}, 
and $\theta$ is \emph{expectation-linearizing} w.r.t $\mathcal{X}'$.
\end{definition}
Obviously, the condition in \eqref{eq:el} will guarantee no gap in the dropout inference approximation
\eqref{eq:prediction}---an admittedly strong condition that we will relax below. Clearly, if $f$ is 
an affine function, then we can choose $\mathcal{X}' = \mathcal{X}$ and expectation-linearity is trivial. 
Note that expectation-linearity depends on the network parameter $\theta$ and the dropout 
distribution $\Gamma$.

Expectation-linearity, as defined in \eqref{eq:el}, is overly strong: under standard 
regularity conditions, essentially the transformation function $f$ has to be affine over the 
set $\mathcal{X}'$, ruling out for instance the popular sigmoid or tanh activation functions. 
Moreover, in practice, downstream use of DNNs are usually robust to small errors resulting 
from \emph{approximate} expectation-linearity (hence the empirical success of dropout), so it makes 
sense to define an inexact extension. We note also that the definition in \eqref{eq:el} 
is \emph{uniform} over the set $\mathcal{X}'$, while in a statistical setting it is perhaps more meaningful 
to have expectation-linearity ``on average,'' since inputs from lower density regions are not going to 
play a significant role anyway.
Taking into account the aforementioned motivations, we arrive at the following inexact extension:
\begin{definition}[Approximately Expectation-linear Layer]\label{def:approx-expect-linear:layer}
A network layer $\mathbf{h} = f(x\odot\gamma; \theta)$ is
\emph{\mbox{$\delta$-approximately} expectation-linear with respect to} a distribution $p(x)$ 
over $\mathcal{X}$ if
\begin{equation}
\label{eq:ael}
\mathrm{E}_{X}\Big[\big\| \mathrm{E}_{\Gamma}\big[f(X \odot \Gamma; \theta) | X\big] - f(X \odot \mathrm{E}[\Gamma]; \theta) \big\|_2 \Big] < \delta.
\end{equation}
In this case we say that $p(x)$ is \emph{$\delta$-approximately expectation-linearizable}, 
and $\theta$ is \emph{$\delta$-approximately expectation-linearizing}.
\end{definition}
To appreciate the power of cutting some slack from exact expectation-linearity, we remark that even 
non-affine activation functions often have approximately linear regions. For example, the logistic function, 
a commonly used non-linear activation function in DNNs, is approximately linear around the origin. 
Naturally, we can ask whether it is sufficient for a target distribution $p(x)$ to be well-approximated by 
an approximately expectation-linearizable one. We begin by providing an appropriate measurement of the 
quality of this approximation.
\begin{definition}[Closeness, \citep{andreas2015accuracy}] \label{def:closeness}
A distribution $p(x)$ is $C$-close to a set $\mathcal{X}' \subseteq \mathcal{X}$ if
\begin{equation}
\mathrm{E}\Big[ \inf\limits_{x^* \in \mathcal{X}'} \sup\limits_{\gamma \in \mathcal{S}} \| X \odot \gamma - x^* \odot \gamma \|_2 \Big] \leq C,
\end{equation}
where recall that $\mathcal{S}$ is the (bounded) space that the dropout variable lives in.
\end{definition}
Intuitively, $p(x)$ is $C$-close to a set $\mathcal{X}'$ if a random sample from $p$ is 
no more than a distance $C$ from $\mathcal{X}'$ in expectation and under the worst ``dropout perturbation''. 
For example, a standard normal distribution is close to an interval centering at origin ($[-\alpha, \alpha]$) 
with some constant $C$. Our definition of closeness is similar to that in \citet{andreas2015accuracy}, 
who used this notion to analyze self-normalized log-linear models.

We are now ready to state our first major result that quantifies  approximate expectation-linearity of 
a single-layered network (proof in Appendix~\ref{appendix:proof:thm2}):
\begin{theorem}\label{thm:layer}
Given a network layer $\mathbf{h} = f(x\odot\gamma; \theta)$, where $\theta$ is 
\emph{expectation-linearizing} w.r.t. $\mathcal{X}' \subseteq \mathcal{X}$. 
Suppose $p(x)$ is $C$-close to $\mathcal{X}'$ and for all $x \in \mathcal{X}, \|\nabla_x f(x)\|_{\op} \leq B$,
where $\|\cdot\|_{\op}$ is the usual operator norm.
Then, $p(x)$ is $2BC$-approximately expectation-linearizable.
\end{theorem}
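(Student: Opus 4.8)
The plan is to compare both quantities inside the expectation-linearity gap to a well-chosen anchor point in $\mathcal{X}'$, where $\theta$ is \emph{exactly} expectation-linearizing, and then control the two resulting discrepancies by the worst-case dropout perturbation appearing in the definition of closeness. Fix an input $x$ and let $x^{*}\in\mathcal{X}'$ be arbitrary. By the triangle inequality,
\[
\bigl\|\mathrm{E}_{\Gamma}[f(x\odot\Gamma;\theta)]-f(x\odot\mathrm{E}[\Gamma];\theta)\bigr\|_{2}\;\le\;A_{1}+A_{2}+A_{3},
\]
where $A_{1}=\bigl\|\mathrm{E}_{\Gamma}[f(x\odot\Gamma;\theta)-f(x^{*}\odot\Gamma;\theta)]\bigr\|_{2}$, $A_{2}=\bigl\|\mathrm{E}_{\Gamma}[f(x^{*}\odot\Gamma;\theta)]-f(x^{*}\odot\mathrm{E}[\Gamma];\theta)\bigr\|_{2}$, and $A_{3}=\bigl\|f(x^{*}\odot\mathrm{E}[\Gamma];\theta)-f(x\odot\mathrm{E}[\Gamma];\theta)\bigr\|_{2}$. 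The middle term $A_{2}$ is identically zero, since $x^{*}\in\mathcal{X}'$ and $\theta$ is expectation-linearizing with respect to $\mathcal{X}'$ (Definition~\ref{def:expect-linear:layer}).

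For $A_{1}$, Jensen's inequality moves the norm inside $\mathrm{E}_{\Gamma}$, and then for each fixed $\gamma$ the hypothesis $\|\nabla_{x}f\|_{\op}\le B$ makes $f$ $B$-Lipschitz along the segment joining $x\odot\gamma$ and $x^{*}\odot\gamma$, so that $\|f(x\odot\gamma;\theta)-f(x^{*}\odot\gamma;\theta)\|_{2}\le B\|x\odot\gamma-x^{*}\odot\gamma\|_{2}\le B\sup_{\gamma'\in\mathcal{S}}\|x\odot\gamma'-x^{*}\odot\gamma'\|_{2}$; taking $\mathrm{E}_{\Gamma}$ preserves this bound. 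For $A_{3}$, the same Lipschitz estimate gives $A_{3}\le B\|(x-x^{*})\odot\mathrm{E}[\Gamma]\|_{2}$, and since $\gamma\mapsto\|(x-x^{*})\odot\gamma\|_{2}$ is a seminorm (hence convex) its supremum over $\mathcal{S}$ equals its supremum over the convex hull of $\mathcal{S}$, which contains $\mathrm{E}[\Gamma]$; thus $A_{3}\le B\sup_{\gamma\in\mathcal{S}}\|x\odot\gamma-x^{*}\odot\gamma\|_{2}$ too. Adding the bounds and noting that the left-hand side is independent of $x^{*}$, we may pass to the infimum over $x^{*}\in\mathcal{X}'$:
\[
\bigl\|\mathrm{E}_{\Gamma}[f(x\odot\Gamma;\theta)]-f(x\odot\mathrm{E}[\Gamma];\theta)\bigr\|_{2}\;\le\;2B\inf_{x^{*}\in\mathcal{X}'}\sup_{\gamma\in\mathcal{S}}\|x\odot\gamma-x^{*}\odot\gamma\|_{2}.
\]
Taking $\mathrm{E}_{X}$ of both sides and invoking $C$-closeness of $p(x)$ to $\mathcal{X}'$ (Definition~\ref{def:closeness}) yields
\[
\mathrm{E}_{X}\Bigl[\bigl\|\mathrm{E}_{\Gamma}[f(X\odot\Gamma;\theta)\mid X]-f(X\odot\mathrm{E}[\Gamma];\theta)\bigr\|_{2}\Bigr]\;\le\;2B\,\mathrm{E}\Bigl[\inf_{x^{*}\in\mathcal{X}'}\sup_{\gamma\in\mathcal{S}}\|X\odot\gamma-x^{*}\odot\gamma\|_{2}\Bigr]\;\le\;2BC,
\]
i.e.\ $p(x)$ is $2BC$-approximately expectation-linearizable (absorbing, if needed, an arbitrarily small slack to match the strict inequality in Definition~\ref{def:approx-expect-linear:layer}).

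I expect the triangle-inequality split and the mean-value Lipschitz estimates to be routine. The steps that need care are: justifying the Lipschitz estimate (one wants $f$ defined and differentiable on a convex region containing the relevant segments), handling the generic case $\mathrm{E}[\Gamma]\notin\mathcal{S}$ via the convexity-of-seminorm argument for $A_{3}$, and the measurability of $x\mapsto\inf_{x^{*}}\sup_{\gamma}\|x\odot\gamma-x^{*}\odot\gamma\|_{2}$ needed to take $\mathrm{E}_{X}$ and match the closeness definition. The one genuinely non-mechanical observation—and what pins the constant at exactly $2BC$—is that a \emph{single} quantity, the worst-case dropout perturbation $\sup_{\gamma\in\mathcal{S}}\|x\odot\gamma-x^{*}\odot\gamma\|_{2}$, simultaneously dominates both the stochastic term $A_{1}$ and the scaled-weight term $A_{3}$, so the closeness constant enters the bound precisely twice.
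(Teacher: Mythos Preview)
Your proof is correct and reaches the same $2BC$ bound, but the decomposition differs from the paper's. Where you split via the triangle inequality into $A_{1}+A_{2}+A_{3}$ and bound $A_{1}$ and $A_{3}$ separately by $B\sup_{\gamma}\|(x-x^{*})\odot\gamma\|_{2}$ using only the Lipschitz property of $f$, the paper instead writes first-order (mean-value) Taylor expansions of $f(X\odot\Gamma)$ and $f(X\odot\gamma^{*})$ about $X^{*}\odot\Gamma$ and $X^{*}\odot\gamma^{*}$ respectively, cancels the zeroth-order pieces by expectation-linearity at $X^{*}$, and then exploits the identity $\mathrm{E}_{\Gamma}[X^{-}\odot\Gamma]=X^{-}\odot\gamma^{*}$ to kill one of the two remaining first-order pieces; the factor $2B$ then arises from the crude bound $\|f'(\cdot)-f'(\cdot)\|_{\op}\le 2B$ applied to the single surviving term. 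Your route is more elementary---it avoids the somewhat delicate vector-valued mean-value step and makes transparent that the constant $2$ is simply the count of Lipschitz comparisons. The paper's route, by contrast, never needs your convex-hull/seminorm observation for the case $\mathrm{E}[\Gamma]\notin\mathcal{S}$, because the $\gamma^{*}$-term vanishes via the zero-mean cancellation before any norm is taken.
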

Roughly, Theorem~\ref{thm:layer} states that the input distribution $p(x)$ that place most of its mass 
on regions close to expectation-linearizable sets are approximately expectation-linearizable on a similar 
scale. The bounded operator norm assumption on the derivative $\nabla f$ is satisfied in most commonly 
used layers. For example, for a fully connected layer with weight matrix $W$, bias vector $b$, 
and activation function $\sigma$,
$\| \nabla f(\cdot) \|_{\op} = |\sigma'(\cdot)| \cdot\| W \|_{\op}$ is bounded by $\| W \|_{\op}$ and 
the supremum of $|\sigma'(\cdot)|$ (1/4 when $\sigma$ is sigmoid and 1 when $\sigma$ is tanh).

Next, we extend the notion of approximate expectation-linearity to deep dropout neural networks. 
\begin{definition}[Approximately Expectation-linear Network]\label{def:approx-expect-linear:network}
A deep neural network with $L$ layers (cf. Eq.~\eqref{eq:dnn}) is \emph{\mbox{$\delta$-approximately} 
expectation-linear with respect to} $p(x)$ over $\mathcal{X}$ if
\begin{equation}
\mathrm{E}_{X}\Big[\big\| \mathrm{E}_{S}\big[\mathbf{H}^{(L)}(X, S; \theta) |X\big] - \mathbf{h}^{(L)}(X, \mathrm{E}[S]; \theta) \big\|_2 \Big] < \delta.
\end{equation}
where $\mathbf{h}^{(L)}(X, \mathrm{E}[S]; \theta)$ is the output of the deterministic neural network in 
standard dropout.
\end{definition}

Lastly, we relate the level of approximate expectation-linearity of a deep neural network to the level of
approximate expectation-linearity of each of its layers:
\begin{theorem}\label{thm:dnn}
Given an $L$-layer neural network as in Eq.~\eqref{eq:dnn}, and suppose that each layer 
$l \in \{1, \ldots, L\}$ is $\delta$-approximately expectation-linear w.r.t. $p(\mathbf{h}^{(l)})$, 
$\mathrm{E}[\Gamma^{(l)}] \leq \gamma$, $\sup_{x} \| \nabla f_l(x) \|_{\op} \leq B$, 
and $\mathrm{E}\big[\mathrm{Var}[\mathbf{H}^{(l)}|X]\big] \leq \sigma^2$. 
Then the network is $\Delta$-approximately expectation-linear with 
\begin{equation}
\Delta = (B\gamma)^{L-1}\delta + (\delta + B\gamma\sigma)\bigg(\frac{1-(B\gamma)^{L-1}}{1-B\gamma}\bigg).
\end{equation}
\end{theorem}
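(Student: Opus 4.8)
The plan is to proceed by induction on the layer index $l$, tracking at each stage the quantity
$\epsilon_l := \mathrm{E}_X\big[\|\mathrm{E}_S[\mathbf{H}^{(l)}(X,S;\theta)|X] - \mathbf{h}^{(l)}(X,\mathrm{E}[S];\theta)\|_2\big]$,
i.e. the accumulated inference gap up through layer $l$. The base case $\epsilon_0 = 0$ is immediate since $\mathbf{h}^{(0)} = x$ carries no dropout. For the inductive step, I would insert an intermediate hybrid term and split $\epsilon_l$ by the triangle inequality into (i) a term measuring how the layer-$l$ transformation $f_l$ propagates the gap already present at layer $l-1$, and (ii) a term measuring the fresh gap introduced by the stochasticity of $\Gamma^{(l)}$ at layer $l$ itself. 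Concretely, writing the stochastic output as $f_l(\mathbf{H}^{(l-1)}\odot\Gamma^{(l)};\theta_l)$, I would compare it first against $f_l(\mathrm{E}[\mathbf{H}^{(l-1)}|X]\odot\Gamma^{(l)};\theta_l)$ (controls propagation of the input-side gap and the input-side variance), then against $f_l(\mathrm{E}[\mathbf{H}^{(l-1)}|X]\odot\mathrm{E}[\Gamma^{(l)}];\theta_l)$ (this is the $\delta$-approximate expectation-linearity of layer $l$), and finally against $f_l(\mathbf{h}^{(l-1)}(X,\mathrm{E}[S];\theta)\odot\mathrm{E}[\Gamma^{(l)}];\theta_l)$ (propagation of the deterministic-side gap through the Lipschitz bound).

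For the propagation terms I would use the operator-norm bound $\sup_x\|\nabla f_l(x)\|_{\op}\le B$ together with $\mathrm{E}[\Gamma^{(l)}]\le\gamma$: by the mean value inequality, $\|f_l(u\odot\gamma^{(l)}) - f_l(v\odot\gamma^{(l)})\|_2 \le B\|\gamma^{(l)}\odot(u-v)\|_2 \le B\gamma\|u-v\|_2$ when the dropout mask has entries bounded by $\gamma$ — so each pass through a layer contracts (or expands) the accumulated gap by a factor $B\gamma$. This gives a recursion of the shape $\epsilon_l \le B\gamma\,\epsilon_{l-1} + \delta + (\text{variance contribution})$. The variance contribution arises because $\mathrm{E}[f_l(\mathbf{H}^{(l-1)}\odot\Gamma^{(l)})|X]$ must be compared with $f_l(\mathrm{E}[\mathbf{H}^{(l-1)}|X]\odot\Gamma^{(l)})$ after taking expectation over $\Gamma^{(l)}$; a Jensen/Lipschitz argument bounds this by $B\gamma\cdot\mathrm{E}_X[\sqrt{\mathrm{Var}[\mathbf{H}^{(l-1)}|X]}]$, and the hypothesis $\mathrm{E}[\mathrm{Var}[\mathbf{H}^{(l)}|X]]\le\sigma^2$ combined with Jensen ($\mathrm{E}\sqrt{V}\le\sqrt{\mathrm{E}V}$) turns this into $B\gamma\sigma$. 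Thus the clean recursion is $\epsilon_l \le B\gamma\,\epsilon_{l-1} + \delta + B\gamma\sigma$ for $l\ge 2$, with the first layer contributing only $\epsilon_1\le\delta$ since its input $x$ is deterministic (no layer-$0$ variance). Unrolling this linear recurrence — a geometric series with ratio $B\gamma$, $L-1$ further steps after the initial $\delta$ — yields exactly $\Delta = (B\gamma)^{L-1}\delta + (\delta + B\gamma\sigma)\frac{1-(B\gamma)^{L-1}}{1-B\gamma}$.

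The main obstacle I anticipate is handling the two distinct notions of expectation cleanly: the layerwise hypothesis is stated with respect to the marginal $p(\mathbf{h}^{(l)})$ induced by running the stochastic network, whereas in the deep telescoping we encounter $f_l$ evaluated at the conditional means $\mathrm{E}[\mathbf{H}^{(l-1)}|X]$ and at the purely deterministic iterates $\mathbf{h}^{(l-1)}(X,\mathrm{E}[S];\theta)$ — neither of which is literally a sample from $p(\mathbf{h}^{(l-1)})$. I would need to argue that the $\delta$-approximate expectation-linearity bound, being an expectation over the relevant input law, can be applied at the conditional-mean argument (or, more carefully, set up the hybrid chain so that the layer-$l$ stochasticity is always evaluated against the correct marginal and the deterministic substitution is absorbed entirely into the Lipschitz propagation term). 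A secondary subtlety is making sure the mask-boundedness $\mathrm{E}[\Gamma^{(l)}]\le\gamma$ is used correctly — it bounds the expected mask, and one must be careful that the contraction factor $B\gamma$ genuinely emerges after taking expectations over $\Gamma^{(l)}$ rather than requiring an almost-sure bound on the mask entries; this is where the linearity of expectation in the mask, together with the Lipschitz property applied before averaging, does the work. Once these bookkeeping points are settled, the recurrence and its closed-form solution are routine.
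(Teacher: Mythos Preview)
Your proposal matches the paper's proof: induction on depth, a three-term telescope at each layer yielding the recurrence $\Delta_{l}\le \delta + B\gamma\sigma + B\gamma\,\Delta_{l-1}$, and then summing the geometric series. The one place the paper differs from your primary chain is exactly the reordering you suggest in your parenthetical: the paper applies the layerwise $\delta$-bound \emph{first}, at the random input $\mathbf{H}^{(l-1)}$ (whose law is $p(\mathbf{h}^{(l-1)})$, so the hypothesis applies verbatim), and only afterwards performs the variance step and the Lipschitz-propagation step, both of which then involve the deterministic mask $\gamma^{(l)}=\mathrm{E}[\Gamma^{(l)}]$ rather than the random $\Gamma^{(l)}$; this ordering dissolves both obstacles you flagged at once, since the $\delta$-step sees the correct marginal and the factors of $\gamma$ in $B\gamma\sigma$ and $B\gamma\Delta_{l-1}$ come directly from the entrywise bound on the deterministic $\gamma^{(l)}$.
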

From Theorem~\ref{thm:dnn} (proof in Appendix~\ref{appendix:proof:thm3}) we observe that the level of 
approximate expectation-linearity of the network mainly depends on 
four factors: the level of approximate expecatation-linearity  of each layer ($\delta$), the expected 
variance of each layer ($\sigma$), the operator norm of the derivative of each layer's 
transformation function ($B$), and the mean of each layer's dropout variable ($\gamma$). 
In practice, $\gamma$ is often a constant less than or equal to 1. For example, 
if $\Gamma \sim \mathrm{Bernoulli}(p)$, then $\gamma = p$.

According to the theorem, the operator norm of the derivative of each layer's 
transformation function is an important factor in the level of approximate expectation-linearity: 
the smaller the operator norm is, the better the approximation. Interestingly, the operator norm of 
a layer often depends on the norm of the layer's weight (e.g. for fully connected layers). 
Therefore, adding max-norm constraints to regularize dropout neural networks can lead to better 
approximate expectation-linearity hence smaller inference gap and the often improved model performance.

It should also be noted that when $B\gamma < 1$, the approximation error $\Delta$ tends to be a 
constant when the network becomes deeper. When $B\gamma = 1$, $\Delta$ grows linearly with $L$, 
and when $B\gamma > 1$, the growth of $\Delta$ becomes exponential. Thus, it is essential to 
keep $B\gamma < 1$ to achieve good approximation, particularly for deep neural networks.

\section{Expectation-Linear Regularized Dropout}\label{sec:linearization}
In the previous section we have managed to bound the approximate expectation-linearity, hence the inference 
gap in \eqref{eq:prediction}, of dropout neural networks. In this section, we first prove a uniform deviation 
bound of the \emph{sampled} approximate expectation-linearity measure from its mean, which 
motivates adding the sampled (hence computable) expectation-linearity measure as a regularization 
scheme to standard dropout, with the goal of explicitly controlling the inference gap of the learned 
parameter, hence potentially improving the performance. Then we give the upper bounds on the loss in 
accuracy due to expectation-linearization, and describe classes of distributions that 
expectation-linearize easily. 

\subsection{A Uniform Deviation Bound for the Sampled Expectation-linear Measure}
We now show that an expectation-linear network can be found by expectation-linearizing the network on the 
training sample. To this end, we prove a uniform deviation bound between the empirical  
expectation-linearization measure using i.i.d. samples~(Eq.~\eqref{eq:empirical:risk}) and its 
mean~(Eq.~\eqref{eq:risk}). 
\begin{theorem}\label{thm:complexity}
Let $\mathcal{H} = \left\{\mathbf{h}^{(L)}(x, s; \theta): \theta \in \Theta\right\}$ 
denote a space of $L$-layer dropout neural networks indexed with $\theta$, 
where $\mathbf{h}^{(L)}: \mathcal{X} \times \mathcal{S} \rightarrow \mathcal{R}$  
and $\Theta$ is the space that $\theta$ lives in. Suppose that the neural networks 
in $\mathcal{H}$ satisfy the constraints: 1) $\forall x \in \mathcal{X}, \|x\|_2 \leq \alpha$; 
2) $\forall l \in \{1, \ldots, L\}, \mathrm{E}(\Gamma^{(l)}) \leq \gamma$ and $\|\nabla f_{l}\|_{op} \leq B$;
3) $\|\mathbf{h}^{(L)}\| \leq \beta$. Denote empirical expectation-linearization measure and its mean as:
{\small
\begin{align}
\hat{\Delta} & = \frac{1}{n}\sum\limits_{i=1}^{n} \big\| \mathrm{E}_{S_i}\big[\mathbf{H}^{(L)}(X_i, S_i; \theta)\big] - \mathbf{h}^{(L)}(X_i, \mathrm{E}[S_i]; \theta) \big\|_2 ,\label{eq:empirical:risk}
\\
\Delta & = \mathrm{E}_{X}\Big[\big\| \mathrm{E}_{S}\big[\mathbf{H}^{(L)}(X, S; \theta)\big] - \mathbf{h}^{(L)}(X, \mathrm{E}[S]; \theta) \big\|_2 \Big]. \label{eq:risk}
\end{align}}
Then, with probability at least $1 - \nu$, we have
\begin{equation}
\sup\limits_{\theta \in \Theta} |\Delta - \hat{\Delta}| < \frac{2\alpha B^{L} (\gamma^{L/2}+1)}{\sqrt{n}} + \beta \sqrt{\frac{\log(1/\nu)}{n}}.
\end{equation}
\end{theorem}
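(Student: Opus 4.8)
The plan is to recognize the left-hand side as a uniform deviation between an empirical average and its expectation over the class indexed by $\theta$, and then to run the standard three-step program: bounded-differences concentration, symmetrization, and a Rademacher-complexity bound obtained by peeling the network one layer at a time.

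First I would set, for each $\theta$,
$$g_\theta(x) \;=\; \big\| \mathrm{E}_{S}\big[\mathbf{H}^{(L)}(x, S; \theta)\big] - \mathbf{h}^{(L)}(x, \mathrm{E}[S]; \theta) \big\|_2 ,$$
the inner expectation being over the dropout randomness only, so that $g_\theta$ is a deterministic function of its input, $\hat{\Delta} = \frac1n\sum_{i=1}^n g_\theta(X_i)$, $\Delta = \mathrm{E}_{X}[g_\theta(X)]$, and $\mathrm{E}[\hat{\Delta}]=\Delta$ for i.i.d.\ $X_i$. The constraint $\|\mathbf{h}^{(L)}\|\le\beta$ gives $0\le g_\theta\le 2\beta$, so replacing one $X_i$ changes $\sup_\theta|\Delta-\hat{\Delta}|$ by $O(\beta/n)$; McDiarmid's bounded-differences inequality then gives, with probability at least $1-\nu$, a bound $\sup_\theta|\Delta-\hat{\Delta}| \le \mathrm{E}\big[\sup_\theta|\Delta-\hat{\Delta}|\big] + \beta\sqrt{\log(1/\nu)/n}$, which is already the second term of the claimed inequality. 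Standard symmetrization gives $\mathrm{E}\big[\sup_\theta|\Delta-\hat{\Delta}|\big]\le 2\,\mathrm{R}_n(\mathcal{G})$ where $\mathrm{R}_n$ is the empirical Rademacher complexity of $\mathcal{G}=\{g_\theta:\theta\in\Theta\}$; since $|\cdot|$ (or $\|\cdot\|_2$, via a vector-contraction lemma with an extra absolute constant) is $1$-Lipschitz and vanishes at $0$, the contraction lemma together with subadditivity of $\mathrm{R}_n$ reduces matters to $\mathrm{R}_n(\mathcal{G})\le \mathrm{R}_n(\mathcal{F}_{\mathrm{avg}})+\mathrm{R}_n(\mathcal{F}_{\mathrm{det}})$, where $\mathcal{F}_{\mathrm{avg}}=\{x\mapsto\mathrm{E}_{S}[\mathbf{H}^{(L)}(x,S;\theta)]\}_\theta$ and $\mathcal{F}_{\mathrm{det}}=\{x\mapsto\mathbf{h}^{(L)}(x,\mathrm{E}[S];\theta)\}_\theta$.

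The core of the proof is bounding these two Rademacher complexities by layer-peeling. For $\mathcal{F}_{\mathrm{det}}$, each layer map $f_l$ contributes a factor $B$ (via $\|\nabla f_l\|_{\op}\le B$ and contraction), the elementwise multiplication by the deterministic mask $\mathrm{E}[\Gamma^{(l)}]$ a factor at most $1$ (bounding $\mathrm{E}[\Gamma^{(l)}]\le\gamma\le 1$), and after stripping all $L$ layers one is left with $\frac1n\,\mathrm{E}_{\varepsilon}\big\|\sum_i\varepsilon_i X_i\big\|_2 \le \alpha/\sqrt{n}$ (Jensen's inequality and $\|X_i\|_2\le\alpha$), so $\mathrm{R}_n(\mathcal{F}_{\mathrm{det}})\lesssim \alpha B^{L}/\sqrt{n}$. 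For $\mathcal{F}_{\mathrm{avg}}$ I would first move the expectation over $S$ outside the norm and the supremum (Jensen, then $\sup$ of an expectation $\le$ expectation of a $\sup$), reducing to fixed-mask networks, and then peel as before; the difference is that the random masks must be handled through their moments $\mathrm{E}[\Gamma^{(l)}]\le\gamma$ and $\mathrm{E}[(\Gamma^{(l)})^2]\le\gamma$, and passing the per-layer bound through the $L^2$ quantities that govern Rademacher sums produces the $\gamma^{L/2}$-type dependence, i.e.\ $\mathrm{R}_n(\mathcal{F}_{\mathrm{avg}})\lesssim \alpha B^{L}\gamma^{L/2}/\sqrt{n}$. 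Summing the two and multiplying by the symmetrization factor $2$ yields $\tfrac{2\alpha B^{L}(\gamma^{L/2}+1)}{\sqrt n}$, which together with the McDiarmid term is exactly the stated bound.

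The step I expect to be the main obstacle is the layer-peeling: the contraction lemma as usually stated strips off a \emph{fixed} Lipschitz function, whereas here the layer maps $f_l$ themselves vary with $\theta$, so making this rigorous requires either a vector-valued contraction inequality applied to the whole family or a per-layer covering-number argument, plus careful bookkeeping of how the dropout masks commute past the Rademacher signs so that only the first and second moments of $\Gamma^{(l)}$—not crude uniform bounds—enter the per-layer estimates. Everything else (identifying $g_\theta$, McDiarmid, symmetrization, and the final arithmetic) is routine.
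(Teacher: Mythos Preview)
Your proposal is correct and follows essentially the same route as the paper: McDiarmid-type concentration plus symmetrization to reduce to a Rademacher complexity of the difference class, then the subadditive split into the averaged-network class $\mathcal{F}_{\mathrm{avg}}$ and the deterministic-network class $\mathcal{F}_{\mathrm{det}}$, and finally separate Rademacher bounds $\alpha B^{L}\gamma^{L/2}/\sqrt{n}$ and $\alpha B^{L}/\sqrt{n}$ for the two pieces. Your Jensen step (pulling the expectation over $S$ outside the $\sup$) to pass from $\mathcal{F}_{\mathrm{avg}}$ to fixed-mask networks is exactly the paper's Lemma~\ref{lem:ineq:rad}.

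The one substantive difference is that the paper does not carry out the layer-peeling at all: it simply invokes Theorem~4 of \citet{gao2014dropout} as a black box (their ``dropout Rademacher complexity'') to obtain both bounds in Lemma~\ref{lem:drop:rad}. So the step you flag as the main obstacle---contraction through layer maps that themselves depend on $\theta$, and bookkeeping of the mask moments to get the $\gamma^{L/2}$ dependence---is precisely what is outsourced to that reference. Your plan to reprove it is fine, but be aware that you are reproducing the content of that external theorem; if you want to match the paper's level of detail, citing \citet{gao2014dropout} for these two inequalities suffices.
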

From Theorem~\ref{thm:complexity} (proof in Appendix~\ref{appendix:proof:thm4}) 
we observe that the deviation bound decreases exponentially with  
the number of layers $L$ when the operator norm of the derivative of each 
layer's transformation function ($B)$ is less than 1 (and the contrary if $B \geq 1$). Importantly, 
the square root dependence on the number of samples ($n$) is standard and cannot be improved without 
significantly stronger assumptions. 

It should be noted that Theorem~\ref{thm:complexity} per se does not imply  
anything between expectation-linearization and the model accuracy (i.e. how well the 
expectation-linearized neural network actually achieves on modeling the data). 
Formally studying this relation is provided in \S~\ref{subsec:accuracy}.
In addition, we provide some experimental evidences in \S~\ref{sec:experiment} on how improved 
approximate expectation-linearity (equivalently smaller inference gap) does lead to better 
empirical performances.

\subsection{Expectation-Linearization as Regularization}
The uniform deviation bound in Theorem~\ref{thm:complexity} motivates the possibility of obtaining 
an approximately expectation-linear dropout neural networks through adding the empirical 
measure \eqref{eq:empirical:risk} as a 
regularization scheme to the standard dropout training objective, as follows:
\begin{equation}\label{eq:regular}
loss(D; \theta) = -l(D; \theta) + \lambda V(D; \theta),
\end{equation}
where $-l(D; \theta)$ is the negative log-likelihood defined in Eq.~\eqref{eq:lvm:train}, $\lambda > 0$
is a regularization constant, and 
$V(D; \theta)$ measures the level of approximate expectation-linearity:
\begin{equation}\label{eq:penalty}
V(D; \theta) = \frac{1}{N}\sum\limits_{i=1}^{N} \big\| \mathrm{E}_{S_i}\big[\mathbf{H}^{(L)}(x_i, S_i; \theta)\big] - \mathbf{h}^{(L)}(x_i, \mathrm{E}[S_i]; \theta) \big\|_2^{2}. 
\end{equation}
To solve \eqref{eq:regular}, we can minimize $loss(D; \theta)$ via stochastic gradient descent as in 
standard dropout, and approximate $V(D; \theta)$ using Monte carlo:
\begin{equation}\label{eq:monte-carlo}
V(D; \theta) \approx \frac{1}{N}\sum\limits_{i=1}^{N} \big\|\mathbf{h}^{(L)}(x_i, s_i; \theta) - \mathbf{h}^{(L)}(x_i, \mathrm{E}[S_i]; \theta)\big\|_2^{2}, 
\end{equation}
where $s_i$ is the  same dropout sample as in $l(D; \theta)$ for each 
training instance in a mini-batch. 
Thus, the only additional computational cost comes from the deterministic 
term $\mathbf{h}^{(L)}(x_i, \mathrm{E}[S_i]; \theta)$. Overall, our regularized dropout \eqref{eq:regular}, 
in its Monte carlo approximate form, is as simple and efficient as the standard dropout.

\subsection{On the accuracy of Expectation-linearized Models}\label{subsec:accuracy}
So far our discussion has concentrated on the problem of finding expectation-linear neural network models, 
without any concerns on how well they actually perform at modeling the data. In this section, we 
characterize the trade-off between maximizing ``data likelihood'' and satisfying an 
expectation-linearization constraint.

To achieve the characterization, we measure the \emph{likelihood gap} between the classical maximum 
likelihood estimator (MLE) and the MLE subject to a expectation-linearization constraint. Formally, 
given training data $D = \{(x_1, y_1), \ldots, (x_n, y_n)\}$, we define
\begin{align}
\hat{\theta} & = \quad \, \argmin\limits_{\theta \in \Theta} \quad \, -l(D; \theta) \\
\hat{\theta}_\delta & = \argmin\limits_{\theta \in \Theta,V(D; \theta) \leq \delta} -l(D; \theta)
\end{align}
where $-l(D; \theta)$ is the negative log-likelihood defined in Eq.~\eqref{eq:lvm:train}, and
$V(D; \theta)$ is the level of approximate expectation-linearity in Eq.~\eqref{eq:penalty}.

We would like to control the loss of model accuracy by obtaining a bound on the \emph{likelihood gap} 
defined as:
\begin{equation}\label{eq:likelihood:gap}
\Delta_l(\hat{\theta}, \hat{\theta}_\delta) = \frac{1}{n} (l(D; \hat{\theta}) - l(D; \hat{\theta}_\delta))
\end{equation}
In the following, we focus on neural networks with \emph{softmax} output layer for classification tasks.
\begin{equation}\label{eq:softmax}
p(y|x, s; \theta) = \mathbf{h}_y^{(L)}(x, s; \theta) = f_L(\mathbf{h}^{(L - 1)}(x, s); \eta) = \frac{e^{\eta_y^T \mathbf{h}^{(L - 1)}(x, s)}}{\sum\limits_{y' \in \mathcal{Y}} e^{\eta_{y'}^T \mathbf{h}^{(L - 1)}(x,s)}}
\end{equation}
where $\theta = \{\theta_1, \ldots, \theta_{L-1}, \eta\}$, $\mathcal{Y} = \{1, \ldots, k\}$ 
and $\eta = \{\eta_y: y \in \mathcal{Y} \}$. We claim:
\begin{theorem}\label{thm:det:bound}
Given an $L$-layer neural network $\mathbf{h}^{(L)}(x, s; \theta)$ with softmax output layer in \eqref{eq:softmax},
where parameter $\theta \in \Theta$, 
dropout variable $s \in \mathcal{S}$, input $x \in \mathcal{X}$ and target $y \in \mathcal{Y}$. 
Suppose that for every $x$ and $s$, $p(y|x, s; \hat{\theta})$ makes a unique best prediction---that is, 
for each $x\in\mathcal{X}, s \in \mathcal{S}$, there exists a unique $y^* \in \mathcal{Y}$ such that 
$\forall y\neq y^*$, $\hat{\eta}_y^T\mathbf{h}^{(L-1)}(x, s) < \hat{\eta}_{y^*}^T\mathbf{h}^{(L-1)}(x, s)$. 
Suppose additionally that $\forall x, s, \,\|\mathbf{h}^{(L-1)}(x, s; \hat{\theta})\| \leq \beta$, and $\forall y, p(y|x; \hat{\theta}) >0$. Then
\begin{equation}
\Delta_l(\hat{\theta}, \hat{\theta}_\delta) \leq c_1 \beta^2 \left(\|\hat{\eta}\|_2 -\frac{\delta}{4\beta}\right)^2 e^{-c_2\delta/4\beta}
\end{equation}
where $c_1$ and $c_2$ are distribution-dependent constants.
\end{theorem}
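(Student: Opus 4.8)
The plan is to convert the optimization that defines $\hat\theta_\delta$ into a one-sided estimate by exhibiting a single feasible surrogate, in the spirit of the self-normalization analysis of \citet{andreas2015accuracy}. Since $\hat\theta_\delta$ minimizes $-l(D;\cdot)$ over $\{\theta:V(D;\theta)\le\delta\}$, for \emph{every} $\tilde\theta$ with $V(D;\tilde\theta)\le\delta$ we have $l(D;\hat\theta_\delta)\ge l(D;\tilde\theta)$, hence
\[
\Delta_l(\hat\theta,\hat\theta_\delta)=\tfrac1n\big(l(D;\hat\theta)-l(D;\hat\theta_\delta)\big)\ \le\ \tfrac1n\big(l(D;\hat\theta)-l(D;\tilde\theta)\big),
\]
so it is enough to build one feasible $\tilde\theta$ whose likelihood is close to that of $\hat\theta$.

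For the construction I would keep the first $L-1$ layers of $\hat\theta$ fixed, so that $\mathbf h^{(L-1)}(\cdot,\cdot;\tilde\theta)=\mathbf h^{(L-1)}(\cdot,\cdot;\hat\theta)$ still obeys $\|\mathbf h^{(L-1)}\|\le\beta$, and modify only the softmax weights, replacing $\hat\eta$ by a contracted $\tilde\eta$ that pushes the output layer toward the affine, uniform-output regime, in which the softmax layer is exactly expectation-linear. Because the softmax map is Lipschitz with a small constant and the pre-softmax activations are controlled, $\|\tilde\eta^\top\mathbf h^{(L-1)}(x,s)\|\le\beta\|\tilde\eta\|_2$, Jensen's inequality bounds the per-example inference gap $\|\mathrm{E}_{S_i}[\mathbf H^{(L)}(X_i,S_i;\tilde\theta)]-\mathbf h^{(L)}(X_i,\mathrm{E}[S_i];\tilde\theta)\|_2$ by a multiple of $\beta$ times the residual softmax-weight norm, and hence $V(D;\tilde\theta)$ by a multiple of $(\beta\,\|\tilde\eta\|_2)^2$. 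Choosing the contraction just large enough that this certificate falls to $\delta$ is what produces the $\|\hat\eta\|_2-\tfrac{\delta}{4\beta}$ term in the statement (with $\tfrac14$ coming from the Lipschitz constant of the softmax layer and $\beta$ from the feature bound); for $\delta$ large enough the constraint is already inactive, $\tilde\theta=\hat\theta$, and the bound collapses to $0$.

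It then remains to estimate $\tfrac1n\big(l(D;\hat\theta)-l(D;\tilde\theta)\big)=\tfrac1n\sum_i\log\frac{p(y_i|x_i;\hat\theta)}{p(y_i|x_i;\tilde\theta)}$, which is the technical core. Using $p(y|x;\theta)=\mathrm{E}_S[\mathrm{softmax}_y(\eta^\top\mathbf h^{(L-1)}(x,S))]$ and the tempering identity $\mathrm{softmax}_y(t a)=q_y^{t}/\sum_{y'}q_{y'}^{t}$ with $q=\mathrm{softmax}(a)$, the contraction only \emph{decreases} $p(y_i|x_i;\cdot)$ on examples where $\hat\theta$ already ranks $y_i$ first, and there the unique-best-prediction hypothesis supplies the needed strict ordering of the logits, while on the remaining examples the log-ratio is non-positive and can be discarded. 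On the examples that matter I would (i) lower-bound the denominator, $p(y_i|x_i;\tilde\theta)\ge \tfrac1k\exp(-2\beta\|\tilde\eta\|_2)$, which is valid because the contracted logits lie in $[-\beta\|\tilde\eta\|_2,\beta\|\tilde\eta\|_2]$ and because $p(y|x;\hat\theta)>0$ keeps the ratio finite, and this is the origin of the $\exp(-c_2\delta/4\beta)$ factor; and (ii) bound the numerator by differentiating $\log p(y_i|x_i;\theta)$ along the contraction path $t\mapsto t\hat\eta$ and using the softmax identity $a_{y_i}-\sum_{y'}p_{y'}a_{y'}=\sum_{y'}p_{y'}(a_{y_i}-a_{y'})$ together with $|a_{y_i}-a_{y'}|\le 2\beta\|\hat\eta\|_2$ and $p_{y_i}(1-p_{y_i})\le\tfrac14$; integrating over the path yields the quadratic factor $\beta^2\big(\|\hat\eta\|_2-\delta/4\beta\big)^2$. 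Averaging over $i$ and absorbing the empirical fraction of correctly classified points, the empirical margin distribution, and the base measure of the dropout variable into the constants produces $c_1$ and $c_2$.

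The step I expect to be the main obstacle is (ii) coupled with (i): one must control the genuinely non-linear response of the marginal $p(y|x;\cdot)$ to rescaling its softmax weights and combine it with the exponential lower bound so that the product matches the $\beta^2(\cdot)^2\exp(-c_2(\cdot))$ shape rather than the coarser linear-in-$\delta$ estimate that the bare Lipschitz argument produces; a secondary, more mechanical difficulty is the bookkeeping between the \emph{squared} penalty $V(D;\theta)$ used in the constraint and the unsquared Lipschitz/path quantities, and verifying that the chosen contraction is simultaneously feasible and leaves the bound non-vacuous as $\delta\to0$.
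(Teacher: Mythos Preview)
Your surrogate construction and feasibility certificate are essentially what the paper does: keep the first $L-1$ layers fixed and rescale $\hat\eta$ to $\alpha\hat\eta$ with $\alpha=\delta/(4\beta\|\hat\eta\|_2)$; the Lipschitz bound on the softmax (operator norm $\le 2\|\eta\|_2$) together with $\|\mathbf h^{(L-1)}\|\le\beta$ gives $V(D;\tilde\theta)\le\delta$, and the inequality $\Delta_l(\hat\theta,\hat\theta_\delta)\le\Delta_l(\hat\theta,\tilde\theta)$ is exactly the reduction the paper uses.

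The gap is in your likelihood-difference analysis. You correctly anticipate that a first-order path integral gives only a linear-in-$\delta$ estimate, but your proposed fix (splitting into ``numerator'' and ``denominator'' and sourcing the exponential from a crude lower bound $p(y_i|x_i;\tilde\theta)\ge\tfrac1k e^{-2\beta\|\tilde\eta\|_2}$) cannot produce the stated shape: bounding $\log p(\hat\theta)-\log p(\tilde\theta)$ term-by-term yields an \emph{additive} contribution from the denominator bound, not a multiplicative $e^{-c_2\delta/4\beta}$ factor, and the path integral of the first derivative contributes a term linear in $(1-\alpha)\|\hat\eta\|_2$, not quadratic. There is no mechanism in (i)$+$(ii) that makes these two pieces multiply.

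What the paper actually does is a \emph{second-order} Taylor expansion of $g(\eta)=\tfrac1n l(D;\{\hat\lambda,\eta\})$ about $\hat\eta$. The first-order term $\nabla_\eta g(\hat\eta)^\top(\alpha\hat\eta-\hat\eta)$ vanishes because $\hat\theta$ is the unconstrained MLE; this is the step your proposal is missing and is precisely what upgrades the dependence from linear to quadratic, producing the $(1-\alpha)^2\|\hat\eta\|_2^2=(\|\hat\eta\|_2-\delta/4\beta)^2$ factor. The remaining work is to bound the largest eigenvalue of the Hessian $\nabla_\eta^2 g$ along the segment. Here the unique-best-prediction hypothesis is used quantitatively: the paper assumes a margin $\hat\eta_{y^*}^\top\mathbf h^{(L-1)}-\hat\eta_y^\top\mathbf h^{(L-1)}>c\|\hat\eta\|_2$, which forces every non-majority probability to satisfy $p(y|x,s;\{\hat\lambda,\alpha\hat\eta\})\le e^{-c\alpha\|\hat\eta\|_2}$. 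All entries of the (empirical) Hessian are then $O(\beta^2 e^{-c\alpha\|\hat\eta\|_2})$, and Gershgorin gives the eigenvalue bound; combined with the quadratic displacement this yields $c_1\beta^2(\|\hat\eta\|_2-\delta/4\beta)^2e^{-c_2\delta/4\beta}$ with $c_2=c$. So the exponential factor lives in the \emph{Hessian}, not in a lower bound on $p(y_i|x_i;\tilde\theta)$, and the constants $c_1,c_2$ absorb the margin $c$, the class count $k$, the feature dimension, and a lower bound $1/b$ on $p(y|x;\hat\theta)$---all ``distribution-dependent'' in the sense of the statement.
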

From Theorem~\ref{thm:det:bound} (proof in Appendix~\ref{appendix:proof:thm5}) we observe that, 
at one extreme, distributions closed to deterministic can be expectation-linearized with little loss of likelihood.

What about the other extreme --- distributions ``as close to uniform distribution as possible''?
With suitable assumptions about the form of $p(y|x, s; \hat{\theta})$ and $p(y|x; \hat{\theta})$, 
we can achieve an accuracy loss bound for distributions that are close to uniform:
\begin{theorem}\label{thm:uniform:bound}
Suppose that $\forall x, s, \,\|\mathbf{h}^{(L-1)}(x, s; \hat{\theta})\| \leq \beta$. 
Additionally, for each $(x_i, y_i) \in D, s \in \mathcal{S}$, 
$\log \frac{1}{k} \leq \log p(y_i|x_i, s;\hat{\theta}) \leq \frac{1}{k}\sum\limits_{y\in\mathcal{Y}}\log p(y|x_i, s; \hat{\theta})$.
Then asymptotically as $n \rightarrow \infty$:
\begin{equation}
\Delta_l(\hat{\theta}, \hat{\theta}_\delta) \leq 
\left(1 - \frac{\delta}{4\beta\|\hat{\eta}\|_2}\right) \mathrm{E}\left[ 
\mathrm{KL}\left(p(\cdot|X; \theta) \| \mathrm{Unif}(\mathcal{Y})\right) \right]
\end{equation}
\end{theorem}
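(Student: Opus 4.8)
The plan is to bound the likelihood gap by exhibiting an explicit feasible competitor for $\hat\theta_\delta$ and invoking optimality. Since $\hat\theta_\delta$ minimizes $-l(D;\cdot)$ over $\{\theta:V(D;\theta)\le\delta\}$, for any $\theta'$ with $V(D;\theta')\le\delta$ we have $\Delta_l(\hat\theta,\hat\theta_\delta)\le\tfrac1n\big(l(D;\hat\theta)-l(D;\theta')\big)$, so it suffices to produce one good $\theta'$. The natural choice is to shrink only the softmax weights along the ray toward the uniform predictor: set $\theta_\rho=\{\hat\theta_1,\dots,\hat\theta_{L-1},\rho\hat\eta\}$ for $\rho\in[0,1]$, so that $\theta_1=\hat\theta$ while $\theta_0$ outputs $\tfrac1k\mathbf 1$ identically in $s$ and hence $V(D;\theta_0)=0$. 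I would then take $\rho=\rho^\star$ as large as feasibility permits.

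First I would control $V(D;\theta_\rho)$. Writing the logits $Z(x,s)_y=\hat\eta_y^{\top}\mathbf h^{(L-1)}(x,s)$, the hypothesis $\|\mathbf h^{(L-1)}\|\le\beta$ gives $\|Z(x,s)\|\le\beta\|\hat\eta\|_2$ uniformly in $s$, and $\mathbf h^{(L)}(x,s;\theta_\rho)=\mathrm{softmax}(\rho Z(x,s))$. Using that the softmax is Lipschitz and collapses to $\tfrac1k\mathbf 1$ as $\rho\to0$ (the map $\rho\mapsto\mathrm{softmax}(\rho Z)$ has derivative $O(\|Z\|)$ near $0$), both $\mathrm E_S[\mathbf H^{(L)}(x,S;\theta_\rho)]$ and $\mathbf h^{(L)}(x,\mathrm E[S];\theta_\rho)$ stay within $O(\rho\beta\|\hat\eta\|_2)$ of $\tfrac1k\mathbf 1$, so their difference — hence $V(D;\theta_\rho)$ — is governed by $\rho\beta\|\hat\eta\|_2$; tracking the constants gives feasibility precisely for $\rho\le\rho^\star$ with $1-\rho^\star=\delta/(4\beta\|\hat\eta\|_2)$, which is what fixes the coefficient in the theorem.

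Next I would bound the likelihood loss along this ray. The key structural fact is that for fixed $(x,s)$ the map $t\mapsto\log p(y\mid x,s;\theta_t)=t\,Z(x,s)_y-\log\!\sum_{y'}e^{tZ(x,s)_{y'}}$ is concave in $t$, with second derivative $-\mathrm{Var}_{Y'\sim p_t}[Z_{Y'}]\le0$ and derivative $Z_y-\mathrm E_{p_t}[Z]$. Combining concavity with (i) the near-uniform hypotheses $\log\tfrac1k\le\log p(y_i\mid x_i,s;\hat\theta)$ and $\log p(y_i\mid x_i,s;\hat\theta)\le\tfrac1k\sum_y\log p(y\mid x_i,s;\hat\theta)$, which pin the value at $t=0$ and the sign of the derivative, and (ii) first-order optimality of the MLE in the $\eta$-scaling direction (so $\tfrac{d}{dt}l(D;\theta_t)$ vanishes at $t=1$), a chord/subgradient estimate yields, per sample, $\mathrm E_{S_i}\!\big[\log p(y_i\mid x_i,S_i;\hat\theta)-\log p(y_i\mid x_i,S_i;\theta_{\rho^\star})\big]\le(1-\rho^\star)\big(\mathrm E_{S_i}[\log p(y_i\mid x_i,S_i;\hat\theta)]-\log\tfrac1k\big)$. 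Averaging over $i$ and letting $n\to\infty$, the law of large numbers sends the empirical average to $\mathrm E_{X,Y,S}[\log p(Y\mid X,S;\theta_\infty)]-\log\tfrac1k$; absorbing the $\mathrm E_S$-then-$\log$ (Jensen) gap between $l(D;\cdot)$ and its per-sample-averaged surrogate — asymptotically harmless under the natural well-specified reading, for which $p(y\mid x;\theta_\infty)$ matches the data law — lets me replace this by $\mathrm E_{X,Y}[\log p(Y\mid X;\theta_\infty)]-\log\tfrac1k=-\,\mathrm E_X[\mathrm{KL}(p(\cdot\mid X;\theta_\infty)\,\|\,\mathrm{Unif}(\mathcal Y))]$ up to sign, giving exactly the claimed bound with $1-\rho^\star=\delta/(4\beta\|\hat\eta\|_2)$.

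The hardest part will be step three: reconciling the clean concavity of $\log p(y\mid x,s;\theta_t)$ in $t$ with the fact that $l(D;\theta_t)$ involves $\log\mathrm E_S[\cdot]$, where concavity in $t$ need not survive. This is precisely why the statement is only asymptotic — the $n\to\infty$ limit trades the awkward finite-sample marginalized log-likelihood for an expectation in which the Jensen gap can be absorbed (and, under well-specification, identified with the entropy term), and it is also what lets the per-sample chord bounds aggregate into the single factor $\big(1-\tfrac{\delta}{4\beta\|\hat\eta\|_2}\big)\mathrm E[\mathrm{KL}(p(\cdot\mid X;\theta)\,\|\,\mathrm{Unif})]$. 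A secondary technical point is making the softmax-collapse estimate for $V(D;\theta_\rho)$ uniform in the dropout variable $S$, so that the feasibility threshold $\rho^\star$ does not depend on the realization of $S$.
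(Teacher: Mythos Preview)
Your competitor $\theta_\rho=\{\hat\lambda,\rho\hat\eta\}$ and the feasibility step match the paper's (modulo a slip: feasibility holds for $\rho\le\rho^\star$ with $\rho^\star=\delta/(4\beta\|\hat\eta\|_2)$, not $1-\rho^\star$; the paper obtains this from the softmax Jacobian bound $\|\nabla f_L(\cdot;\eta)\|_{\op}\le 2\|\eta\|_2$ together with $\|\mathbf h^{(L-1)}\|\le\beta$, giving $\|\mathrm E_S[\mathbf H^{(L)}]-\mathbf h^{(L)}\|\le 4\rho\beta\|\hat\eta\|_2$).

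The real gap is in step three. You correctly identify that per-$(x,s)$ concavity of $t\mapsto\log p(y\mid x,s;\theta_t)$ need not survive $\log\mathrm E_S[\cdot]$, but the fix you propose---absorb the Jensen gap asymptotically under well-specification---does not go through. The asymptotic in the theorem is only the law of large numbers that turns $\tfrac1n\sum_i\big(\log p(y_i\mid x_i;\hat\theta)-\log\tfrac1k\big)$ into the expected KL; it has nothing to say about the $\log\mathrm E_S$ vs.\ $\mathrm E_S\log$ discrepancy, which is present term-by-term and does not shrink with $n$. In the same vein, the first-order optimality of the MLE that you invoke pertains to $l(D;\theta)=\sum_i\log\mathrm E_{S}[p(y_i\mid x_i,S;\theta)]$, not to the averaged surrogate $\sum_i\mathrm E_S[\log p(y_i\mid x_i,S;\theta)]$ you are bounding, so the derivative that vanishes is not the one your chord estimate needs.

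What the paper actually does is prove that the \emph{marginalized} map $\alpha\mapsto\log p(y_i\mid x_i;\{\hat\lambda,\alpha\hat\eta\})$ is itself concave on $[0,1]$, and this is precisely where the near-uniform hypotheses enter. A direct computation gives
\[
\partial_\alpha^2\log p(y_i\mid x_i;\theta_\alpha)
= -\,\mathrm E_{S\mid Y=y_i}\big[\mathrm{Var}_Y\,m(S,Y)\big]
+ \mathrm{Var}_{S\mid Y=y_i}\big[m(S,y_i)\big],
\]
with $m(s,y)=\log p(y\mid x_i,s;\theta_\alpha)-\mathrm E_Y\log p(Y\mid x_i,s;\theta_\alpha)$ and the posterior on $S$ taken under $\theta_\alpha$. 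The two assumed inequalities on $\log p(y_i\mid x_i,s;\hat\theta)$, together with their persistence for all $\alpha\in[0,1]$ (the lower bound via log-concavity in $\alpha$ of $p(y\mid x,s;\theta_\alpha)$, the upper bound via monotonicity of $\alpha\mapsto\mathrm E_Y[\log p(Y\mid x_i,s;\theta_\alpha)]-\log p(y_i\mid x_i,s;\theta_\alpha)$), yield $\mathrm{Var}_Y\,m(s,Y)\ge m(s,y_i)^2$ for every $s$, hence $\mathrm E_{S\mid Y=y_i}[\mathrm{Var}_Y\,m(S,Y)]\ge\mathrm E_{S\mid Y=y_i}[m(S,y_i)^2]\ge\mathrm{Var}_{S\mid Y=y_i}[m(S,y_i)]$. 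Concavity then gives the chord bound $l(D;\theta_\alpha)\ge(1-\alpha)\,l(D;\theta_0)+\alpha\,l(D;\hat\theta)$ directly for the marginalized likelihood, with no Jensen patching and no use of the MLE first-order condition; the limit $n\to\infty$ is applied only in the last line.
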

Theorem~\ref{thm:uniform:bound} (proof in Appendix~\ref{appendix:proof:thm6}) indicates that uniform
distributions are also an easy class for expectation-linearization.

The next question is whether there exist any classes of conditional distributions $p(y|x)$ for which all 
distributions are provably hard to expectation-linearize. It remains an open problem and might be an 
interesting direction for future work.

\section{Experiments}\label{sec:experiment}
In this section, we evaluate the empirical performance of the proposed regularized dropout 
in \eqref{eq:regular} on a variety of network architectures for the classification task on three 
benchmark datasets---MNIST, CIFAR-10 and CIFAR-100. We applied the same data preprocessing procedure 
as in \citet{srivastava2014dropout}. To make a thorough comparison and provide experimental 
evidence on how the expectation-linearization interacts with the predictive power of the learned model, 
we perform experiments of Monte Carlo (MC) dropout, which approximately computes the final prediction 
(left-hand side of \eqref{eq:prediction}) via Monte Carlo sampling, w/o the proposed regularizer.
In the case of MC dropout, we average $m = 100$ predictions using randomly sampled configurations.
In addition, the network architectures and hyper-parameters for each experiment setup
are the same as those in \citet{srivastava2014dropout}, unless we explicitly claim to use different ones.
Following previous works, for each data set We held out 10,000 random training images for validation to 
tune the hyper-parameters, including $\lambda$ in Eq.~\eqref{eq:regular}.
When the hyper-parameters are fixed, we train the final models with all the training data, 
including the validation data.
A more detailed description of the conducted experiments can be provided in Appendix~\ref{appendix:experiment}.
For each experiment, we report the  mean test errors with corresponding standard deviations over 5 repetitions.

\subsection{MNIST}
The MNIST dataset~\citep{lecun1998gradient} consists of 70,000 handwritten digit images of size 28$\times$28, 
where 60,000 images are used for training and the rest for testing. This task is to classify the images 
into 10 digit classes. For the purpose of comparison, we train 6 neural networks 
with different architectures. The experimental results are shown in Table~\ref{tab:results}.

\subsection{CIFAR-10 and CIFAR-100}
The CIFAR-10 and CIFAR-100 datasets~\citep{krizhevsky2009learning} consist of 60,000 color 
images of size $32\times32$, drawn from 10 and 100 categories, respectively. 
50,000 images are used for training and the rest for testing. 
The neural network architecture we used for these two datasets has 3 convolutional layers, followed by 
two fully-connected (dense) hidden layers (again, same as that 
in \citet{srivastava2014dropout}). The experimental results are recorded in Table~\ref{tab:results}, too.

From Table~\ref{tab:results} we can see that on MNIST data, dropout network training with 
expectation-linearization outperforms standard dropout on all 6 neural architectures.
On CIFAR data, expectation-linearization reduces error rate from 12.82\% to 12.20\% for 
CIFAR-10, achieving 0.62\% improvement. For CIFAR-100, the improvement in terms of error rate is 
0.97\% with reduction from 37.22\% to 36.25\%. 

From the results we see that with or without expectation-linearization, 
the MC dropout networks achieve similar results. It illustrates that by achieving expectation-linear neural networks, 
the predictive power of the learned models has not degraded significantly.
Moreover, it is interesting to see that with the regularization, on MNIST dataset, 
standard dropout networks achieve even better 
accuracy than MC dropout. It may be because that with expectation-linearization, standard dropout inference 
achieves better approximation of the final prediction than MC dropout with (only) 100 samples.
On CIFAR datasets, MC dropout networks achieve better accuracy than the ones with the regularization. 
But, obviously, MC dropout requires much more inference time than standard dropout~(MC dropout with $m$ 
samples requires about $m$ times the inference time of standard dropout).

\begin{table}
\caption{Comparison of classification error percentage on test data with and without using 
expectation-linearization on MNIST, CIFAR-10 and CIFAR-100, under different network architectures 
(with standard deviations for 5 repetitions).}
\label{tab:results}
\centering
{\small
\begin{tabular}[t]{l|l|cc:cc}
\hline
 & & \multicolumn{2}{c:}{\textbf{w.o. EL}} & \multicolumn{2}{c}{\textbf{w. EL}} \\
\textbf{Data} & \textbf{Architecture} & Standard &  MC & Standard &  MC \\
\hline
\multirow{6}{*}{MNIST} & 3 dense,1024,logistic & 1.23$\pm$0.03 & 1.06$\pm$0.02 & 1.07$\pm$0.02 & 1.06$\pm$0.03 \\
 & 3 dense,1024,relu & 1.19$\pm$0.02 & 1.04$\pm$0.02 & 1.03$\pm$0.02 & 1.05$\pm$0.03 \\
 & 3 dense,1024,relu+max-norm & 1.05$\pm$0.03 & 1.02$\pm$0.02 & 0.98$\pm$0.03 & 1.02$\pm$0.02 \\
 & 3 dense,2048,relu+max-norm & 1.07$\pm$0.02 & 1.00$\pm$0.02 & 0.94$\pm$0.02 & 0.97$\pm$0.03 \\
 & 2 dense,4096,relu+max-norm & 1.03$\pm$0.02 & 0.92$\pm$0.03 & 0.90$\pm$0.02 & 0.93$\pm$0.02 \\
 & 2 dense,8192,relu+max-norm & 0.99$\pm$0.02 & 0.96$\pm$0.02 & 0.87$\pm$0.02 & 0.92$\pm$0.03 \\
\hline
CIFAR-10 & 3 conv+2 dense,relu+max-norm & 12.82$\pm$0.10 & 12.16$\pm$0.12 & 12.20$\pm$0.14 & 12.21$\pm$0.15 \\
\hline
CIFAR-100 & 3 conv+2 dense,relu+max-norm & 37.22$\pm$0.22 & 36.01$\pm$0.21 & 36.25$\pm$0.12 & 36.10$\pm$0.18 \\
\hline
\end{tabular}
}
\end{table}

\subsection{Effect of Regularization Constant $\lambda$}
In this section, we explore the effect of varying the hyper-parameter for the expectation-linearization 
rate $\lambda$. We train the network architectures in Table~\ref{tab:results} with the $\lambda$ value 
ranging from 0.1 to 10.0. Figure~\ref{fig:lambda} shows the test errors obtained as a function of 
$\lambda$ on three datasets. In addition, Figure~\ref{fig:lambda}, middle and right panels, 
also measures the empirical expectation-linearization risk $\hat{\Delta}$ of 
Eq.~\eqref{eq:empirical:risk} with varying $\lambda$ on CIFAR-10 and  CIFAR-100, 
where $\hat{\Delta}$ is computed using Monte carlo with 100 independent samples.

From Figure~\ref{fig:lambda} we can see that when $\lambda$ increases, better expectation-linearity is 
achieved (i.e. $\hat{\Delta}$ decreases). The model accuracy, however, has not kept growing with 
increasing $\lambda$, showing that in practice considerations on the trade-off between model 
expectation-linearity and accuracy are needed.

\begin{figure}[t]
\centering
\includegraphics[scale=0.27]{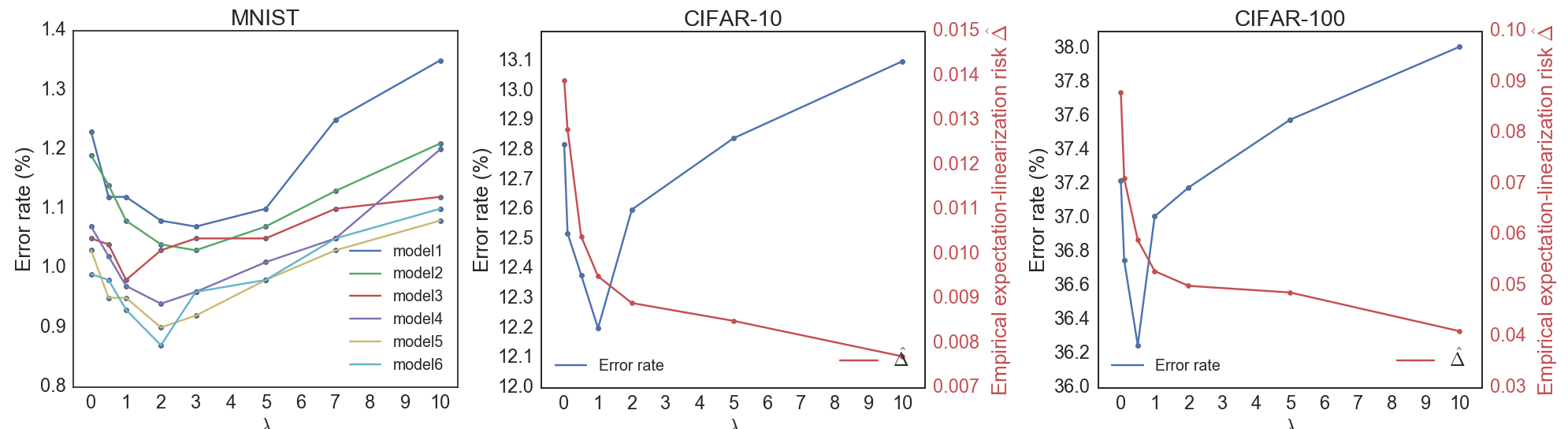}
\caption{Error rate and empirical expectation-linearization risk relative to $\lambda$.}
\label{fig:lambda}
\vspace{-5pt}
\end{figure}

\begin{table}
\caption{Comparison of test data errors using standard dropout, Monte Carlo dropout, standard dropout with our proposed 
expectation-linearization, and recently proposed dropout distillation on CIFAR-10 and CIFAR-100 under AllConv,  
(with standard deviations for 5 repetitions).}
\label{tab:comparison}
\centering
\begin{tabular}[t]{l|l|cccc}
\hline
\textbf{Data} & \textbf{Network} & Standard & MC & w. EL & Distillation \\
\hline
CIFAR-10 & AllConv & 11.18$\pm$0.11 & 10.58$\pm$0.21 & 10.86$\pm$0.08 & 10.81$\pm$0.14 \\
CIFAR-100 & AllConv & 35.50$\pm$0.23 & 34.43$\pm$0.25 & 35.10$\pm$0.13 & 35.07$\pm$0.20 \\
\hline
\end{tabular}
\end{table}

\subsection{Comparison with Dropout Distillation}
To make a thorough empirical comparison with the recently proposed Dropout Distillation method~\citep{bulo2016dropout},
we also evaluate our regularization method on CIFAR-10 and CIFAR-100 datasets with the 
All Convolutional Network~\citep{springenberg2014striving} (AllConv). 
To facilitate comparison, we adopt the originally reported hyper-parameters and the same setup for training.

Table~\ref{tab:comparison} gives the results comparison the classification error percentages on test data 
under AllConv using standard dropout, Monte Carlo dropout, standard dropout with our proposed 
expectation-linearization, and recently proposed dropout distillation on CIFAR-10 and CIFAR-100~\footnote{We obtained
similar results as that reported in Table~1 of \citet{bulo2016dropout} on CIFAR-10 corpus, while we cannot reproduce
comparable results on CIFAR-100 (around 3\% worse)}. According to Table~\ref{tab:comparison}, our proposed 
expectation-linear regularization method achieves comparable performance to dropout distillation.

\section{Conclusions}
In this work, we attempted to establish a theoretical basis for the understanding of dropout, 
motivated by controlling the gap between dropout's training and inference phases. 
Through formulating dropout as a latent variable model and introducing the notion 
of (approximate) expectation-linearity, we have formally studied the inference gap of dropout, 
and introduced an empirical measure as a regularization scheme to explicitly control the gap. 
Experiments on three benchmark datasets demonstrate that reducing the inference gap can indeed 
improve the end  performance. In the future, 
we intend to formally relate the inference gap to the generalization error of the underlying network, 
hence providing further justification of regularized dropout.

\section*{Acknowledgements}
This research was supported in part by DARPA grant FA8750-12-2-0342 funded under the DEFT program. 
Any opinions, findings, and conclusions or recommendations expressed in this material are those of the authors and do not necessarily reflect the views of DARPA.

\bibliographystyle{iclr2017_conference}
\bibliography{ref}
\newpage
\section*{Appendix: Dropout with Expectation-linear Regularization}
\appendix
\setcounter{equation}{0}
\section{LVM Dropout training vs. Standard Dropout Training}\label{appendix:proof:thm1}
\paragraph{Proof of Theorem 1}
\begin{proof}
\begin{displaymath}
\begin{array}{rcl}
\mathrm{E}_{S_D}[l(D, S_D; \theta)] & = & \bigintss_{\mathcal{S}} \prod\limits_{i=1}^{N}p(s_i) \Big(\sum\limits_{i=1}^{N} \log p(y_i|x_i, s_i; \theta)\Big) d\mu(s_1) \ldots d\mu(s_N) \\
 & = & \sum\limits_{i=1}^{N} \bigintsss_{\mathcal{S}} p(s_i) \log p(y_i|x_i, s_i; \theta) d\mu(s_i)
\end{array}
\end{displaymath}
Because $\log(\cdot)$ is a concave function, from Jensen's Inequality,
\begin{displaymath}
\int_{\mathcal{S}} p(s) \log p(y|x, s; \theta) d\mu(s) \leq \log \int_{\mathcal{S}} p(s) p(y|x, s; \theta) d\mu(s)
\end{displaymath}
Thus
\begin{displaymath}
\mathrm{E}_{S_D}[-l(D, S_D; \theta)] \geq \sum\limits_{i=1}^{N} \log \int_{\mathcal{S}} p(s_i) p(y_i|x_i, s_i; \theta) d\mu(s_i) = -l(D;\theta).
\end{displaymath}
\end{proof}
\section{Expectation-Linear Dropout Neural Networks}
\subsection{Proof of Theorem 2}\label{appendix:proof:thm2}
\begin{proof}
Let $\gamma^* = \mathrm{E}[\Gamma]$, and
\begin{displaymath}
A \stackrel{\Delta}{=} \left\{x: \|\mathrm{E}[f(x \odot \Gamma; \theta)] - f(x \odot \gamma^*; \theta) \|_2 = 0 \right\}
\end{displaymath}
Let $X^* = \argmin\limits_{x \in A} \sup\limits_{\gamma \in \mathcal{S}} \|X \odot \gamma - x \odot \gamma \|_2$, 
and $X^- = X - X^*$. Then, 
\begin{displaymath}
X \odot \gamma = X^* \odot \gamma + X^{-} \odot \gamma
\end{displaymath}
In the following, we omit the parameter $\theta$ for convenience. Moreover, we denote
\begin{displaymath}
\mathrm{E}_{\Gamma}\big[f(X \odot \Gamma; \theta)\big] \stackrel{\Delta}{=} \mathrm{E}\big[f(X \odot \Gamma; \theta) | X\big]
\end{displaymath}
From Taylor Series, there exit some $X', X'' \in \mathcal{X}$ satisfy that
\begin{displaymath}
\begin{array}{rcl}
f(X \odot \Gamma) & = & f(X^* \odot \Gamma) + f'(X' \odot \Gamma) (X^{-} \odot \Gamma) \\
f(X \odot \gamma^*) & = & f(X^* \odot \gamma^*) + f'(X'' \odot \gamma^*) (X^{-} \odot \gamma^*)
\end{array}
\end{displaymath}
where we denote $f'(x) = (\nabla_x f(x))^{T}$.
Then, 
\begin{displaymath}
\begin{array}{rl}
 & \mathrm{E}_\Gamma[f(X \odot \Gamma) - f(X \odot \gamma^*)] \\
= & \mathrm{E}_\Gamma[f(X^* \odot \Gamma + X^{-} \odot \Gamma) - f(X^* \odot \gamma^* + X^{-} \odot \gamma^*)] \\
= &  \mathrm{E}_\Gamma[f(X^* \odot \Gamma) - f(X^* \odot \gamma^*) + f'(X' \odot \Gamma)(X^- \odot \Gamma) - f'(X'' \odot \gamma^*)(X^{-} \odot \gamma^*)] \\
= & \mathrm{E}_\Gamma[f(X^* \odot \Gamma) - f(X^* \odot \gamma^*)] + \mathrm{E}_\Gamma[f'(X' \odot \Gamma)(X^- \odot \Gamma) - f'(X'' \odot \gamma^*)(X^{-} \odot \gamma^*)]
\end{array}
\end{displaymath}
Since $X^* \in A$, we have
\begin{displaymath}
\mathrm{E}_\Gamma[f(X^* \odot \Gamma) - f(X^* \odot \gamma^*)] = 0.
\end{displaymath}
Then,
\begin{displaymath}
\begin{array}{rl}
 & \mathrm{E}_\Gamma[f(X \odot \Gamma) - f(X \odot \gamma^*)] \\
= & \mathrm{E}_\Gamma[f'(X' \odot \Gamma)(X^- \odot \Gamma) - f'(X'' \odot \gamma^*)(X^{-} \odot \gamma^*)] \\
= & \mathrm{E}_\Gamma[(f'(X' \odot \Gamma) - f'(X'' \odot \gamma^*))(X^- \odot \Gamma)]
 + \mathrm{E}_\Gamma[f'(X''\odot\gamma^*)(X^- \odot \Gamma - X^- \odot \gamma^*)] \\
= & \mathrm{E}_\Gamma[(f'(X' \odot \Gamma) - f'(X'' \odot \gamma^*))(X^- \odot \Gamma)]
\end{array}
\end{displaymath}
Then,
\begin{displaymath}
\begin{array}{rl}
 & \| \mathrm{E}_\Gamma[f(X \odot \Gamma)] - f(X \odot \gamma^*) \|_2 \\
= & \|\mathrm{E}_\Gamma[(f'(X' \odot \Gamma) - f'(X'' \odot \gamma^*))(X^- \odot \Gamma)]\|_2
\end{array}
\end{displaymath}
Since $\|X^- \odot \gamma'\|_2 \leq \sup\limits_{\gamma \in \mathcal{S}} \|X^- \odot \gamma\|_2 = \inf\limits_{x \in A} \sup\limits_{\gamma \in \mathcal{S}} \|X \odot \gamma - x \odot \gamma\|_2$, and from Jensen's inequality and property of operator norm, 
\begin{displaymath}
\begin{array}{rl}
 & \|\mathrm{E}_\Gamma[(f'(X' \odot \Gamma) - f'(X'' \odot \gamma^*))(X^- \odot \Gamma)]\|_2 \\
\leq & \mathrm{E}_\Gamma\Big[\|f'(X' \odot \Gamma) - f'(X'' \odot \gamma^*)\|_{op} \|X^- \odot \Gamma \|_2\Big] \\
\leq & 2B\mathrm{E}_\Gamma\Big[\|X^- \odot \Gamma \|_2\Big] \\
\leq & 2B\inf\limits_{x \in A} \sup\limits_{\gamma \in \mathcal{S}} \|X \odot \gamma - x \odot \gamma\|_2
\end{array}
\end{displaymath}
Finally we have,
\begin{displaymath}
\begin{array}{rl}
 & \mathrm{E}_X\bigg[ \|\mathrm{E}_\Gamma[(f'(X' \odot \Gamma) - f'(X'' \odot \gamma^*))(X^- \odot \Gamma)]\|_2\bigg] \\
\leq & 2B\mathrm{E}\bigg[\inf\limits_{x \in A} \sup\limits_{\gamma \in \mathcal{S}} \|X \odot \gamma - x \odot \gamma\|_2\bigg] \leq 2BC
\end{array}
\end{displaymath}
\end{proof}
\subsection{Proof of Theorem 3}\label{appendix:proof:thm3}
\begin{proof}
Induction on the number of the layers $L$. As before, we omit the parameter $\theta$. \\
\textbf{Initial step:} when $L=1$, the statement is obviously true. \\
\textbf{Induction on $L$:} Suppose that the statement is true for neural networks with $L$ layers.\\
Now we prove the case $L+1$. From the inductive assumption, we have,
\begin{equation}\label{eq:induction}
\mathrm{E}_{X}\Big[\big\| \mathrm{E}_{S_L}\big[\mathbf{H}^{(L)}(X, S_L)\big] - \mathbf{h}^{(L)}(X, \mathrm{E}[S_L]) \big\|_2 \Big] \leq \Delta_L
\end{equation}
where $S_L = \{\Gamma^{(1)}, \ldots, \Gamma^{(L)}\}$ is the dropout random variables for the first $L$ layers, and 
\begin{displaymath}
\Delta_L = (B\gamma)^{L-1}\delta + (\delta + B\gamma\sigma)\bigg(\frac{1-(B\gamma)^{L-1}}{1-B\gamma}\bigg)
\end{displaymath}
In addition, the $L+1$ layer is $\delta$-approximately expectation-linear, we have:
\begin{equation}\label{eq:step}
\mathrm{E}_{\mathbf{H}^{(L)}}\Big[\big\|\mathrm{E}_{\Gamma^{(L+1)}}\big[f_{L+1}(\mathbf{H}^{(L)} \odot \Gamma^{(L+1)})\big] - f_{L+1}(\mathbf{H}^{(L)} \odot \gamma^{(L+1)}) \big\|_2\Big] \leq \delta
\end{equation}
Let $\mathrm{E}[\Gamma^{(l)}] = \gamma^{(l)}, \forall l \in \{1, \ldots, L+1\}$, and let $\mathbf{H}^{(l)}$ and $\mathbf{h}^{(l)}$ 
be short for $\mathbf{H}^{(l)}(X, S_l)$ and $\mathbf{h}^{(l)}(X, \mathrm{E}(S_l))$, respectively, when there is no ambiguity. Moreover, we denote
\begin{displaymath}
\mathrm{E}_{S}\big[\mathbf{H}^{(L)}(X, S; \theta)\big] = \mathrm{E}_{S}\big[\mathbf{H}^{(L)}(X, S; \theta) \big| X\big]
\end{displaymath}
for convenience. Then,
\begin{displaymath}
\begin{array}{rl}
 & \mathrm{E}_{X}\Big[\big\| \mathrm{E}_{S_{L+1}}\big[\mathbf{H}^{(L+1)}\big] - \mathbf{h}^{(L+1)} \big\|_2 \Big] \\
= & \mathrm{E}_{X}\bigg[\Big\| \mathrm{E}_{S_L}\Big[ \mathrm{E}_{\Gamma^{(L+1)}}\big[f_{L+1}(\mathbf{H}^{(L)} \odot\Gamma^{(L+1)})\big] - f_{L+1}(\mathbf{h}^{(L)}\odot\gamma^{(L+1)})\Big] \\
 & + \mathrm{E}_{S_L}\Big[f_{L+1}(\mathbf{H}^{(L)}\odot\gamma^{(L+1)}) \Big] - f_{L+1}(\mathbf{h}^{(L)}\odot\gamma^{(L+1)}) \Big\|_2\bigg] \\
\leq & \mathrm{E}_{X}\bigg[\Big\| \mathrm{E}_{S_L}\Big[ \mathrm{E}_{\Gamma^{(L+1)}}\big[f_{L+1}(\mathbf{H}^{(L)} \odot\Gamma^{(L+1)})\big] - f_{L+1}(\mathbf{h}^{(L)}\odot\gamma^{(L+1)})\Big] \Big\|_2\bigg] \\
 & + \mathrm{E}_{X}\bigg[\Big\| \mathrm{E}_{S_L}\Big[f_{L+1}(\mathbf{H}^{(L)}\odot\gamma^{(L+1)}) \Big] - f_{L+1}(\mathbf{h}^{(L)}\odot\gamma^{(L+1)}) \Big\|_2\bigg]
\end{array}
\end{displaymath}
From Eq.~\ref{eq:step} and Jensen's inequality, we have
\begin{equation}\label{eq:part1}
\begin{array}{rl}
 & \mathrm{E}_{X}\bigg[\Big\| \mathrm{E}_{S_L}\Big[ \mathrm{E}_{\Gamma^{(L+1)}}\big[f_{L+1}(\mathbf{H}^{(L)} \odot\Gamma^{(L+1)})\big] - f_{L+1}(\mathbf{h}^{(L)}\odot\gamma^{(L+1)})\Big] \Big\|_2\bigg] \\
\leq & \mathrm{E}_{\mathbf{H}^{(L)}}\bigg[\Big\|\mathrm{E}_{\Gamma^{(L+1)}}\big[f_{L+1}(\mathbf{H}^{(L)} \odot\Gamma^{(L+1)})\big] - f_{L+1}(\mathbf{h}^{(L)}\odot\gamma^{(L+1)})\Big\|_2\bigg] \leq \delta
\end{array}
\end{equation}
and
\begin{equation}\label{eq:part2}
\begin{array}{rl}
 & \mathrm{E}_{X}\bigg[\Big\| \mathrm{E}_{S_L}\Big[f_{L+1}(\mathbf{H}^{(L)}\odot\gamma^{(L+1)}) \Big] - f_{L+1}(\mathbf{h}^{(L)}\odot\gamma^{(L+1)}) \Big\|_2\bigg] \\
= & \mathrm{E}_{X}\bigg[\Big\| \mathrm{E}_{S_L}\Big[f_{L+1}(\mathbf{H}^{(L)}\odot\gamma^{(L+1)}) \Big] - f_{L+1}(\mathrm{E}_{S_L}\big[\mathbf{H}^{(L)}\big] \odot \gamma^{(L+1)}) \\
 & + f_{L+1}(\mathrm{E}_{S_L}\big[\mathbf{H}^{(L)}\big] \odot \gamma^{(L+1)}) - f_{L+1}(\mathbf{h}^{(L)}\odot\gamma^{(L+1)})\Big\|_2\bigg] \\
\leq & \mathrm{E}_{X}\bigg[\Big\|\mathrm{E}_{S_L}\Big[f_{L+1}(\mathbf{H}^{(L)}\odot\gamma^{(L+1)}) \Big] - f_{L+1}(\mathrm{E}_{S_L}\big[\mathbf{H}^{(L)}\big] \odot \gamma^{(L+1)})\Big\|_2\bigg] \\
 & + \mathrm{E}_{X}\bigg[\Big\| f_{L+1}(\mathrm{E}_{S_L}\big[\mathbf{H}^{(L)}\big] \odot \gamma^{(L+1)}) - f_{L+1}(\mathbf{h}^{(L)}\odot\gamma^{(L+1)})\Big\|_2\bigg]
\end{array}
\end{equation}
Using Jensen's inequality, property of operator norm and $\mathrm{E}\big[\mathrm{Var}[\mathbf{H}^{(l)}|X]\big] \leq \sigma^2$, we have
\begin{equation}\label{eq:part3}
\begin{array}{rl}
 & \mathrm{E}_{X}\bigg[\Big\|\mathrm{E}_{S_L}\Big[f_{L+1}(\mathbf{H}^{(L)}\odot\gamma^{(L+1)}) \Big] - f_{L+1}(\mathrm{E}_{S_L}\big[\mathbf{H}^{(L)}\big] \odot \gamma^{(L+1)})\Big\|_2\bigg] \\
\leq & \mathrm{E}_{\mathbf{H}^{(L)}}\bigg[\Big\| f_{L+1}(\mathbf{H}^{(L)}\odot\gamma^{(L+1)}) - f_{L+1}(\mathrm{E}_{S_L}\big[\mathbf{H}^{(L)}\big] \odot \gamma^{(L+1)}) \Big\|_2\bigg] \\
\leq & B\gamma\mathrm{E}_{\mathbf{H}^{(L)}}\Big[\big\|\mathbf{H}^{(L)} - \mathrm{E}_{S_L}\big[\mathbf{H}^{(L)}\big]\big\|_2\Big] \\
\leq & B\gamma\left( \mathrm{E}_{\mathbf{H}^{(L)}}\Big[\big\|\mathbf{H}^{(L)} - \mathrm{E}_{S_L}\big[\mathbf{H}^{(L)}\big]\big\|^2_2\Big]\right)^{\frac{1}{2}} \leq B\gamma\sigma
\end{array}
\end{equation}
From Eq.~\ref{eq:induction}
\begin{equation}\label{eq:part4}
\begin{array}{rl}
 & \mathrm{E}_{X}\bigg[\Big\| f_{L+1}(\mathrm{E}_{S_L}\big[\mathbf{H}^{(L)}\big] \odot \gamma^{(L+1)}) - f_{L+1}(\mathbf{h}^{(L)}\odot\gamma^{(L+1)})\Big\|_2\bigg] \\
= & B\gamma \mathrm{E}_{X}\Big[\big\| \mathrm{E}_{S_L}\big[\mathbf{H}^{(L)}\big] - \mathbf{h}^{(L)} \big\|_2 \Big] \leq B\gamma\Delta_L
\end{array}
\end{equation}
Finally, to sum up with Eq.~\ref{eq:part1}, Eq.~\ref{eq:part2}, , Eq.~\ref{eq:part3}, , Eq.~\ref{eq:part4}, we have
\begin{displaymath}
\begin{array}{rl}
 & \mathrm{E}_{X}\Big[\big\| \mathrm{E}_{S_{L+1}}\big[\mathbf{H}^{(L+1)}\big] - \mathbf{h}^{(L+1)} \big\|_2 \Big] \\
\leq & \delta + B\gamma\sigma + B\gamma\Delta_L \\
= & (B\gamma)^{L}\delta + (\delta + B\gamma\sigma)\bigg(\frac{1-(B\gamma)^{L}}{1-B\gamma}\bigg) = \Delta_{L+1}
\end{array}
\end{displaymath}
\end{proof}
\section{Expectation-Linearization}
\subsection{Proof of Theorem 4: Uniform Deviation Bound}\label{appendix:proof:thm4}
Before proving Theorem~\ref{thm:complexity}, we first define the notations.

Let $X^n = \{X_1, \ldots, X_n \}$ be a set of $n$ samples of input $X$. 
For a function space $\mathcal{F}: \mathcal{X} \rightarrow \mathcal{R}$, we use $Rad_n(\mathcal{F}, X^n)$ to denote the \emph{empirical Rademacher complexity} 
of $\mathcal{F}$,
\begin{displaymath}
Rad_n(\mathcal{F}, X^n) = \mathrm{E}_{\sigma}\bigg[ \sup\limits_{f \in \mathcal{F}} \Big( \frac{1}{n}\sum\limits_{i=1}^{n} \sigma_i f(X_i) \Big)\bigg]
\end{displaymath}
and the \emph{Rademacher complexity} is defined as
\begin{displaymath}
Rad_n(\mathcal{F}) = \mathrm{E}_{X^n}\Big[ Rad_n(\mathcal{F}, X^n) \Big]
\end{displaymath}
In addition, we import the definition of \emph{dropout Rademacher complexity} from \citet{gao2014dropout}:
\begin{displaymath}
\begin{array}{rcl}
\mathcal{R}_n (\mathcal{H}, X^n, S^n) & = & \mathrm{E}_{\sigma}\bigg[ \sup\limits_{h \in \mathcal{H}} \Big( \frac{1}{n}\sum\limits_{i=1}^{n} \sigma_i h(X_i, S_i) \Big)\bigg] \\
\mathcal{R}_n (\mathcal{H}) & = & \mathrm{E}_{X^n,S^n}\Big[ Rad_n(\mathcal{H}, X^n, S^n) \Big]
\end{array}
\end{displaymath}
where $\mathcal{H}: \mathcal{X} \times \mathcal{S} \rightarrow \mathcal{R}$ is a function space defined on input space $\mathcal{X}$ 
and dropout variable space $\mathcal{S}$. $\mathcal{R}_n (\mathcal{H}, X^n, S^n)$ and $\mathcal{R}_n (\mathcal{H})$ are 
the empirical dropout Rademacher complexity and dropout Rademacher complexity, respectively. We further denote 
$\mathcal{R}_n (\mathcal{H}, X^n) \stackrel{\Delta}{=} \mathrm{E}_{S^n}\Big[ Rad_n(\mathcal{H}, X^n, S^n) \Big]$.

Now, we define the following function spaces:
\begin{displaymath}
\begin{array}{rcl}
\mathcal{F} & = & \bigg\{f(x; \theta): f(x; \theta) = \mathrm{E}_{S}\Big[\mathbf{H}^{(L)}(x, S; \theta) \Big], \theta \in \Theta\bigg\} \\
\mathcal{G} & = & \bigg\{g(x; \theta): g(x; \theta) = \mathbf{h}^{(L)}(x, \mathrm{E}[S]; \theta), \theta \in \Theta\bigg\} \\
\mathcal{H} & = & \bigg\{h(x, s; \theta): h(x, s; \theta) = \mathbf{h}^{(L)}(x, s; \theta), \theta \in \Theta\bigg\} 
\end{array}
\end{displaymath}
Then, the function space of $v(x) = f(x) - g(x)$ is $\mathcal{V} = \{f(x) - g(x): f \in \mathcal{F}, g \in \mathcal{G}\}$.
\begin{lemma}\label{lem:ineq:rad}
\begin{displaymath}
Rad_n(\mathcal{F}, X^n) \leq \mathcal{R}_n(\mathcal{H}, X^n)
\end{displaymath}
\end{lemma}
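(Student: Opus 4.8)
\textbf{Proof proposal for Lemma~\ref{lem:ineq:rad}.} The plan is to bound the empirical Rademacher complexity of $\mathcal{F}$, whose members are the dropout-averaged outputs $f(x;\theta) = \mathrm{E}_S[\mathbf{H}^{(L)}(x,S;\theta)]$, by that of $\mathcal{H}$, whose members are the raw (un-averaged) dropout outputs $h(x,s;\theta)=\mathbf{h}^{(L)}(x,s;\theta)$. The key observation is that taking an expectation over the dropout variable $S$ is an averaging (convex combination) operation, and Rademacher complexity is well-behaved under such averaging: pushing a supremum inside an expectation can only increase it. Concretely, I would start from the definition
\begin{displaymath}
Rad_n(\mathcal{F}, X^n) = \mathrm{E}_{\sigma}\bigg[ \sup\limits_{\theta \in \Theta} \frac{1}{n}\sum\limits_{i=1}^{n} \sigma_i \,\mathrm{E}_{S}\big[\mathbf{H}^{(L)}(X_i, S; \theta)\big]\bigg].
\end{displaymath}

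The first step is to rewrite the inner expectation so that all $n$ sums share a common randomness structure. Since the dropout variables attached to distinct training instances are i.i.d., I would introduce an i.i.d. collection $S^n=\{S_1,\ldots,S_n\}$ and note that $\mathrm{E}_S[\mathbf{H}^{(L)}(X_i,S;\theta)] = \mathrm{E}_{S^n}[\mathbf{H}^{(L)}(X_i,S_i;\theta)]$ for each $i$, so that by linearity
\begin{displaymath}
\frac{1}{n}\sum_{i=1}^n \sigma_i\,\mathrm{E}_S\big[\mathbf{H}^{(L)}(X_i,S;\theta)\big] = \mathrm{E}_{S^n}\bigg[\frac{1}{n}\sum_{i=1}^n \sigma_i\,\mathbf{H}^{(L)}(X_i,S_i;\theta)\bigg].
\end{displaymath}
The second step is to apply Jensen's inequality to the convex function $\sup_\theta(\cdot)$: the supremum over $\theta$ of an expectation over $S^n$ is at most the expectation over $S^n$ of the supremum over $\theta$. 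This yields
\begin{displaymath}
\sup_{\theta}\mathrm{E}_{S^n}\bigg[\frac{1}{n}\sum_{i=1}^n \sigma_i\,\mathbf{H}^{(L)}(X_i,S_i;\theta)\bigg] \le \mathrm{E}_{S^n}\bigg[\sup_{\theta}\frac{1}{n}\sum_{i=1}^n \sigma_i\,\mathbf{H}^{(L)}(X_i,S_i;\theta)\bigg].
\end{displaymath}
The third step is to take the outer expectation over the Rademacher signs $\sigma$, swap the order of $\mathrm{E}_\sigma$ and $\mathrm{E}_{S^n}$ (Fubini, everything bounded), and recognize the right-hand side as exactly $\mathrm{E}_{S^n}[Rad_n(\mathcal{H},X^n,S^n)] = \mathcal{R}_n(\mathcal{H},X^n)$, which completes the proof.

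The main obstacle, such as it is, is bookkeeping rather than anything deep: one must be careful that the dropout variables are genuinely independent across the $n$ instances (which is exactly the setup in \S\ref{subsec:notation}, where $S_D=\{S_1,\ldots,S_N\}$ has one independent draw per instance), so that the single-instance expectation $\mathrm{E}_S$ can legitimately be replaced by a joint expectation $\mathrm{E}_{S^n}$ that couples all $n$ terms under one $\sup_\theta$. Once that identification is made, the argument is the standard ``Jensen lets you pull $\sup$ out through $\mathrm{E}$'' move, and no boundedness or structural assumptions on $\mathbf{h}^{(L)}$ beyond measurability are actually needed for this particular lemma; the norm bounds listed in Theorem~\ref{thm:complexity} enter only later when $\mathcal{R}_n(\mathcal{H},X^n)$ itself is controlled.
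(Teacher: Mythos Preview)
Your proposal is correct and follows essentially the same argument as the paper's proof: both rely on swapping $\mathrm{E}_\sigma$ and $\mathrm{E}_{S^n}$ via Fubini and then using $\sup_\theta \mathrm{E}_{S^n}[\cdot] \le \mathrm{E}_{S^n}[\sup_\theta(\cdot)]$, with the i.i.d.\ structure of $S^n$ allowing the per-instance expectations to be written as a single joint expectation. The only cosmetic difference is direction---the paper starts from $\mathcal{R}_n(\mathcal{H},X^n)$ and works down to $Rad_n(\mathcal{F},X^n)$, whereas you start from $Rad_n(\mathcal{F},X^n)$ and work up.
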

\begin{proof}
\begin{displaymath}
\begin{array}{rl}
 & \mathcal{R}_n(\mathcal{H}, X^n) = \mathrm{E}_{S^n}\Big[ Rad_n(\mathcal{H}, X^n, S^n) \Big] \\
= & \mathrm{E}_{S^n} \bigg[ \mathrm{E}_{\sigma}\Big[ \sup\limits_{h \in \mathcal{H}} \Big( \frac{1}{n}\sum\limits_{i=1}^{n} \sigma_i h(X_i, S_i) \Big)\Big]\bigg] \\
= & \mathrm{E}_{\sigma} \bigg[ \mathrm{E}_{S^n}\Big[ \sup\limits_{h \in \mathcal{H}} \Big( \frac{1}{n}\sum\limits_{i=1}^{n} \sigma_i h(X_i, S_i) \Big)\Big]\bigg] \\
\geq & \mathrm{E}_{\sigma} \bigg[ \sup\limits_{h \in \mathcal{H}} \mathrm{E}_{S^n}\Big[ \Big( \frac{1}{n}\sum\limits_{i=1}^{n} \sigma_i h(X_i, S_i) \Big)\Big]\bigg] \\
= & \mathrm{E}_{\sigma} \bigg[ \sup\limits_{h \in \mathcal{H}} \Big( \frac{1}{n}\sum\limits_{i=1}^{n} \sigma_i \mathrm{E}_{S_i}\big[h(X_i, S_i)\big] \Big)\bigg] \\
= & \mathrm{E}_{\sigma} \bigg[ \sup\limits_{h \in \mathcal{H}} \Big( \frac{1}{n}\sum\limits_{i=1}^{n} \sigma_i \mathrm{E}_{S_i}\big[\mathbf{H}^{(L)}(X_i, S_i; \theta)\big] \Big)\bigg] = Rad_n(\mathcal{F}, X^n)
\end{array}
\end{displaymath}
\end{proof}
From Lemma~\ref{lem:ineq:rad}, we have $Rad_n(\mathcal{F}) \leq \mathcal{R}_n(\mathcal{H})$.
\begin{lemma}\label{lem:drop:rad}
\begin{displaymath}
\begin{array}{rcl}
\mathcal{R}_n(\mathcal{H}) & \leq & \frac{\alpha B^{L} \gamma^{L/2}}{\sqrt{n}} \\
Rad_n(\mathcal{G}) & \leq & \frac{\alpha B^{L}}{\sqrt{n}}
\end{array}
\end{displaymath}
\end{lemma}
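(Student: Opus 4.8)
The plan is to prove both inequalities by the same device: a \emph{layer-by-layer peeling} of the composition in Eq.~\eqref{eq:dnn}, stripping off one transformation $f_l$ (together with the dropout mask in front of it) at a time, and tracking how the (empirical, dropout) Rademacher complexity shrinks at each step, until only the bare input class $x\mapsto x$ remains.

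First consider the deterministic class $\mathcal{G}$, i.e.\ $g(x;\theta)=\mathbf{h}^{(L)}(x,\mathrm{E}[S];\theta)$. Fix $X^n$ and the Rademacher signs $\sigma$, and peel from the output side: $\mathbf{h}^{(L)}(x,\mathrm{E}[S];\theta)=f_L\big(\mathbf{h}^{(L-1)}(x,\mathrm{E}[S];\theta)\odot\mathrm{E}[\Gamma^{(L)}];\theta_L\big)$. Since $\|\nabla f_L\|_{\op}\le B$, the map $f_L(\cdot;\theta_L)$ is $B$-Lipschitz (handling the bias as a constant shift, or peeling the affine part separately), so a contraction inequality of Ledoux--Talagrand type pulls the factor $B$ out and replaces $f_L$ by the identity; the ensuing coordinatewise multiplication by $\mathrm{E}[\Gamma^{(L)}]$, whose entries are dropout probabilities in $[0,1]$, is non-expansive and so contributes a factor at most $1$. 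Iterating for $l=L,L-1,\ldots,1$ accumulates $B^L$ and leaves the single map $x\mapsto x$ evaluated at $X_1,\ldots,X_n$; for this class, Jensen's inequality (or Khintchine) and $\|X_i\|_2\le\alpha$ give $\mathrm{E}_\sigma\big[\tfrac1n\big\|\sum_i\sigma_i X_i\big\|_2\big]\le\tfrac1n\sqrt{\sum_i\|X_i\|_2^2}\le\alpha/\sqrt n$. Averaging over $X^n$ yields $Rad_n(\mathcal{G})\le\alpha B^L/\sqrt n$.

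For $\mathcal{H}$ the peeling is identical except the masks in front of each layer are the \emph{random} $\Gamma^{(l)}$ in $S$, over which $\mathcal{R}_n(\mathcal{H})$ additionally averages. The activation layers still contribute $B$ each by contraction. The new ingredient is the multiplication by $\Gamma^{(l)}$: conditioning on everything below layer $l$, using independence of $\Gamma^{(l)}$ from the signs, and the second-moment bound $\mathrm{E}\big[\|v\odot\Gamma^{(l)}\|_2^2\big]=\sum_j v_j^2\,\mathrm{E}[(\Gamma^{(l)}_j)^2]\le\gamma\,\|v\|_2^2$ (for $\{0,1\}$-valued Bernoulli dropout $\mathrm{E}[(\Gamma^{(l)}_j)^2]=\mathrm{E}[\Gamma^{(l)}_j]\le\gamma$), a Cauchy--Schwarz/Jensen step lets each dropout layer contribute a factor $\sqrt\gamma$. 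With $L$ activation layers, $L$ dropout masks, and the same base bound $\alpha/\sqrt n$, this gives $\mathcal{R}_n(\mathcal{H})\le\alpha B^L\gamma^{L/2}/\sqrt n$. (Equivalently, one can run the same recursion to get $\mathrm{E}_S\|\mathbf{h}^{(L)}(x,S;\theta)\|_2^2\le\alpha^2 B^{2L}\gamma^L$ and feed that norm bound into a direct complexity estimate.)

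The main obstacle is making the contraction step fully rigorous here: the textbook Ledoux--Talagrand lemma treats a scalar Lipschitz function composed with a \emph{fixed} function class, whereas our intermediate layers are vector-valued and the map being peeled, $f_l(\cdot;\theta_l)$, itself ranges over a family indexed by $\theta_l$ while we take the supremum over all of $\theta$ simultaneously. This requires either a vector-valued contraction inequality (e.g.\ Maurer's) or, as in \citet{gao2014dropout}, peeling the affine part and the nonlinearity separately so that only a fixed $1$-Lipschitz scalar nonlinearity is ever contracted while the $\theta_l$-dependence is carried by the norm-bounded linear maps; interleaving this with the expectation over the random dropout masks is the delicate bookkeeping, but conceptually routine.
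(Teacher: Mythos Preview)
The paper does not actually prove this lemma: its entire proof reads ``See Theorem 4 in \citet{gao2014dropout}.'' Your sketch, by contrast, is a genuine outline of the layer-peeling argument that underlies that cited result, so there is nothing in the paper to compare against beyond the external reference.

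Substantively, your outline is the Gao--Zhou argument in spirit: decompose each layer into an affine map (whose operator-norm bound is absorbed into $B$) followed by a fixed coordinatewise $1$-Lipschitz nonlinearity, apply scalar Ledoux--Talagrand contraction to the latter, and for the random masks use independence together with $\mathrm{E}[(\Gamma^{(l)}_j)^2]=\mathrm{E}[\Gamma^{(l)}_j]\le\gamma$ for Bernoulli dropout to extract the $\sqrt{\gamma}$ per layer. Your final paragraph already names the only genuine technical issue---that the textbook contraction lemma is scalar and for a fixed outer function, so one must peel the affine and nonlinear pieces separately (or invoke a vector-valued contraction such as Maurer's) rather than contract $f_l(\cdot;\theta_l)$ in one shot while $\theta_l$ varies. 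With that bookkeeping done, the recursion terminates at the input class with the standard $\alpha/\sqrt{n}$ bound, and both inequalities follow. So your proposal is correct as a sketch, and strictly more informative than what the paper itself supplies.
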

\begin{proof}
See Theorem 4 in \citet{gao2014dropout}.
\end{proof}
Now, we can prove Theorem~\ref{thm:complexity}.
\paragraph{Proof of Theorem 4}
\begin{proof}
From Rademacher-based uniform bounds theorem, with probability $\geq 1 - \delta$,
\begin{displaymath}
\sup\limits_{v \in \mathcal{V}} |\Delta - \hat{\Delta}| < 2 Rad_n(\mathcal{V}) + \beta \sqrt{\frac{\log(1/\delta)}{n}}
\end{displaymath}
Since $\mathcal{V} = \mathcal{F} - \mathcal{G}$, we have
\begin{displaymath}
Rad_n(\mathcal{V})  = Rad_n(\mathcal{F} - \mathcal{G}) \leq Rad_n(\mathcal{F}) + Rad_n(\mathcal{G}) \leq \frac{\alpha B^{L} (\gamma^{L/2}+ 1)}{\sqrt{n}}
\end{displaymath}
Then, finally, we have that with probability $\geq 1 - \delta$,
\begin{displaymath}
\sup\limits_{\theta \in \Theta} |\Delta - \hat{\Delta}| < \frac{2\alpha B^{L} (\gamma^{L/2}+ 1)}{\sqrt{n}} + \beta \sqrt{\frac{\log(1/\delta)}{n}}
\end{displaymath}
\end{proof}

\subsection{Proof of Theorem 5: Non-Uniform Bound of Model Accuracy}\label{appendix:proof:thm5}
For convenience, we denote $\lambda = \{\theta_1, \ldots, \theta_{L-1}\}$. 
Then $\theta = \{\lambda, \eta\}$, and MLE $\hat{\theta} = \{\hat{\lambda}, \hat{\eta} \}$
\begin{lemma}\label{lem:softmax:op}
\begin{equation}
\|\nabla f_L(\cdot; \eta)^T \|_{op} \leq 2 \| \eta \|_2
\end{equation}
\end{lemma}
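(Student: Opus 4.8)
The plan is to reduce the bound to the chain rule followed by two elementary operator-norm estimates. Fix an input $\mathbf{h}$ (short for $\mathbf{h}^{(L-1)}(x,s)$), write the logits $z_y = \eta_y^T\mathbf{h}$, and set $E = [\eta_1,\ldots,\eta_k]$, the matrix whose columns are the class weight vectors, so that $f_L(\mathbf{h};\eta) = \mathrm{softmax}(z)$ with $z = E^T\mathbf{h}$. Differentiating (with the paper's convention $f'(x) = (\nabla_x f(x))^T$) gives the factorization
\begin{displaymath}
\nabla f_L(\mathbf{h};\eta)^T = \big(\mathrm{diag}(p) - pp^T\big)\, E^T, \qquad p := f_L(\mathbf{h};\eta),
\end{displaymath}
i.e.\ the standard softmax Jacobian $\mathrm{diag}(p)-pp^T$ composed with the linear layer's Jacobian $E^T$. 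By submultiplicativity of the operator norm, $\|\nabla f_L(\mathbf{h};\eta)^T\|_{\op} \le \|\mathrm{diag}(p)-pp^T\|_{\op}\cdot\|E\|_{\op}$, so it suffices to bound these two factors uniformly in $\mathbf{h}$.

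For the softmax Jacobian, the key observation is that $M := \mathrm{diag}(p)-pp^T$ is the covariance matrix of the categorical law $p$: for any $v$, $v^TMv = \sum_y p_y v_y^2 - (\sum_y p_y v_y)^2 = \mathrm{Var}_{y\sim p}[v_y]$, which lies in $[0,\|v\|_2^2]$ since $0 \le \mathrm{Var}_{y\sim p}[v_y] \le \sum_y p_y v_y^2 \le \|v\|_2^2$. Hence $M$ is positive semidefinite with $\|M\|_{\op}\le 1$ (a crude triangle-inequality bound $\|M\|_{\op}\le\|\mathrm{diag}(p)\|_{\op}+\|pp^T\|_{\op}\le 1+\|p\|_2^2\le 2$ would already suffice). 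For the linear factor, since the columns of $E$ are the $\eta_y$, Cauchy--Schwarz gives for any unit $u$ that $\|Eu\|_2 = \|\sum_y u_y\eta_y\|_2 \le \sum_y|u_y|\,\|\eta_y\|_2 \le (\sum_y\|\eta_y\|_2^2)^{1/2} = \|\eta\|_2$, so $\|E\|_{\op}\le\|\eta\|_2$. (If $\|\eta\|_2$ is instead read as $\max_y\|\eta_y\|_2$, one argues directly: $\|\nabla f_L(\mathbf{h};\eta)^Tu\|_2^2 = \sum_y p_y^2((\eta_y-\bar\eta)^Tu)^2 \le \mathrm{Var}_{y\sim p}[\eta_y^Tu] \le \max_y(\eta_y^Tu)^2 \le \max_y\|\eta_y\|_2^2$, with $\bar\eta = \sum_y p_y\eta_y$.) Combining the two factors yields $\|\nabla f_L(\mathbf{h};\eta)^T\|_{\op} \le \|\eta\|_2 \le 2\|\eta\|_2$, as claimed; the factor $2$ is deliberate slack.

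The argument is essentially routine, so I do not expect a real obstacle; the one step that warrants care is the operator-norm bound on $\mathrm{diag}(p)-pp^T$, for which recognizing it as a covariance matrix---equivalently, the identity $v^T(\mathrm{diag}(p)-pp^T)v = \mathrm{Var}_{y\sim p}[v_y]$---is the clean route, rather than a direct eigenvalue computation. The remaining bookkeeping, namely aligning the transpose conventions in the chain rule with the paper's definition of $f'$, does not affect the final bound since $\|A\|_{\op} = \|A^T\|_{\op}$. This lemma will then feed into the proof of Theorem~\ref{thm:det:bound}, where $2\|\hat\eta\|_2$ plays the role of the operator-norm constant for the softmax layer.
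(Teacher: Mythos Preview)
Your proof is correct, and in fact sharper than the paper's by a factor of two. The approaches differ in structure. The paper works directly with the row representation $A=[p_y(\eta_y-\bar\eta)^T]_{y=1}^k$: it bounds $\|Av\|_2^2$ row by row via Cauchy--Schwarz, uses $p_y^2\le p_y$, then applies the crude inequality $\|\eta_y-\bar\eta\|_2^2\le 2(\|\eta_y\|_2^2+\|\bar\eta\|_2^2)$ together with Jensen on $\|\bar\eta\|_2^2$ to arrive at $4\|\eta\|_2^2$. You instead factor the Jacobian through the chain rule as $(\mathrm{diag}(p)-pp^T)E^T$ and bound the two factors separately; recognizing $\mathrm{diag}(p)-pp^T$ as a covariance gives $\|\cdot\|_{\op}\le 1$ cleanly, and the Frobenius bound $\|E\|_{\op}\le\|\eta\|_2$ finishes the job. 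Your route is more modular and avoids the loss from the $\|a-b\|^2\le 2(\|a\|^2+\|b\|^2)$ step, which is exactly where the paper's extra factor of $2$ enters. Either approach feeds identically into Lemma~\ref{lem:norm:constrain} and the proof of Theorem~\ref{thm:det:bound}, so the downstream constants would only improve under your bound.
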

\begin{proof}
denote
\begin{displaymath}
A = \nabla f_L(\cdot; \eta)^T = \left[ p_y (\eta_y - \overline{\eta})^T \right]\Big|_{y=1}^{k}
\end{displaymath}
where $p_y = p(y|x, s; \theta)$, $\overline{\eta} = \mathrm{E}\left[\eta_{Y}\right] = \sum\limits_{y=1}^{k}p_y \eta_y$.

For each $v$ such that $\|v\|_2 = 1$, 
\begin{displaymath}
\begin{array}{rcl}
\|Av\|_2^2 & = & \sum\limits_{y \in \mathcal{Y}} \left( p_y \left( \eta_y - \overline{\eta} \right)^T v\right)^2 \leq \sum\limits_{y \in \mathcal{Y}} \|p_y \left( \eta_y - \overline{\eta} \right) \|_2^2 \|v\|_2^2 = \sum\limits_{y \in \mathcal{Y}} \|p_y \left( \eta_y - \overline{\eta} \right) \|_2^2 \\
 & \leq & \sum\limits_{y \in \mathcal{Y}} p_y \|\eta_y - \overline{\eta}\|_2^2 \leq \sum\limits_{y \in \mathcal{Y}} 2 p_y \left( \|\eta\|_2^2 + \sum\limits_{y'\in\mathcal{Y}}p_{y'}\|\eta_{y'}\|_2^2\right) \\
 & = & 4 \sum\limits_{y \in \mathcal{Y}} p_y \|\eta_y\|_2^2 \leq 4\|\eta\|_2^2
\end{array}
\end{displaymath}
So we have $\|A\|_{op} \leq 2\|\eta\|_2$.
\end{proof}
\begin{lemma}\label{lem:norm:constrain}
If parameter $\tilde{\theta} = \{\hat{\lambda}, \eta\}$ satisfies that $\|\eta\|_2 \leq \frac{\delta}{4\beta}$, 
then $V(D; \tilde{\theta}) \leq \delta$, where $V(D; \theta)$ is defined in Eq.~\eqref{eq:penalty}.
\end{lemma}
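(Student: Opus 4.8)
The plan is to show that the expectation-linearity penalty $V(D;\tilde\theta)$ is driven by the operator norm of the softmax layer's Jacobian, which Lemma~\ref{lem:softmax:op} bounds by $2\|\eta\|_2$. First I would recall the definition of $V(D;\tilde\theta)$ from Eq.~\eqref{eq:penalty}: it is the average over training points of $\big\|\mathrm{E}_{S_i}[\mathbf{H}^{(L)}(x_i,S_i;\tilde\theta)] - \mathbf{h}^{(L)}(x_i,\mathrm{E}[S_i];\tilde\theta)\big\|_2^2$. Since $\tilde\theta = \{\hat\lambda,\eta\}$ shares the lower-layer parameters with $\hat\theta$, the hidden representation $\mathbf{h}^{(L-1)}(x_i,s)$ entering the softmax is unchanged; only the final map $f_L(\cdot;\eta)$ varies with $\eta$. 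So for each fixed $x_i$, writing $g(s) = \mathbf{h}^{(L-1)}(x_i,s)$, the quantity inside the norm is $\mathrm{E}_{S_i}[f_L(g(S_i);\eta)] - f_L(g(\mathrm{E}[S_i]);\eta)$, wait — more precisely $f_L(\mathrm{E}_{S_i}[g(S_i)\text{-type combination}])$; I would be careful here that $\mathbf{h}^{(L)}(x_i,\mathrm{E}[S_i];\theta)$ is the deterministic-scaled network, so the argument to $f_L$ on the deterministic side is $\mathbf{h}^{(L-1)}(x_i,\mathrm{E}[S_i])$, and on the stochastic side it is the random $\mathbf{h}^{(L-1)}(x_i,S_i)$ averaged after applying $f_L$.

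Next I would bound $\|\mathrm{E}_{S_i}[f_L(U)] - f_L(u_0)\|_2$ where $U = \mathbf{h}^{(L-1)}(x_i,S_i)$ and $u_0 = \mathbf{h}^{(L-1)}(x_i,\mathrm{E}[S_i])$. Using that $f_L(\cdot;\eta)$ is Lipschitz with constant $\|\nabla f_L(\cdot;\eta)\|_{\op} \leq 2\|\eta\|_2$ (Lemma~\ref{lem:softmax:op}), and Jensen's inequality, I get $\|\mathrm{E}_{S_i}[f_L(U)] - f_L(u_0)\|_2 \leq \mathrm{E}_{S_i}[\|f_L(U)-f_L(u_0)\|_2] \leq 2\|\eta\|_2\,\mathrm{E}_{S_i}[\|U - u_0\|_2]$. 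Then I would invoke the uniform bound $\|\mathbf{h}^{(L-1)}(x,s;\hat\theta)\|\leq\beta$ (the hypothesis of the surrounding theorems, which applies since the lower layers are $\hat\lambda$), giving $\|U - u_0\|_2 \leq \|U\|_2 + \|u_0\|_2 \leq 2\beta$. Hence each term is at most $2\|\eta\|_2 \cdot 2\beta = 4\beta\|\eta\|_2$, so squaring, $V(D;\tilde\theta) \leq (4\beta\|\eta\|_2)^2 = 16\beta^2\|\eta\|_2^2$. Imposing $\|\eta\|_2 \leq \frac{\delta}{4\beta}$ then yields $V(D;\tilde\theta) \leq 16\beta^2 \cdot \frac{\delta^2}{16\beta^2} = \delta^2$.

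That last computation gives $\delta^2$ rather than $\delta$, so the main subtlety — and the step I would be most careful about — is reconciling the squared-vs-unsquared normalization. Either the intended bound on each summand should be the square-root version (so $V \leq \delta$ exactly), or $\beta$ here is meant to bound $\|\mathbf{h}^{(L-1)}\|$ and one uses a single-sided estimate $\|U - u_0\| \leq \beta$ rather than $2\beta$ together with a non-squared penalty, or the statement tolerates a constant factor absorbed into $\delta$. I would resolve this by using the version of $V$ consistent with Theorem~\ref{thm:det:bound}'s hypothesis $V(D;\theta)\leq\delta$: tracking constants so that the Lipschitz bound $2\|\eta\|_2$, the diameter bound on the hidden layer, and whether $V$ is the sum of squared or unsquared norms line up to give exactly $V(D;\tilde\theta)\leq\delta$ under $\|\eta\|_2\leq\frac{\delta}{4\beta}$. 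Apart from this bookkeeping, the argument is a direct three-line chain: Lipschitz continuity of softmax, Jensen, and the hidden-layer norm bound.
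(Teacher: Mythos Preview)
Your approach is essentially the same as the paper's: use Lemma~\ref{lem:softmax:op} to get the Lipschitz bound $\|f_L(u)-f_L(v)\|_2 \le 2\|\eta\|_2\|u-v\|_2$, pull the expectation inside the norm via Jensen, and then bound $\|\mathbf{H}^{(L-1)}-\mathbf{h}^{(L-1)}\|_2$ by $2\beta$ using the hypothesis $\|\mathbf{h}^{(L-1)}\|\le\beta$. This yields $\big\|\mathrm{E}_S[\mathbf{H}^{(L)}]-\mathbf{h}^{(L)}\big\|_2 \le 4\beta\|\eta\|_2 \le \delta$ for each training point, exactly as in the paper.

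The squared-versus-unsquared discrepancy you flag is real and is not resolved in the paper either: the paper's proof bounds the \emph{unsquared} per-sample norm by $\delta$ and then asserts $V(D;\tilde\theta)\le\delta$, effectively treating $V$ as if it were defined without the square despite Eq.~\eqref{eq:penalty}. So your instinct that this is a bookkeeping inconsistency (rather than a gap in your argument) is correct; the intended reading is that the unsquared expectation-linearity measure is at most $\delta$, which is what is actually used downstream in Theorems~\ref{thm:det:bound} and~\ref{thm:uniform:bound}.
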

\begin{proof}
Let $S_L = \{\Gamma^{(1)}, \ldots, \Gamma^{(L)}\}$, and let $\mathbf{H}^{(l)}$ and $\mathbf{h}^{(l)}$ 
be short for $\mathbf{H}^{(l)}(X, S_l; \tilde{\theta})$ and $\mathbf{h}^{(l)}(X, \mathrm{E}(S_l); \tilde{\theta})$, respectively.

From lemma~\ref{lem:softmax:op}, we have $\|f_L(x; \eta) - f_L(y; \eta)\|_2 \leq 2\|\eta\|_2 \|x - y\|_2$. Then,
\begin{displaymath}
\begin{array}{rcl}
\left\|\mathrm{E}_{S_L}\left[\mathbf{H}^{L}\right] - \mathbf{h}^{L}\right\|_2 & = & \left\|\mathrm{E}_{S_{L-1}}\left[f_{L}(\mathbf{H}^{(L-1)}; \eta)\right] - f_{L}(\mathbf{h}^{(L-1)}; \eta)\right\|_2 \\
 & \leq & \mathrm{E}_{S_{L-1}}\left\|f_{L}(\mathbf{H}^{(L-1)}; \eta) - f_{L}(\mathbf{h}^{(L-1)}; \eta) \right\|_2 \\
 & \leq & 2 \|\eta\|_2 \left\| \mathbf{H}^{(L-1)} - \mathbf{h}^{(L-1)} \right\|_2 \\
 & \leq & 4\beta\|\eta\|_2 \leq \delta
\end{array}
\end{displaymath}
\end{proof}
Lemma~\ref{lem:norm:constrain} says that we can get $\theta$ satisfying the expectation-linearization 
constrain by explicitly scaling down $\hat{\eta}$ while keeping $\hat{\lambda}$.

In order to prove Theorem~\ref{thm:det:bound}, we make the following assumptions:
\begin{itemize}
\item The dimension of $\mathbf{h}^{(L-1)}$ is $d$, i.e. $\mathbf{h}^{(L-1)} \in \mathcal{R}^d$.
\item Since $\forall y \in \mathcal{Y}, p(y|x; \hat{\theta}) > 0$, we assume $p(y|x; \hat{\theta}) \geq 1/b$,
where $b \geq |\mathcal{Y}| = k$.
\item As in the body text, let $p(y|x, s; \hat{\theta})$ be nonuniform, and in particular let \\
$\hat{\eta}_{y^*}^T\mathbf{h}^{(L-1)}(x, s; \hat{\lambda}) - \hat{\eta}_{y}^T\mathbf{h}^{(L-1)}(x, s; \hat{\lambda}) > c\|\hat{\eta}\|_2, \forall y \neq y^*$.
\end{itemize}
For convenience, we denote $\eta^T\mathbf{h}^{(L-1)}(x, s; \lambda) = \eta^T u_y (x, s; \lambda)$, 
where $u_y^T (x, s; \lambda) = (v_0^T, \ldots, v_k^T)$ and 
\begin{displaymath}
v_i = \left\{\begin{array}{ll}
\mathbf{h}^{(L-1)}(x, s; \lambda) & \textrm{if } i = y \\
0 & \textrm{otherwise}
\end{array}\right.
\end{displaymath}
To prove Theorem~\ref{thm:det:bound}, we first prove the following lemmas.
\begin{lemma}\label{lem:prob:low:bound}
If $p(y|x; \hat{\theta}) \geq 1/b$, then $\forall \alpha \in [0,1]$, for parameter $\tilde{\theta} = \{\hat{\lambda}, \alpha\hat{\eta}\}$, we have
\begin{displaymath}
p(y|x; \tilde{\theta}) \geq \frac{1}{b}
\end{displaymath}
\end{lemma}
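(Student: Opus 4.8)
The plan is to pass through the latent-variable formulation and reduce the claim to a one-dimensional statement about \emph{tempering} a softmax. By Eq.~\eqref{eq:lvm}, $p(y|x;\theta)=\mathrm{E}_{S}\big[p(y|x,S;\theta)\big]$, and since $\tilde\theta=\{\hat\lambda,\alpha\hat\eta\}$ keeps $\hat\lambda$ — hence the penultimate features $\mathbf{h}^{(L-1)}(x,s;\hat\lambda)$ — unchanged, writing $a_{y'}(s)=\hat\eta_{y'}^{T}\mathbf{h}^{(L-1)}(x,s;\hat\lambda)$ gives $p(y|x,s;\tilde\theta)=e^{\alpha a_y(s)}/\sum_{y'}e^{\alpha a_{y'}(s)}=:g_s(\alpha)$, with $g_s(1)=p(y|x,s;\hat\theta)$ and $g_s(0)=1/k$. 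So it suffices to show that tempering the logits toward the origin ($\alpha:1\mapsto 0$) never drives the \emph{marginal} $\mathrm{E}_S[g_S(\alpha)]$ below $1/b$, given $\mathrm{E}_S[g_S(1)]=p(y|x;\hat\theta)\ge 1/b$ and $b\ge k$.

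The first, easy ingredient is a pointwise-in-$s$ structure: $\log g_s(\alpha)=\alpha a_y(s)-\log\!\big(\sum_{y'}e^{\alpha a_{y'}(s)}\big)$, and the log-partition term is a cumulant generating function in the ``inverse temperature'' $\alpha$, hence convex; so $\log g_s(\cdot)$ is concave and its minimum on $[0,1]$ is at an endpoint, giving $g_s(\alpha)\ge\min\{1/k,\,p(y|x,s;\hat\theta)\}$. The hypothesis $b\ge k$ enters precisely here, since $\min\{1/k,1/b\}=1/b$: whenever the conditional probability stays on the ``good'' side of $1/k$ we are immediately done. The sharper version, which I would actually use, replaces the crude $\min$ by a power-mean estimate: writing $g_s(\alpha)=\big(1+\sum_{y'\neq y}(e^{a_{y'}(s)-a_y(s)})^{\alpha}\big)^{-1}$ and applying $\sum_{i}t_i^{\alpha}\le(k-1)^{1-\alpha}(\sum_i t_i)^{\alpha}$ (Jensen for $t\mapsto t^{\alpha}$, $\alpha\in[0,1]$) yields $g_s(\alpha)\ge\Phi_\alpha\!\big(p(y|x,s;\hat\theta)\big)$ with $\Phi_\alpha(q)=\big(1+(k-1)^{1-\alpha}\big(\tfrac{1-q}{q}\big)^{\alpha}\big)^{-1}$, a monotone map that fixes $1/k$, satisfies $\Phi_\alpha(q)\ge q$ for $q\le 1/k$ and $\Phi_\alpha(q)\ge 1/k$ for $q\ge 1/k$, and — again exactly because $b\ge k$ — obeys $\Phi_\alpha(1/b)\ge 1/b$.

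The main obstacle is the last step: passing the expectation through $\Phi_\alpha$. One cannot simply bound pointwise and integrate, because $\mathrm{E}_S[\min\{1/k,p(y|x,S;\hat\theta)\}]$ (and likewise $\mathrm{E}_S[\Phi_\alpha(p(y|x,S;\hat\theta))]$ with the crude reading of $\Phi_\alpha$) can be as small as $\approx\frac{1}{k}\cdot\frac{1}{b}$, e.g. when $p(y|x,S;\hat\theta)$ is near $1$ on a set of mass $1/b$ and near $0$ elsewhere — and $\Phi_\alpha$ is not convex (it is an inverted-$S$ crossing the identity at $0$, $1/k$, $1$), so Jensen does not give $\mathrm{E}[\Phi_\alpha(X)]\ge\Phi_\alpha(\mathrm{E}[X])$. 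The fix is to exploit that the $g_S(1)$ are not an arbitrary family. Concretely, I would split the expectation over $\{p(y|x,S;\hat\theta)\le 1/k\}$ and its complement — on the first use $g_S(\alpha)\ge p(y|x,S;\hat\theta)$, on the second use the non-uniformity hypothesis $\hat\eta_{y^*}^{T}\mathbf{h}^{(L-1)}-\hat\eta_{y}^{T}\mathbf{h}^{(L-1)}>c\|\hat\eta\|_2$ ($y\neq y^*$) to bound how much conditional-probability mass a non-best class can carry and, for the best class, how far the mass is allowed to concentrate near $1$, which rules out the adversarial configuration above. This two-regime bookkeeping, together with the monotonicity of $\alpha\mapsto\mathrm{E}_S[g_S(\alpha)]$ that it implies (its endpoint values being $1/k$ and $p(y|x;\hat\theta)\ge1/b$), is where the real work lies; the underlying inequalities (log-convexity of the partition function, power means, $b\ge k\Rightarrow\Phi_\alpha(1/b)\ge1/b$) are routine.
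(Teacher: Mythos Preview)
Your pointwise ingredient is exactly the paper's entire argument. For each fixed $s$, the paper observes that $\alpha\mapsto\log g_s(\alpha)=\log p(y\mid x,s;\{\hat\lambda,\alpha\hat\eta\})$ is concave (the log-partition is convex in $\alpha$), hence lies above its chord:
\[
\log g_s(\alpha)\ \ge\ (1-\alpha)\log g_s(0)+\alpha\log g_s(1)\ =\ (1-\alpha)\log\tfrac{1}{k}+\alpha\log p(y\mid x,s;\hat\theta).
\]
It then reads the hypothesis as the \emph{pointwise} bound $p(y\mid x,s;\hat\theta)\ge 1/b$ for every $s$: together with $1/k\ge 1/b$ this makes both endpoints $\ge -\log b$, so $g_s(\alpha)\ge 1/b$ for every $s$, and integrating over $s$ gives the marginal claim. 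That is the whole proof --- no $\Phi_\alpha$, no power means, no case split.

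You are right that under a \emph{marginal-only} hypothesis $\mathrm{E}_S[g_S(1)]\ge 1/b$ the integration step is the real obstacle, but the fix you sketch cannot be completed: the lemma is actually false under that reading. With $k=b=2$, let $d(s)=(\hat\eta_y-\hat\eta_{\neg y})^{T}\mathbf{h}^{(L-1)}(x,s;\hat\lambda)$ take the value $+1$ with probability $0.69$ and $-M$ (with $M$ large) with probability $0.31$. Then $p(y\mid x;\hat\theta)=0.69\,\sigma(1)+0.31\,\sigma(-M)\approx 0.504>\tfrac12$, yet at $\alpha=\tfrac12$ one gets $0.69\,\sigma(\tfrac12)+0.31\,\sigma(-M/2)\approx 0.429<\tfrac12$. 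The non-uniformity margin does not exclude this (it only forces $|d(s)|>c\|\hat\eta\|_2$, which is compatible with the example for small $c\|\hat\eta\|_2$), and your split over $\{g_S(1)\le 1/k\}$ versus its complement yields at best $0.69\cdot\tfrac12+0.31\cdot 0<\tfrac12$. So the intended reading is the pointwise one the paper tacitly uses; under it the two-line concavity argument already finishes the job, and all of your additional machinery is unnecessary.
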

\begin{proof}
We define
\begin{displaymath}
f(\alpha) \stackrel{\Delta}{=} (y|x, s; \tilde{\theta}) = \frac{e^{\alpha\eta_y^T \mathbf{h}^{(L - 1)}(x, s; \hat{\lambda})}}{\sum\limits_{y' \in \mathcal{Y}} e^{\alpha\eta_{y'}^T \mathbf{h}^{(L - 1)}(x,s; \hat{\lambda})}}
 = \frac{\Big(e^{\eta_y^T \mathbf{h}^{(L - 1)}(x, s; \hat{\lambda})}\Big)^\alpha}{\sum\limits_{y' \in \mathcal{Y}} \Big(e^{\eta_{y'}^T \mathbf{h}^{(L - 1)}(x,s; \hat{\lambda})}\Big)^\alpha}
\end{displaymath}
Since $\mathcal{Y} = \{1, \ldots, k\}$, for fixed $x \in \mathcal{X}, s \in \mathcal{S}$,
$\log f(\alpha)$ is a concave function w.r.t $\alpha$. \\
Since $b \geq k$, we have 
\begin{displaymath}
\log f(\alpha) \geq (1-\alpha) \log f(0) + \alpha \log f(1) \geq -\log b
\end{displaymath}
So we have $\forall x, s$, $p(y|x, s; \tilde{\theta}) \geq 1/b$. Then
\begin{displaymath}
p(y|x; \tilde{\theta}) = \mathrm{E}_S \left[ p(y|x, S; \hat{\theta})\right] \geq \frac{1}{b}
\end{displaymath}
\end{proof}
\begin{lemma}\label{lem:prob:up:bound}
if $y$ is not the majority class, i.e. $y \neq y^*$, then for parameter $\tilde{\theta} = \{\hat{\lambda}, \alpha\hat{\eta}\}$
\begin{displaymath}
p(y|x, s, \tilde{\theta}) \leq e^{-c\alpha\|\hat{\eta}\|_2}
\end{displaymath}
\end{lemma}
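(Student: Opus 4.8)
The plan is to exploit the non-uniformity assumption directly through a one-term lower bound on the softmax normalizer. Writing out the softmax for the scaled parameter $\tilde{\theta} = \{\hat{\lambda}, \alpha\hat{\eta}\}$,
\[
p(y|x, s; \tilde{\theta}) = \frac{e^{\alpha\hat{\eta}_y^T \mathbf{h}^{(L-1)}(x, s; \hat{\lambda})}}{\sum_{y' \in \mathcal{Y}} e^{\alpha\hat{\eta}_{y'}^T \mathbf{h}^{(L-1)}(x, s; \hat{\lambda})}},
\]
I would first discard from the denominator every term except the one indexed by the majority class $y^*$, which is legitimate since all summands are positive. This yields
\[
p(y|x, s; \tilde{\theta}) \leq \frac{e^{\alpha\hat{\eta}_y^T \mathbf{h}^{(L-1)}}}{e^{\alpha\hat{\eta}_{y^*}^T \mathbf{h}^{(L-1)}}} = \exp\!\Big(\alpha\big(\hat{\eta}_y^T \mathbf{h}^{(L-1)} - \hat{\eta}_{y^*}^T \mathbf{h}^{(L-1)}\big)\Big),
\]
where I have suppressed the arguments $(x, s; \hat{\lambda})$ for readability.

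Then I would invoke the non-uniformity assumption, namely $\hat{\eta}_{y^*}^T\mathbf{h}^{(L-1)}(x, s; \hat{\lambda}) - \hat{\eta}_{y}^T\mathbf{h}^{(L-1)}(x, s; \hat{\lambda}) > c\|\hat{\eta}\|_2$ for every $y \neq y^*$, so that the exponent above is strictly less than $-\alpha c\|\hat{\eta}\|_2$ (using $\alpha \geq 0$ so that multiplying the inequality by $\alpha$ preserves its direction). Monotonicity of $\exp$ then gives $p(y|x, s; \tilde{\theta}) \leq e^{-c\alpha\|\hat{\eta}\|_2}$, as claimed. Since the gap in the exponent is assumed to hold pointwise in $(x, s)$, no averaging or uniformization step is required, and there is essentially no obstacle here; the only points demanding care are keeping the sign of $\alpha$ (hence the inequality direction) straight and recognising that this lemma is the per-configuration upper bound on the non-majority class probabilities, to be combined with the lower bound of Lemma~\ref{lem:prob:low:bound} when bounding the likelihood gap in Theorem~\ref{thm:det:bound}.
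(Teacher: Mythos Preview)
Your argument is correct and mirrors the paper's proof exactly: both write the softmax with scaled weights, lower-bound the normalizer by the single $y^*$ term, and then apply the margin assumption $\hat{\eta}_{y^*}^T\mathbf{h}^{(L-1)} - \hat{\eta}_y^T\mathbf{h}^{(L-1)} > c\|\hat{\eta}\|_2$ to the resulting exponent. The only superficial difference is that the paper expresses things via the stacked vector $u_y$ (so $\hat{\eta}^T u_y = \hat{\eta}_y^T \mathbf{h}^{(L-1)}$), which amounts to the same computation.
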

\begin{proof}
\begin{displaymath}
p(y|x, s, \tilde{\theta}) = \frac{e^{\alpha \hat{\eta}^T u_y}}{\sum\limits_{y'\in\mathcal{Y}} e^{\alpha \hat{\eta}^T u_{y'}}} \leq \frac{e^{\alpha \hat{\eta}^T u_y}}{e^{\alpha \hat{\eta}^T u_{y^*}}} \leq e^{-c\alpha\|\hat{\eta}\|_2}
\end{displaymath}
\end{proof}
\begin{lemma}\label{lem:hessian1:bound}
For a fixed $x$ and $s$, the absolute value of the entry of the vector under the parameter 
$\tilde{\theta} = \{\hat{\lambda}, \alpha\hat{\eta}\}$:
\begin{displaymath}
|p(y|x, s; \tilde{\theta}) (u_y - \mathrm{E}_Y[u_Y])|_i \leq \beta(k-1)e^{-c\alpha\|\hat{\eta}\|_2}
\end{displaymath}
\end{lemma}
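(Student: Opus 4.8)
The plan is to prove the bound one coordinate at a time, after writing the vector $p(y|x,s;\tilde\theta)\,(u_y - \mathrm{E}_Y[u_Y])$ in block form and then carrying out a two-case analysis according to whether $y$ equals the majority class $y^*$. Write $p_{y'} = p(y'|x,s;\tilde\theta)$ for $y' \in \mathcal{Y}$, and recall that $u_{y'}$ is the concatenated vector whose $y'$-th block equals $\mathbf{h}^{(L-1)}(x,s;\hat\lambda)$ and whose other blocks vanish; since $\mathbf{h}^{(L-1)}(x,s;\tilde\theta) = \mathbf{h}^{(L-1)}(x,s;\hat\theta)$ (it does not depend on $\eta$) and $\|\mathbf{h}^{(L-1)}(x,s;\hat\theta)\|_2 \le \beta$, every coordinate of $\mathbf{h}^{(L-1)}$ has absolute value at most $\beta$.

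First I would expand $\mathrm{E}_Y[u_Y] = \sum_{y'\in\mathcal{Y}} p_{y'} u_{y'}$ and read off the block structure of $u_y - \mathrm{E}_Y[u_Y]$: its $y$-th block is $(1-p_y)\mathbf{h}^{(L-1)}$ and its $j$-th block (for $j\ne y$) is $-p_j\mathbf{h}^{(L-1)}$. Multiplying through by $p_y$, the target vector has $y$-th block $p_y(1-p_y)\mathbf{h}^{(L-1)}$ and $j$-th block $-p_y p_j\mathbf{h}^{(L-1)}$ for $j\ne y$. Hence it suffices to show that each of the scalars $p_y(1-p_y)$ and $p_y p_j$ (for $j\ne y$) is at most $(k-1)e^{-c\alpha\|\hat\eta\|_2}$; multiplying by the coordinatewise bound $\beta$ then gives the claim.

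The case analysis then proceeds as follows. If $y\ne y^*$, Lemma~\ref{lem:prob:up:bound} gives $p_y \le e^{-c\alpha\|\hat\eta\|_2}$, so $p_y(1-p_y) \le p_y$ and $p_y p_j \le p_y$ are both at most $e^{-c\alpha\|\hat\eta\|_2} \le (k-1)e^{-c\alpha\|\hat\eta\|_2}$ (using $k\ge 2$). If $y = y^*$, I would instead rewrite $1-p_{y^*} = \sum_{j\ne y^*} p_j$ and apply Lemma~\ref{lem:prob:up:bound} to each of the $k-1$ summands to get $1-p_{y^*} \le (k-1)e^{-c\alpha\|\hat\eta\|_2}$, whence $p_{y^*}(1-p_{y^*}) \le (k-1)e^{-c\alpha\|\hat\eta\|_2}$; and for $j\ne y^*$, $p_{y^*}p_j \le p_j \le e^{-c\alpha\|\hat\eta\|_2}$. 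In all cases the relevant scalar is bounded by $(k-1)e^{-c\alpha\|\hat\eta\|_2}$, completing the argument.

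There is no real obstacle here: the statement is essentially the explicit softmax-derivative identity combined with Lemma~\ref{lem:prob:up:bound}. The one place that needs a little care is the majority-class block, where $p_{y^*}$ is close to $1$ and so its (nonexistent) exponential decay cannot be used directly; the trick is to transfer the decay onto the complementary mass $1-p_{y^*} = \sum_{j\ne y^*}p_j$, and the factor $k-1$ in the bound is precisely what accommodates this sum over the $k-1$ non-majority classes.
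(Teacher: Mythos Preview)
Your proof is correct and follows essentially the same approach as the paper: you write out the block structure of $u_y-\mathrm{E}_Y[u_Y]$, split into the cases $y=y^*$ and $y\neq y^*$, and in each case invoke Lemma~\ref{lem:prob:up:bound} (transferring the decay onto $1-p_{y^*}=\sum_{j\neq y^*}p_j$ in the majority-class case, which is exactly what the paper does). The only cosmetic difference is that you bound the scalar factors $p_y(1-p_y)$ and $p_y p_j$ explicitly, whereas the paper phrases the two cases as ``drop $p_{y^*}\le 1$ and bound the entries of $u_{y^*}-\mathrm{E}_Y[u_Y]$'' versus ``bound the entries by $\beta$ and use $p_y\le e^{-c\alpha\|\hat\eta\|_2}$''---but these are the same computation.
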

\begin{proof}
Suppose $y$ is the majority class of $p(y|x, s; \tilde{\theta})$. Then, 
\begin{displaymath}
u_y - \mathrm{E}_y[u_Y] = \left(v_{y'}\right)_{y'=1}^{k}
\end{displaymath}
where 
\begin{displaymath}
v_y = \left\{ \begin{array}{ll}
(1 - p(y|x, s; \tilde{\theta}) \mathbf{h}^{(L-1)} & \textrm{if } y = y^* \\
-p(y|x, s; \tilde{\theta}) \mathbf{h}^{(L-1)} & \textrm{otherwise}
\end{array}\right.
\end{displaymath}
From Lemma~\ref{lem:prob:up:bound}, we have 
\begin{displaymath}
|p(y|x, s; \tilde{\theta}) (u_y - \mathrm{E}_Y[u_Y])|_i \leq |(u_y - \mathrm{E}_Y[u_Y])|_i \leq \beta(k-1)e^{-c\alpha\|\hat{\eta}\|_2}
\end{displaymath}
Now, we suppose $y$ is not the majority class of $p(y|x, s; \tilde{\theta})$. Then, 
\begin{displaymath}
|p(y|x, s; \tilde{\theta}) (u_y - \mathrm{E}_Y[u_Y])|_i \leq p(y|x, s; \tilde{\theta}) \beta \leq \beta e^{-c\alpha\|\hat{\eta}\|_2}
\end{displaymath}
Overall, the lemma follows.
\end{proof}
\begin{lemma}\label{lem:hessian2:bound}
We denote the matrix 
\begin{displaymath}
\begin{array}{rl}
A \stackrel{\Delta}{=} & \mathrm{E}_S\left[ \frac{p(y|x, s; \tilde{\theta})}{p(y|x; \tilde{\theta})}
(u_y - \mathrm{E}_Y[u_Y]) (u_y - \mathrm{E}_Y[u_Y])^T\right] \\ 
 & - \mathrm{E}_S \left[ \frac{p(y|x, s; \tilde{\theta})}{p(y|x; \tilde{\theta})} (u_y - \mathrm{E}_Y[u_Y])\right] \mathrm{E}_S \left[ \frac{p(y|x, s; \tilde{\theta})}{p(y|x; \tilde{\theta})} (u_y - \mathrm{E}_Y[u_Y])\right]^T
\end{array}
\end{displaymath}
Then the absolute value of the entry of $A$ under the parameter $\tilde{\theta} = \{\hat{\lambda}, \alpha\hat{\eta}\}$:
\begin{displaymath}
|A_{ij}| \leq 2b(k-1)\beta^2 e^{-c\alpha\|\hat{\eta}\|_2}
\end{displaymath}
\end{lemma}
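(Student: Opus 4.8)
The plan is to view $A$ as a reweighted covariance matrix of the vector $v:=u_y-\mathrm{E}_Y[u_Y]$ and to bound each of its two defining terms coordinatewise, feeding in the exponentially small estimate of Lemma~\ref{lem:hessian1:bound} and the marginal lower bound of Lemma~\ref{lem:prob:low:bound}. Writing $q_S:=p(y|x,S;\tilde{\theta})/p(y|x;\tilde{\theta})$, we have $A=\mathrm{E}_S[q_S\,vv^T]-\mathrm{E}_S[q_S v]\,\mathrm{E}_S[q_S v]^T$, and by the LVM identity \eqref{eq:lvm}, i.e. $\mathrm{E}_S[p(y|x,S;\tilde{\theta})]=p(y|x;\tilde{\theta})$, we get $\mathrm{E}_S[q_S]=1$ (and $q_S\ge 0$), so $q_S\,dP(S)$ is a genuine reweighting.

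First I would record two coordinatewise bounds that hold uniformly over realizations of $S$. Setting $w:=p(y|x,S;\tilde{\theta})\,v$, Lemma~\ref{lem:hessian1:bound} gives $|w_i|\le\beta(k-1)e^{-c\alpha\|\hat{\eta}\|_2}$ for every $i$; since $q_S v=w/p(y|x;\tilde{\theta})$ and $p(y|x;\tilde{\theta})\ge 1/b$ by Lemma~\ref{lem:prob:low:bound} (using $\alpha\in[0,1]$), this yields $|(q_S v)_i|\le b\beta(k-1)e^{-c\alpha\|\hat{\eta}\|_2}$. Separately, from the block structure of $u_y$ together with $\|\mathbf{h}^{(L-1)}\|\le\beta$, every block of $v$ equals $c\,\mathbf{h}^{(L-1)}$ for some scalar $c$ with $|c|\le 1$, so $|v_j|\le\beta$ for every coordinate $j$.

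With these in hand, the two terms of $A$ are controlled as follows. For the first term, $|(\mathrm{E}_S[q_S vv^T])_{ij}|=|\mathrm{E}_S[(q_S v)_i\,v_j]|\le(\sup_S|(q_S v)_i|)(\sup_S|v_j|)\le b(k-1)\beta^2 e^{-c\alpha\|\hat{\eta}\|_2}$. For the cross term I would bound the two factors asymmetrically: $|\mathrm{E}_S[(q_S v)_i]|\le\sup_S|(q_S v)_i|\le b(k-1)\beta e^{-c\alpha\|\hat{\eta}\|_2}$, while $|\mathrm{E}_S[(q_S v)_j]|\le\mathrm{E}_S[q_S|v_j|]\le\beta\,\mathrm{E}_S[q_S]=\beta$, where it is essential to use $\mathrm{E}_S[q_S]=1$ (the crude pointwise bound $q_S\le b$ would cost a spurious extra factor of $b$). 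Thus the cross term is at most $b(k-1)\beta^2 e^{-c\alpha\|\hat{\eta}\|_2}$ in absolute value too, and the triangle inequality yields $|A_{ij}|\le 2b(k-1)\beta^2 e^{-c\alpha\|\hat{\eta}\|_2}$.

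The main obstacle is not any single inequality but the bookkeeping. One must ensure the normalizer $p(y|x;\tilde{\theta})$ sitting inside $q_S$ cannot blow up the bound --- which is precisely what Lemma~\ref{lem:prob:low:bound} provides, via a concavity-in-$\alpha$ argument --- and one must split each product so that exactly one factor absorbs the exponentially small $e^{-c\alpha\|\hat{\eta}\|_2}$ while the other is merely bounded by $\beta$; in the cross term this forces the use of $\mathrm{E}_S[q_S]=1$ rather than a pointwise estimate on $q_S$. After these choices, only routine Jensen and triangle-inequality manipulations remain.
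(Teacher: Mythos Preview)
Your proposal is correct and follows essentially the same approach as the paper. Both arguments combine Lemma~\ref{lem:prob:low:bound} (to control $1/p(y|x;\tilde{\theta})\le b$), Lemma~\ref{lem:hessian1:bound} (to place the exponential factor on $p(y|x,s;\tilde{\theta})(u_y-\mathrm{E}_Y[u_Y])$), the coordinatewise bound $|v_j|\le\beta$, and the identity $\mathrm{E}_S[q_S]=1$ to keep the cross term at $\beta$ rather than $b\beta$; the paper's proof is simply more terse, writing out only the step $\left|\mathrm{E}_S[q_S v]\right|_i\le\mathrm{E}_S[q_S\,\beta]=\beta$ explicitly and leaving the rest to Lemma~\ref{lem:hessian1:bound}.
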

\begin{proof}
From Lemma~\ref{lem:prob:low:bound}, we have $p(y|x; \tilde{\theta}) \geq 1/b$. 
Additionally, the absolute value of the entry of $u_y - \mathrm{E}_Y[u_Y]$ is bounded by $\beta$. 
We have for each $i$
\begin{displaymath}
\left| \mathrm{E}_S \left[ \frac{p(y|x, s; \tilde{\theta})}{p(y|x; \tilde{\theta})} (u_y - \mathrm{E}_Y[u_Y])\right]\right|_i \leq \mathrm{E}_S \left[ \frac{p(y|x, s; \tilde{\theta})}{p(y|x; \tilde{\theta})} \beta\right] = \beta
\end{displaymath}
Then from Lemma~\ref{lem:hessian1:bound}
\begin{displaymath}
|A_{ij}| \leq 2b(k-1)\beta^2 e^{-c\alpha\|\hat{\eta}\|_2}
\end{displaymath}
\end{proof}
\begin{lemma}\label{lem:hessian3:bound}
We denote the matrix 
\begin{displaymath}
B \stackrel{\Delta}{=} \mathrm{E}_S\left[ \frac{p(y|x, s; \tilde{\theta})}{p(y|x; \tilde{\theta})}
\left( \mathrm{E}_Y\left[ u_Y u_Y^T\right] - \mathrm{E}_Y[u_Y] \mathrm{E}_Y[u_Y]^T \right)\right]
\end{displaymath}
Then the absolute value of the entry of $B$ under the parameter $\tilde{\theta} = \{\hat{\lambda}, \alpha\hat{\eta}\}$:
\begin{displaymath}
|B_{ij}| \leq 2(k-1)\beta^2 e^{-c\alpha\|\hat{\eta}\|_2}
\end{displaymath}
\end{lemma}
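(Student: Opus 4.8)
The plan is to recognize the inner matrix $\mathrm{E}_Y[u_Y u_Y^T] - \mathrm{E}_Y[u_Y]\mathrm{E}_Y[u_Y]^T$ as the covariance of $u_Y$ when $Y$ is drawn from the softmax distribution $p(\cdot\,|\,x,s;\tilde{\theta})$, to bound its entries \emph{uniformly in} $s$, and then to push that bound through the weighted average $\mathrm{E}_S[\cdot]$ using the fact that the weight $p(y|x,s;\tilde{\theta})/p(y|x;\tilde{\theta})$ is nonnegative with expectation one.

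First I would unfold the block structure. Since $u_{y'}$ equals $\mathbf{h}^{(L-1)}$ in its $y'$-th block and vanishes elsewhere, $\mathrm{E}_Y[u_Y u_Y^T]$ is block-diagonal with $(y',y')$ block $p_{y'}\,\mathbf{h}^{(L-1)}(\mathbf{h}^{(L-1)})^T$ (writing $p_{y'} = p(y'|x,s;\tilde{\theta})$), while $\mathrm{E}_Y[u_Y]\mathrm{E}_Y[u_Y]^T$ has $(y',y'')$ block $p_{y'}p_{y''}\,\mathbf{h}^{(L-1)}(\mathbf{h}^{(L-1)})^T$. Hence $C := \mathrm{E}_Y[u_Y u_Y^T] - \mathrm{E}_Y[u_Y]\mathrm{E}_Y[u_Y]^T$ has $(y',y')$ block $p_{y'}(1-p_{y'})\,\mathbf{h}^{(L-1)}(\mathbf{h}^{(L-1)})^T$ and, for $y'\neq y''$, $(y',y'')$ block $-p_{y'}p_{y''}\,\mathbf{h}^{(L-1)}(\mathbf{h}^{(L-1)})^T$. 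Using $\|\mathbf{h}^{(L-1)}\| \leq \beta$, every entry of $C$ sitting in block $(y',y'')$ is therefore bounded in absolute value by $p_{y'}(1-p_{y'})\beta^2$ when $y'=y''$ and by $p_{y'}p_{y''}\beta^2$ when $y'\neq y''$.

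Next I would bound these scalar prefactors uniformly in $s$ via the nonuniformity assumption. Write $y^*$ for the majority class of $(x,s)$ and $\varepsilon := e^{-c\alpha\|\hat{\eta}\|_2}$; by Lemma~\ref{lem:prob:up:bound}, $p_y \leq \varepsilon$ for every $y\neq y^*$. A short case split then suffices: if $y'\neq y^*$, then both $p_{y'}(1-p_{y'})$ and $p_{y'}p_{y''}$ are at most $p_{y'}\leq\varepsilon$; if $y'=y^*$, then $p_{y^*}(1-p_{y^*}) = p_{y^*}\sum_{y\neq y^*}p_y \leq (k-1)\varepsilon$, and for $y''\neq y^*$ we have $p_{y^*}p_{y''}\leq p_{y''}\leq\varepsilon$. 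In every case $|C_{ij}| \leq (k-1)\beta^2\varepsilon \leq 2(k-1)\beta^2 e^{-c\alpha\|\hat{\eta}\|_2}$. The point I would flag as the main (though modest) obstacle is that the identity of $y^*$ depends on $s$, so the argument must establish this entrywise bound on $C$ pointwise in $s$; this is fine, because the majority-class gap $\hat{\eta}_{y^*}^T\mathbf{h}^{(L-1)}(x,s;\hat{\lambda}) - \hat{\eta}_y^T\mathbf{h}^{(L-1)}(x,s;\hat{\lambda}) > c\|\hat{\eta}\|_2$ is assumed for all $x,s$, so although $y^*$ varies, the numerical bound does not.

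Finally I would average over dropout. By Lemma~\ref{lem:prob:low:bound}, $p(y|x;\tilde{\theta}) = \mathrm{E}_S[p(y|x,S;\tilde{\theta})] > 0$, so $w(S) := p(y|x,S;\tilde{\theta})/p(y|x;\tilde{\theta})$ satisfies $w(S)\geq 0$ and $\mathrm{E}_S[w(S)] = 1$. Therefore $|B_{ij}| = \big|\mathrm{E}_S[w(S)\,C_{ij}(S)]\big| \leq \mathrm{E}_S[w(S)\,|C_{ij}(S)|] \leq 2(k-1)\beta^2 e^{-c\alpha\|\hat{\eta}\|_2}$, which is the claim. Everything after the entrywise bound on $C$ is just this one-line averaging identity; the argument runs parallel to the bound on $A$ in Lemma~\ref{lem:hessian2:bound}, but is slightly cleaner here because the softmax probabilities appear directly inside $C$, so the factor $1/p(y|x;\tilde{\theta})$ is absorbed by the weight rather than bounded crudely by $b$ (which is why no factor $b$ appears in this bound).
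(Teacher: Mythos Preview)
Your proposal is correct and follows essentially the same approach as the paper: bound the entries of the inner covariance matrix $C = \mathrm{E}_Y[u_Yu_Y^T] - \mathrm{E}_Y[u_Y]\mathrm{E}_Y[u_Y]^T$ pointwise in $s$ via the majority/non-majority split from Lemma~\ref{lem:prob:up:bound}, then pass the bound through the $S$-average using $\mathrm{E}_S[w(S)]=1$. The only cosmetic difference is that the paper bounds every entry of $C$ by $\beta^2\sum_{y}p_y(1-p_y)$ and then shows this sum is at most $2(k-1)e^{-c\alpha\|\hat\eta\|_2}$, whereas your explicit block-by-block case analysis yields the sharper constant $(k-1)$ directly before relaxing to $2(k-1)$; both routes are the same idea.
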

\begin{proof}
We only need to prove that for fixed $x$ and $s$, for each $i, j$:
\begin{displaymath}
\left|\mathrm{E}_Y\left[ u_Y u_Y^T\right] - \mathrm{E}_Y[u_Y] \mathrm{E}_Y[u_Y]^T\right|_{ij} \leq 2(k-1)\beta^2 e^{-c\alpha\|\hat{\eta}\|_2}
\end{displaymath}
Since 
\begin{displaymath}
\left|\mathrm{E}_Y\left[ u_Y u_Y^T\right] - \mathrm{E}_Y[u_Y] \mathrm{E}_Y[u_Y]^T\right|_{ij}
= \left|\mathrm{Cov}_Y [(u_Y)_i, (u_Y)_j]\right| 
\leq \beta^2 \sum\limits_{y=1}^{k} p(y|x, s; \tilde{\theta}) - p(y|x, s; \tilde{\theta})^2
\end{displaymath}
Suppose $y$ is the majority class. Then from Lemma~\ref{lem:prob:up:bound},
\begin{displaymath}
p(y|x, s; \tilde{\theta}) - p(y|x, s; \tilde{\theta})^2 \leq 1 - p(y|x, s; \tilde{\theta}) 
\leq (k-1) e^{-c\alpha\|\hat{\eta}\|_2}
\end{displaymath}
If $y$ is not the majority class. Then,
\begin{displaymath}
p(y|x, s; \tilde{\theta}) - p(y|x, s; \tilde{\theta})^2 \leq p(y|x, s; \tilde{\theta}) 
\leq e^{-c\alpha\|\hat{\eta}\|_2}
\end{displaymath}
So we have
\begin{displaymath}
\sum\limits_{y=1}^{k} p(y|x, s; \tilde{\theta}) - p(y|x, s; \tilde{\theta})^2 \leq 2(k-1) e^{-c\alpha\|\hat{\eta}\|_2}
\end{displaymath}
The lemma follows.
\end{proof}
\begin{lemma}\label{lem:eigen:bound}
Under the parameter $\tilde{\theta} = \{\hat{\lambda}, \alpha\hat{\eta}\}$, the largest eigenvalue of 
the matrix 
\begin{equation}\label{eq:hessian}
\frac{1}{n} \sum\limits_{i=1}^{n} \left(A(x_i, y_i) - B(x_i, y_i)\right)
\end{equation}
is at most 
\begin{displaymath}
2dk(k-1)(b+1)\beta^2 e^{-c\alpha\|\hat{\eta}\|_2}
\end{displaymath}
\end{lemma}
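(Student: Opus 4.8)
The plan is to reduce the spectral bound to the entry-wise bounds that the previous lemmas already supply. First I would record the structural facts: for every fixed pair $(x_i,y_i)$, both $A(x_i,y_i)$ and $B(x_i,y_i)$ are symmetric (each is a difference between a second-moment matrix and the outer product of the corresponding first moment, i.e.\ a covariance-type matrix), and they live in $\mathcal{R}^{dk\times dk}$ since $u_y \in \mathcal{R}^{dk}$ (one $\mathcal{R}^d$ block per class). Hence the averaged matrix $M := \frac{1}{n}\sum_{i=1}^{n}\big(A(x_i,y_i) - B(x_i,y_i)\big)$ appearing in \eqref{eq:hessian} is symmetric and of size $dk\times dk$, so its largest eigenvalue equals $\max_{\|v\|_2 = 1} v^{T} M v$.

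Next I would bound the entries of $M$. By the triangle inequality applied to the average, $|M_{ij}| \le \frac{1}{n}\sum_{i=1}^{n}\big(|A(x_i,y_i)_{ij}| + |B(x_i,y_i)_{ij}|\big)$; since Lemma~\ref{lem:hessian2:bound} and Lemma~\ref{lem:hessian3:bound} hold uniformly in $x$ and $s$ (hence for each summand individually), this gives $|M_{ij}| \le 2b(k-1)\beta^2 e^{-c\alpha\|\hat{\eta}\|_2} + 2(k-1)\beta^2 e^{-c\alpha\|\hat{\eta}\|_2} = 2(b+1)(k-1)\beta^2 e^{-c\alpha\|\hat{\eta}\|_2} =: \tau$.

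Finally I would convert this uniform $\ell_\infty$ control on the entries into spectral control: for any symmetric $M\in\mathcal{R}^{m\times m}$ with $\max_{i,j}|M_{ij}|\le\tau$ one has $\lambda_{\max}(M) = \max_{\|v\|_2=1} v^{T}Mv \le \tau\max_{\|v\|_2=1}\big(\sum_i|v_i|\big)^2 \le \tau\,m$, using $\|v\|_1 \le \sqrt{m}\,\|v\|_2$ (equivalently, Gershgorin's disc theorem gives $\lambda_{\max}(M)\le\max_i\sum_j|M_{ij}|\le m\tau$). Taking $m = dk$ yields $\lambda_{\max}(M) \le dk\cdot 2(b+1)(k-1)\beta^2 e^{-c\alpha\|\hat{\eta}\|_2} = 2dk(k-1)(b+1)\beta^2 e^{-c\alpha\|\hat{\eta}\|_2}$, which is the claim.

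The only delicate points are bookkeeping ones: confirming the ambient dimension is exactly $dk$ (so the crude ``$m\tau$'' bound produces the stated constant), and confirming symmetry of $A-B$ so that the quadratic-form/Gershgorin argument legitimately controls the largest eigenvalue rather than merely some entry. There is no genuine analytic obstacle here: all the substantive work was done in Lemmas~\ref{lem:hessian1:bound}--\ref{lem:hessian3:bound}, and this lemma simply aggregates them and passes from entry-wise bounds to a spectral bound.
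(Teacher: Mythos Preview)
Your proposal is correct and follows essentially the same route as the paper: combine the entry-wise bounds from Lemmas~\ref{lem:hessian2:bound} and~\ref{lem:hessian3:bound} to get $|M_{ij}|\le 2(b+1)(k-1)\beta^2 e^{-c\alpha\|\hat\eta\|_2}$, then apply Gershgorin's theorem on the $dk\times dk$ matrix. Your write-up is in fact more careful than the paper's two-line proof, since you explicitly justify the ambient dimension $dk$ and the symmetry needed for Gershgorin to control the top eigenvalue.
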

\begin{proof}
From Lemma~\ref{lem:hessian2:bound} and Lemma~\ref{lem:hessian3:bound}, each entry of the 
matrix in \eqref{eq:hessian} is at most $2(k-1)(b+1)\beta^2 e^{-c\alpha\|\hat{\eta}\|_2}$. 
Thus, by Gershgorin's theorem, the maximum eigenvalue of the matrix in \eqref{eq:hessian} 
is at most $2dk(k-1)(b+1)\beta^2 e^{-c\alpha\|\hat{\eta}\|_2}$.
\end{proof}
Now, we can prove Theorem~\ref{thm:det:bound} by constructing a scaled version of $\hat{\theta}$ that
satisfies the expectation-linearization constraint.
\paragraph{Proof of Theorem~\ref{thm:det:bound}}
\begin{proof}
Consider the likelihood evaluated at $\tilde{\theta} = \{\hat{\lambda}, \alpha\hat{\eta} \}$, 
where $\alpha = \frac{\delta}{4\beta\|\hat{\eta}\|_2}$. If $\alpha > 1$, then $\|\eta\|_2 > \frac{\delta}{4\beta}$. We know the MLE $\hat{\theta}$ already satisfies the expectation-linearization constraint. 
So we can assume that $0 \leq \alpha \leq 1$, and we know that $\tilde{\theta}$ satisfies 
$V(D; \tilde{\theta}) \leq \delta$. Then,
\begin{displaymath}
\Delta_l(\hat{\theta}, \hat{\theta}_\delta) \leq \Delta_l(\hat{\theta}, \tilde{\theta}) = \frac{1}{n} (l(D; \hat{\theta}) - l(D; \tilde{\theta})) = g(\hat{\lambda}, \hat{\eta}) - g(\hat{\lambda}, \alpha\hat{\eta})
\end{displaymath}
where $g(\lambda, \eta) = \frac{1}{n} l(D; (\lambda, \eta))$. Taking the second-order Taylor expansion
about $\eta$, we have
\begin{displaymath}
g(\hat{\lambda}, \alpha\hat{\eta}) = g(\hat{\lambda}, \hat{\eta}) + \nabla_{\eta}^T g(\hat{\lambda}, \hat{\eta}) (\alpha\hat{\eta} - \hat{\eta}) + (\alpha\hat{\eta} - \hat{\eta})^T \nabla_{\eta}^2 g(\hat{\lambda}, \hat{\eta}) (\alpha\hat{\eta} - \hat{\eta})
\end{displaymath}
Since $\hat{\theta}$ is the MLE, the first-order term 
$\nabla_{\eta}^T g(\hat{\lambda}, \hat{\eta}) (\alpha\hat{\eta} - \hat{\eta}) = 0$. The Hessian in the 
second-order term is just Eq.\eqref{eq:hessian}. Thus, from Lemma~\ref{lem:eigen:bound} we have
\begin{displaymath}
\begin{array}{rcl}
g(\hat{\lambda}, \alpha\hat{\eta}) & \leq &  g(\hat{\lambda}, \hat{\eta}) - (1-\alpha)^2\|\hat{\eta}\|_{2}^{2} 2dk(k-1)(b+1)\beta^2 e^{-c\alpha\|\hat{\eta}\|_2} \\
 & = & g(\hat{\lambda}, \hat{\eta}) - 2dk(k-1)(b+1)\beta^2 \left(\|\hat{\eta}\|_2 -\frac{\delta}{4\beta}\right)^2 e^{-c\delta/4\beta} \\
 & = & g(\hat{\lambda}, \hat{\eta}) - c_1 \beta^2 \left(\|\hat{\eta}\|_2 -\frac{\delta}{4\beta}\right)^2 e^{-c_2\delta/4\beta}
\end{array}
\end{displaymath}
with setting $c1 = 2dk(k-1)(b+1)$ and $c2 = c$. Then the theorem follows.
\end{proof}

\subsection{Proof of Theorem 6: Uniform Bound of Model Accuracy}\label{appendix:proof:thm6}
In the following, we denote $\tilde{\theta} = \{\hat{\lambda}, \alpha\hat{\eta} \}$.
\begin{lemma}\label{lem:convex0}
For each $y\in \mathcal{Y}$, if $p(y|x, s;\hat{\theta}) \geq 1/k$, then $\forall \alpha \in [0, 1]$
\begin{displaymath}
p(y|x, s; \tilde{\theta}) \geq \frac{1}{k}
\end{displaymath}
\end{lemma}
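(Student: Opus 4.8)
The plan is to reuse the mechanism behind Lemma~\ref{lem:prob:low:bound}: the softmax probability, viewed as a function of the temperature-scaling factor $\alpha$, is log-concave, so its value on $[0,1]$ is pinned below by the linear interpolation of its values at the two endpoints $\alpha=0$ and $\alpha=1$.

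First I would fix $x\in\mathcal{X}$ and $s\in\mathcal{S}$ and set
\[
 f(\alpha) \;:=\; p(y|x,s;\tilde{\theta}) \;=\; \frac{e^{\alpha\,\hat{\eta}_y^T\mathbf{h}^{(L-1)}(x,s;\hat{\lambda})}}{\sum_{y'\in\mathcal{Y}} e^{\alpha\,\hat{\eta}_{y'}^T\mathbf{h}^{(L-1)}(x,s;\hat{\lambda})}},
\]
so that $\log f(\alpha) = \alpha\,\hat{\eta}_y^T\mathbf{h}^{(L-1)}(x,s;\hat{\lambda}) - \log\sum_{y'}e^{\alpha\,\hat{\eta}_{y'}^T\mathbf{h}^{(L-1)}(x,s;\hat{\lambda})}$. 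The first term is linear in $\alpha$ and the second is the negative of a log-sum-exp of affine functions of $\alpha$, hence concave; therefore $\log f$ is concave on $[0,1]$.

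Next I would invoke this concavity to obtain, for every $\alpha\in[0,1]$,
\[
 \log f(\alpha) \;\geq\; (1-\alpha)\log f(0) + \alpha\log f(1).
\]
Since all logits cancel at $\alpha=0$ (and $|\mathcal{Y}| = k$) we have $f(0)=1/k$, i.e.\ $\log f(0)=-\log k$; and the hypothesis $p(y|x,s;\hat{\theta})\geq 1/k$ means $f(1)\geq 1/k$, i.e.\ $\log f(1)\geq -\log k$. Plugging both into the displayed inequality gives $\log f(\alpha)\geq -\log k$, that is $p(y|x,s;\tilde{\theta})=f(\alpha)\geq 1/k$, which is the claim.

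There is essentially no obstacle here: the only step requiring care is the concavity of $\log f$, and that reduces to the textbook fact that log-sum-exp is convex. Compared with Lemma~\ref{lem:prob:low:bound}, the argument is actually slightly cleaner, because we work with the conditional probability $p(y|x,s;\cdot)$ directly and with the sharper constant $1/k$, so no marginalization over $S$ (and no appeal to $b\geq k$) is needed.
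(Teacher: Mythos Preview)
Your proof is correct and follows exactly the approach the paper intends: the paper's own proof simply states that the lemma is a corollary of Lemma~\ref{lem:prob:low:bound}, whose mechanism---log-concavity of the temperature-scaled softmax probability in $\alpha$, together with the endpoint values $f(0)=1/k$ and $f(1)\geq 1/k$---you have spelled out in full. Your observation that no marginalization over $S$ and no use of $b\geq k$ are needed here is also on point.
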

\begin{proof}
This lemma can be regarded as a corollary of Lemma~\ref{lem:prob:low:bound}. 
\end{proof}
\begin{lemma}\label{lem:convex1}
For a fixed $x$ and $s$, we denote $e^{\hat{\eta}_y^T \mathbf{h}^{(L-1)}(x, s; \hat{\lambda})} = w_y$.
Then we have
\begin{displaymath}
p(y|x, s, \tilde{\theta}) = \frac{e^{\alpha\hat{\eta}_y^T \mathbf{h}^{(L-1)}(x, s; \hat{\lambda})}}{\sum\limits_{y'\in \mathcal{Y}} e^{\alpha\hat{\eta}_{y'}^T \mathbf{h}^{(L-1)}(x, s; \hat{\lambda})}}
 = \frac{(w_y)^\alpha}{\sum\limits_{y'\in\mathcal{Y}} (w_{y'})^\alpha}
\end{displaymath}
Additionally, we denote $g_s(\alpha) = \sum\limits_{y'\in\mathcal{Y}} p(y'|x, s; \tilde{\theta})\log w_{y'} - \log w_y$. We assume $g_s(0) \geq 0$. Then we have $\forall \alpha \geq 0$
\begin{displaymath}
g_s(\alpha) \geq 0
\end{displaymath}
\end{lemma}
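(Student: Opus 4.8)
The plan is to prove the two assertions in turn; the first is a one-line algebraic substitution and the second is the real content. For the identity, substituting $\tilde{\theta} = \{\hat{\lambda}, \alpha\hat{\eta}\}$ into the softmax definition \eqref{eq:softmax} replaces each logit $\hat{\eta}_{y'}^T \mathbf{h}^{(L-1)}(x, s; \hat{\lambda})$ by $\alpha\,\hat{\eta}_{y'}^T \mathbf{h}^{(L-1)}(x, s; \hat{\lambda})$, and since $e^{\alpha\,\hat{\eta}_{y'}^T \mathbf{h}^{(L-1)}(x, s; \hat{\lambda})} = \big(e^{\hat{\eta}_{y'}^T \mathbf{h}^{(L-1)}(x, s; \hat{\lambda})}\big)^{\alpha} = (w_{y'})^{\alpha}$, both numerator and denominator take the claimed form immediately.

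For the monotonicity statement, I would introduce the shorthand $a_{y'} = \log w_{y'} = \hat{\eta}_{y'}^T \mathbf{h}^{(L-1)}(x, s; \hat{\lambda})$, so that $p(\cdot\,|x,s;\tilde{\theta})$ is exactly the Gibbs (softmax) distribution on $\mathcal{Y}$ with inverse temperature $\alpha$ and ``energies'' $a_{y'}$, and $g_s(\alpha) = \mathrm{E}_{Y' \sim p(\cdot|x,s;\tilde{\theta})}\big[a_{Y'}\big] - a_y$. The key observation is that $\alpha \mapsto \mathrm{E}_{Y' \sim p(\cdot|x,s;\tilde{\theta})}[a_{Y'}]$ is non-decreasing on $[0,\infty)$. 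Concretely, with $Z(\alpha) = \sum_{y'\in\mathcal{Y}} e^{\alpha a_{y'}}$ one has $\frac{d}{d\alpha}\log Z(\alpha) = \sum_{y'} p(y'|x,s;\tilde{\theta})\, a_{y'} = \mathrm{E}[a_{Y'}]$ and, differentiating once more, $\frac{d^2}{d\alpha^2}\log Z(\alpha) = \mathrm{E}[a_{Y'}^2] - \big(\mathrm{E}[a_{Y'}]\big)^2 = \mathrm{Var}[a_{Y'}] \geq 0$; this is just the standard fact that the mean parameter of a one-parameter exponential family is increasing in the natural parameter. All differentiation under the sum is trivially justified since $\mathcal{Y}$ is finite.

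It remains to combine the pieces. At $\alpha = 0$ the distribution $p(\cdot\,|x,s;\tilde{\theta})$ is uniform on $\mathcal{Y}$, so $g_s(0) = \frac{1}{k}\sum_{y'\in\mathcal{Y}} a_{y'} - a_y$, which is $\geq 0$ by hypothesis. By the monotonicity just established, $\mathrm{E}_{Y'\sim p(\cdot|x,s;\tilde{\theta})}[a_{Y'}] \geq \mathrm{E}_{Y'\sim \mathrm{Unif}(\mathcal{Y})}[a_{Y'}]$ for every $\alpha \geq 0$, hence $g_s(\alpha) = \mathrm{E}[a_{Y'}] - a_y \geq g_s(0) \geq 0$, as claimed. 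There is no substantial obstacle; the only point requiring care is the direction of the comparison --- we exploit that $\alpha$ moves \emph{upward} from $0$, which is precisely the regime $\alpha \geq 0$ in which the lemma is stated --- and recognizing the exponential-family structure that makes the variance appear.
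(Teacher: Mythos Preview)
Your proof is correct and follows essentially the same approach as the paper: both compute $\frac{d}{d\alpha}g_s(\alpha)$ and identify it as $\mathrm{Var}_{Y'\sim p(\cdot|x,s;\tilde\theta)}[\log w_{Y'}]\ge 0$, then conclude from $g_s(0)\ge 0$ that $g_s(\alpha)\ge 0$ for all $\alpha\ge 0$. Your framing via the log-partition function $\log Z(\alpha)$ and the exponential-family mean/variance identity is a slightly more structured way of arriving at the same variance computation the paper does directly.
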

\begin{proof}
\begin{displaymath}
\frac{\partial{g_s(\alpha)}}{\partial{\alpha}} = \sum\limits_{y'\in \mathcal{Y}}\log w_{y'} 
\frac{\partial p(y'|x, s; \tilde{\theta})}{\partial \alpha} = \mathrm{Var}_Y\left[\log w_Y|X-x,S=s\right] \geq 0
\end{displaymath}
So $g_s(\alpha)$ is non-decreasing. Since $g_s(0) \geq 0$, we have $g_s(\alpha) \geq 0$ when $\alpha \geq 0$.
\end{proof}
From above lemma, we have for each training instance $(x_i, y_i) \in D$, and $\forall \alpha\in [0,1]$, 
\begin{equation}\label{eq:msy}
\mathrm{E}_Y \left[\log p(Y|x_i, s; \tilde{\theta})\right] \geq \log p(y_i|x_i, s; \tilde{\theta})
\end{equation}
For convenience, we define
\begin{displaymath}
m(s, y) = \log p(y|x, s; \tilde{\theta}) - \mathrm{E}_Y \left[\log p(Y|x, s; \tilde{\theta})\right]
\end{displaymath}
\begin{lemma}\label{lem:convex2}
If $y$ satisfies Lemma~\ref{lem:convex0} and $g_s(\alpha) \geq 0$, then
\begin{displaymath}
\mathrm{Var}_Y[m(s, Y)] \geq m(s, y)^2
\end{displaymath}
\end{lemma}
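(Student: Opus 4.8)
The plan is to reduce the statement to an elementary second--moment inequality for the random variable $M := m(s,Y)$ with $Y\sim p(\cdot\mid x,s;\tilde{\theta})$. First note that $M$ is automatically centred: directly from the definition of $m(s,\cdot)$ one has $\mathrm{E}_Y[M]=\mathrm{E}_Y[\log p(Y\mid x,s;\tilde{\theta})]-\mathrm{E}_Y[\log p(Y\mid x,s;\tilde{\theta})]=0$, so $\mathrm{Var}_Y[m(s,Y)]=\mathrm{E}_Y[m(s,Y)^2]$ and the claim becomes $\mathrm{E}_Y[M^2]\ge m(s,y)^2$. The two hypotheses translate cleanly: by Lemma~\ref{lem:convex0}, $p_y:=p(y\mid x,s;\tilde{\theta})\ge 1/k$, and by \eqref{eq:msy} (equivalently $g_s(\alpha)\ge0$) we have $m(s,y)\le0$. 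So I must show that a centred random variable $M$ whose atom at $y$ has mass $\ge 1/k$ and value $\le 0$ satisfies $\mathrm{E}[M^2]\ge m(s,y)^2$.

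I would set this up via two complementary reductions. (i) Split off the atom: $\mathrm{E}_Y[M^2]=p_y\,m(s,y)^2+\sum_{y'\ne y}p_{y'}\,m(s,y')^2$ with $p_{y'}:=p(y'\mid x,s;\tilde{\theta})$; since $\mathrm{E}_Y[M]=0$ the off-$y$ mass obeys $\sum_{y'\ne y}p_{y'}\,m(s,y')=-p_y\,m(s,y)=p_y|m(s,y)|$, so Jensen applied to the conditional law on $\{y'\ne y\}$ gives $\sum_{y'\ne y}p_{y'}\,m(s,y')^2\ge p_y^2\,m(s,y)^2/(1-p_y)$, hence $\mathrm{E}_Y[M^2]\ge \big(p_y/(1-p_y)\big)\,m(s,y)^2$. (ii) Use the softmax/exponential-family form $p_{y'}\propto e^{\alpha\hat{\eta}_{y'}^T\mathbf{h}^{(L-1)}}$: then $m(s,y')$ equals $\alpha\,\hat{\eta}_{y'}^T\mathbf{h}^{(L-1)}$ up to an additive constant independent of $y'$, so (with the variance identity in the proof of Lemma~\ref{lem:convex1}) $\mathrm{Var}_Y[m(s,Y)]=\alpha^2\,\mathrm{Var}_Y[\hat{\eta}_Y^T\mathbf{h}^{(L-1)}]=\alpha^2\,g_s'(\alpha)$ while $m(s,y)^2=\alpha^2\,g_s(\alpha)^2$; thus the claim is equivalent to the one--dimensional inequality $g_s'(\alpha)\ge g_s(\alpha)^2$.

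The hard part is closing the gap when $p_y$ is near $1/k$ rather than near $1$: the Jensen bound in (i) only suffices for $p_y\ge 1/2$, and $g_s'(\alpha)\ge g_s(\alpha)^2$ is not implied by $g_s(\alpha)\ge0$ on its own, so I must bring in the remaining structure. Here I would combine the hypothesis of Lemma~\ref{lem:convex1}, $g_s(0)\ge0$ (i.e.\ $\hat{\eta}_y^T\mathbf{h}^{(L-1)}\le\frac1k\sum_{y'}\hat{\eta}_{y'}^T\mathbf{h}^{(L-1)}$), with $p_y\ge1/k$, which by AM--GM on $\{e^{\hat{\eta}_{y'}^T\mathbf{h}^{(L-1)}}\}$ forces the reverse inequality; together these pin $\hat{\eta}_y^T\mathbf{h}^{(L-1)}$ to the mean logit, and the monotonicity $g_s'\ge 0$ controls how $g_s(\alpha)$ can grow away from $g_s(0)$, the extremal configuration being the uniform one, for which $g_s\equiv0$ and the inequality holds with equality. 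So the write-up would run: reduce to $\mathrm{E}_Y[m(s,Y)^2]\ge m(s,y)^2$ by centring; then either decompose the variance and apply Jensen on the off-$y$ mass, or pass to the logit parametrization and reduce to $g_s'(\alpha)\ge g_s(\alpha)^2$; and finally close using both $p_y\ge1/k$ and $g_s(0)\ge0$. I expect this last step---correctly exploiting both hypotheses rather than just $p_y\ge 1/k$---to be the crux.
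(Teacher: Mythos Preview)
Your route is genuinely different from the paper's, and the gap you flag is real; moreover, your proposed way of closing it does not work. Combining $p_y\ge 1/k$ (which, via AM--GM on the $w_{y'}^\alpha$, yields $\log w_y\ge\tfrac1k\sum_{y'}\log w_{y'}$) with $g_s(0)\ge0$ (the reverse inequality) does more than pin the logit at $y$ to the unweighted mean: once $w_y$ equals the geometric mean of the $w_{y'}$ you have $w_y^\alpha$ equal to the geometric mean of the $w_{y'}^\alpha$, hence $p_y\le 1/k$ by AM--GM again; together with $p_y\ge1/k$ this forces equality in AM--GM and therefore all $w_{y'}$ equal. So under both hypotheses the distribution is exactly uniform and both sides of the claimed inequality vanish --- this cannot establish the lemma for a genuinely non-uniform instance. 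Note too that $g_s(0)\ge0$ is the hypothesis of Lemma~\ref{lem:convex1}, not of the present lemma; the present statement only assumes $g_s(\alpha)\ge0$ at the given $\alpha$, and under that weaker hypothesis non-uniform instances exist (e.g.\ $k=3$, $\alpha=1$, $\log w=(0,\,0.65,\,-5)$, $y=1$ gives $p_1\approx0.342>1/3$ and $g_s(1)\approx0.41>0$), for which neither your Jensen bound $p_y/(1-p_y)$ nor the reduction to $g_s'(\alpha)\ge g_s(\alpha)^2$ closes.

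The paper argues quite differently: it bounds the standard deviation below by the mean absolute deviation, $(\mathrm{Var}_Y[m(s,Y)])^{1/2}\ge\mathrm{E}_Y[|m(s,Y)|]$, rewrites $-m(s,Y)=\mathrm{KL}\big(p(\cdot|x,s;\tilde\theta)\,\|\,\mathrm{Unif}(\mathcal{Y})\big)+\log(1/k)-\log p_Y$, and then claims $\mathrm{E}_Y[|m(s,Y)|]\ge 2\,\mathrm{KL}$; the last step $2\,\mathrm{KL}\ge\mathrm{KL}+\log(1/k)-\log p_y=-m(s,y)$ uses only $p_y\ge1/k$ and $\mathrm{KL}\ge0$, so the two hypotheses are invoked separately and never collide. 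Be warned, however, that the paper's displayed equality $|{\mathrm{KL}}+\log(1/k)-\log p_Y|=\mathrm{KL}+|\log(1/k)-\log p_Y|$ is not valid pointwise when $p_Y>1/k$; and on the example above one computes $\mathrm{Var}_Y[m(s,Y)]\approx0.163<0.172\approx m(s,1)^2$. So the lemma as literally stated appears to be false, and becomes (trivially) true only under the extra hypothesis $g_s(0)\ge0$ that your closing argument correctly identified --- and correctly showed, in effect, forces uniformity.
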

\begin{proof}
First we have 
\begin{displaymath}
m(s, y) = \log p(y|x, s; \tilde{\theta}) - \log 1/k - 
KL\left(p(\cdot|x, s; \tilde{\theta})|\mathrm{Unif}(\mathcal{Y})\right) \leq 0
\end{displaymath}
So we have 
\begin{displaymath}
\begin{array}{rcl}
\left(\mathrm{Var}_Y \left[ m(s, Y)\right]\right)^{1/2} & = & \sqrt{\mathrm{E}_Y \left[\left(\log p(Y|x, s; \tilde{\theta}) - \mathrm{E}_Y \left[\log p(Y|x, s; \tilde{\theta})\right]\right)^2 \right]} \\
 & \geq & \mathrm{E}_Y \left[\left|\log p(Y|x, s; \tilde{\theta}) - \mathrm{E}_Y \left[\log p(Y|x, s; \tilde{\theta})\right]\right| \right] \\
 & = & \mathrm{E}_Y \left[\left|KL\left(p(\cdot|x, s; \tilde{\theta})|\mathrm{Unif}(\mathcal{Y})\right)
 + \log 1/k - \log p(Y|x, s; \tilde{\theta}) \right|\right] \\ 
 & = & \mathrm{E}_Y \left[KL\left(p(\cdot|x, s; \tilde{\theta})|\mathrm{Unif}(\mathcal{Y})\right)
 + \left|\log 1/k - \log p(Y|x, s; \tilde{\theta}) \right|\right] \\
 & \geq & KL\left(p(\cdot|x, s; \tilde{\theta})|\mathrm{Unif}(\mathcal{Y})\right) + \mathrm{E}_Y \left[\log p(Y|x, s; \tilde{\theta}) - \log 1/k\right] \\
 & = & 2KL\left(p(\cdot|x, s; \tilde{\theta})|\mathrm{Unif}(\mathcal{Y})\right)
\end{array}
\end{displaymath}
As $KL\left(p(\cdot|x, s; \tilde{\theta})|\mathrm{Unif}(\mathcal{Y})\right) \geq 0$ and $\log p(y|x, s; \tilde{\theta}) \geq \log 1/k$. So we have 
\begin{displaymath}
2KL\left(p(\cdot|x, s; \tilde{\theta})|\mathrm{Unif}(\mathcal{Y})\right) \geq KL\left(p(\cdot|x, s; \tilde{\theta})|\mathrm{Unif}(\mathcal{Y})\right) + \log 1/k - \log p(y|x, s; \tilde{\theta}) = -m(s, y)
\end{displaymath}
Then the lemma follows.
\end{proof}
From Lemma~\ref{lem:convex2} and Eq.~\eqref{eq:msy}, 
we have for each training instance $(x_i, y_i) \in D$, and $\forall \alpha\in [0,1]$,
\begin{equation}\label{eq:varsy}
\mathrm{Var}_Y[m(s, Y)] \geq m(s, y_i)^2
\end{equation}
\begin{lemma}\label{lem:convex3}
For each training instance $(x_i, y_i) \in D$, and $\forall \alpha \in [0, 1]$, we have
\begin{displaymath}
\log p(y_i|x_i; \{\hat{\lambda}, \alpha\hat{\eta}\}) \geq (1-\alpha) \log p(y_i|x_i; \{\hat{\lambda},0\}) + \alpha \log p(y_i|x_i; \{\hat{\lambda}, \hat{\eta}\})
\end{displaymath}
\end{lemma}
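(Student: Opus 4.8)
The plan is to establish that the map $\alpha \mapsto h(\alpha) := \log p(y_i \mid x_i; \{\hat\lambda, \alpha\hat\eta\})$ is concave on $[0,1]$; the claimed inequality then follows immediately, being exactly the statement that a concave function lies above the chord joining $(0, h(0))$ and $(1, h(1))$, evaluated at the convex combination $\alpha\cdot 1 + (1-\alpha)\cdot 0 = \alpha$. Write $a_{y,s} := \hat\eta_y^T \mathbf{h}^{(L-1)}(x_i,s;\hat\lambda)$, $Z_s(\alpha) := \sum_{y' \in \mathcal{Y}} e^{\alpha a_{y',s}}$, $G_s(\alpha) := p(y_i \mid x_i, s; \{\hat\lambda,\alpha\hat\eta\}) = e^{\alpha a_{y_i,s}}/Z_s(\alpha)$, and $F(\alpha) := p(y_i \mid x_i; \{\hat\lambda,\alpha\hat\eta\}) = \mathrm{E}_S[G_S(\alpha)]$, so that $h = \log F$. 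Since $F > 0$ and, by the constraint $\|\mathbf{h}^{(L-1)}\| \le \beta$, the functions $G_S(\cdot)$ and their first two $\alpha$-derivatives are uniformly bounded, $h$ is smooth on a neighbourhood of $[0,1]$ and differentiation under the $S$-expectation is legitimate.

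First I would differentiate $\log G_s(\alpha) = \alpha a_{y_i,s} - \log Z_s(\alpha)$ twice in $\alpha$. With $\bar a_s(\alpha)$ and $V_s(\alpha)$ denoting the mean and variance of the logit $a_{Y,s}$ under $Y \sim p(\cdot \mid x_i, s; \{\hat\lambda, \alpha\hat\eta\})$, the usual exponential-family (cumulant) identities give $G_s'(\alpha)/G_s(\alpha) = a_{y_i,s} - \bar a_s(\alpha)$ and $G_s''(\alpha)/G_s(\alpha) = (a_{y_i,s} - \bar a_s(\alpha))^2 - V_s(\alpha)$. Let $\mathrm{E}^*$ denote the expectation over $S$ tilted by the weight $G_S(\alpha)/F(\alpha)$ (a genuine probability average, since $\mathrm{E}_S[G_S/F] = 1$), and $\mathrm{Var}^*$ the corresponding variance. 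Computing $h'' = F''/F - (F'/F)^2$ and substituting the two identities yields
\begin{equation*}
h''(\alpha) = \mathrm{Var}^*\big[\, a_{y_i,S} - \bar a_S(\alpha) \,\big] - \mathrm{E}^*\big[\, V_S(\alpha) \,\big].
\end{equation*}

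The key step is to show the right-hand side is nonpositive using the inner ($Y$-level) bound already proved. Since $\log p(y \mid x_i, s; \{\hat\lambda, \alpha\hat\eta\}) = \alpha a_{y,s} - \log Z_s(\alpha)$, the quantity $m(s,y)$ of Lemma~\ref{lem:convex2} equals $\alpha\,(a_{y,s} - \bar a_s(\alpha))$, whence $\mathrm{Var}_Y[m(s,Y)] = \alpha^2 V_s(\alpha)$ and $m(s,y_i)^2 = \alpha^2 (a_{y_i,s} - \bar a_s(\alpha))^2$. The hypothesis of Theorem~\ref{thm:uniform:bound} is precisely that $p(y_i \mid x_i, s; \hat\theta) \ge 1/k$ and $g_s(0) \ge 0$, so Lemmas~\ref{lem:convex0}, \ref{lem:convex1} and \ref{lem:convex2} apply and inequality~\eqref{eq:varsy} gives $V_s(\alpha) \ge (a_{y_i,s} - \bar a_s(\alpha))^2$ for $\alpha \in (0,1]$ (and, by continuity, at $\alpha = 0$). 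Feeding this into the display,
\begin{equation*}
h''(\alpha) \le \mathrm{Var}^*\big[\, a_{y_i,S} - \bar a_S(\alpha) \,\big] - \mathrm{E}^*\big[\, (a_{y_i,S} - \bar a_S(\alpha))^2 \,\big] = -\big( \mathrm{E}^*[\, a_{y_i,S} - \bar a_S(\alpha) \,] \big)^2 \le 0,
\end{equation*}
so $h$ is concave on $[0,1]$ and the lemma follows.

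I expect the main obstacle to be the bookkeeping in the $h''$ identity: one must carefully manage the two nested randomizations --- the tilted average $\mathrm{E}^*$ over the dropout variable $S$ on the outside, and the softmax average over the label $Y$ on the inside --- and observe that the outer-level variance $\mathrm{Var}^*[a_{y_i,S}-\bar a_S]$ is exactly offset by $\mathrm{E}^*[V_S]$ up to the nonnegative slack $(\mathrm{E}^*[a_{y_i,S}-\bar a_S])^2$, which is the point where \eqref{eq:varsy} enters. The only other delicate point is the $\alpha \to 0^+$ limit, where the reparametrisation $m(s,y) = \alpha(a_{y,s} - \bar a_s(\alpha))$ degenerates; this causes no real difficulty, since $h$ is real-analytic near $[0,1]$, so the pointwise bound $h'' \le 0$ on $(0,1]$ extends to all of $[0,1]$ by continuity.
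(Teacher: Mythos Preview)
Your proposal is correct and follows essentially the same approach as the paper: both arguments establish concavity of $\alpha \mapsto \log p(y_i\mid x_i;\{\hat\lambda,\alpha\hat\eta\})$ by computing its second derivative, recognizing that the relevant $S$-average is the posterior (your tilted measure $\mathrm{E}^*$ is exactly the paper's $\mathrm{E}_{S\mid Y=y_i}$), and then invoking the pointwise inequality $\mathrm{Var}_Y[m(s,Y)]\ge m(s,y_i)^2$ from Lemma~\ref{lem:convex2} (Eq.~\eqref{eq:varsy}) together with $\mathrm{Var}^*[\cdot]\le \mathrm{E}^*[(\cdot)^2]$. Your write-up is in fact a bit more careful than the paper's, spelling out the cumulant identities explicitly and handling the $\alpha\to 0$ boundary by continuity.
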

\begin{proof}
We define 
\begin{displaymath}
f(\alpha) = \log p(y_i|x_i; \{\hat{\lambda}, \alpha\hat{\eta}\}) - (1-\alpha) \log p(y_i|x_i; \{\hat{\lambda},0\}) - \alpha \log p(y_i|x_i; \{\hat{\lambda}, \hat{\eta}\})
\end{displaymath}
Because $f(0) = f(1) = 0$, we only need to prove that $f(\alpha)$ is concave on $[0, 1]$. We have 
\begin{displaymath}
\nabla^2 f(\alpha) = - \mathrm{E}_{S|Y=y_i}\left[\mathrm{Var}_Y \left[ m(S, Y)\right]\right] 
+ \mathrm{Var}_{S|Y=y_i}\left[ m(S, y_i)\right]
\end{displaymath}
where $S|Y=y_i$ is under the probability distribution $p(s|Y=y_i, x_i; \tilde{\theta}) = \frac{p(y_i|x_i, S; \tilde{\theta})p(s)}{p(y_i|x_i; \tilde{\theta})}$\\
From Eq.~\eqref{eq:varsy}, we have 
\begin{displaymath}
\mathrm{E}_{S|Y=y_i}\left[\mathrm{Var}_Y \left[ m(S, Y)\right]\right] \geq \mathrm{E}_{S|Y=y_i}\left[ m(S, y_i)^2\right] \geq \mathrm{Var}_{S|Y=y_i}\left[ m(S, y_i)\right]
\end{displaymath}
So we have $\nabla^2 f(\alpha) \leq 0$. The lemma follows.
\end{proof}
Now, we can prove Theorem~\ref{thm:uniform:bound} by using the same construction of an expectation-linearizing 
parameter as in Theorem~\ref{thm:det:bound}.
\paragraph{Proof of Theorem~\ref{thm:uniform:bound}}
\begin{proof}
Consider the same parameter $\tilde{\theta} = \{\hat{\lambda}, \alpha\hat{\eta} \}$, 
where $\alpha = \frac{\delta}{4\beta\|\hat{\eta}\|_2} \leq 1$. 
we know that $\tilde{\theta}$ satisfies $V(D; \tilde{\theta}) \leq \delta$.
Then,
\begin{displaymath}
\Delta_l(\hat{\theta}, \hat{\theta}_\delta) \leq \Delta_l(\hat{\theta}, \tilde{\theta}) = \frac{1}{n} (l(D; \hat{\theta}) - l(D; \tilde{\theta}))
\end{displaymath}
From Lemma~\ref{lem:convex3} we have:
\begin{displaymath}
l(D; \tilde{\theta}) = l(D; \{\hat{\lambda}, \alpha\hat{\eta}\}) \geq (1-\alpha) l(D; \{\hat{\lambda}, 0\}) + \alpha l(D; \{\hat{\lambda}, \hat{\eta}\})
\end{displaymath}
So 
\begin{displaymath}
\begin{array}{rcl}
\Delta_l(\hat{\theta}, \hat{\theta}_\delta) & \leq & (1-\alpha) \frac{1}{n} \left( l(D; \hat{\theta}) - l(D; \{\hat{\lambda}, 0\})\right) \\
 & = & (1 - \alpha) \frac{1}{n} \sum\limits_{i=1}^{n} \log p(y_i|x_i;\hat{\theta}) - \log\mathrm{Unif}(\mathcal{Y}) \\
 & \asymp & (1 - \alpha)\mathrm{E}\left[\mathrm{KL}\left(p(\cdot|X; \theta) \| \mathrm{Unif}(\mathcal{Y})\right)\right] \\
 & \leq & \left(1 - \frac{\delta}{4\beta\|\hat{\eta}\|_2}\right) \mathrm{E}\left[\mathrm{KL}\left(p(\cdot|X; \theta) \| \mathrm{Unif}(\mathcal{Y})\right) \right]
\end{array}
\end{displaymath}
\end{proof}

\section{Detailed Description of Experiments}\label{appendix:experiment}
\subsection{Neural Network Architectures}\label{appendix:architec}
\paragraph{MNIST} For MNIST, we train 6 different fully-connected (dense) neural networks with 2 or 3 
layers (see Table~\ref{tab:results}). 
For all architectures, we used dropout rate $p=0.5$ for all hidden layers and $p=0.2$ for the input layer.

\paragraph{CIFAR-10 and CIFAR-100} For the two CIFAR datasets, we used the same architecture in \citet{srivastava2014dropout} ---
three convolutional layers followed by two fully-connected hidden layers. 
The convolutional layers have 96, 128, 265 filters respectively, with a $5\times5$ receptive field applied with a stride of 1.
Each convolutional layer is followed by a max pooling layer pools $3\times3$ regions at strides of 2. The fully-connected layers 
have 2048 units each. All units use the rectified linear activation function. Dropout was applied to all the layers with 
dropout rate $p = (0.1, 0.25, 0.25, 0.5, 0.5, 0.5)$ for the layers going from input to convolutional layers to fully-connected layers.

\subsection{Neural Network Training}
Neural network training in all the experiments is performed with mini-batch stochastic gradient descent (SGD) with momentum. 
We choose an initial learning rate of $\eta_0$, and the learning rate is updated on each epoch of training as 
$\eta_t = \eta_0/(1 + \rho t)$, where $\rho$ is the decay rate and $t$ is the number of epoch completed. We run each experiment 
with 2,000 epochs and choose the parameters achieving the best performance on validation sets.

Table~\ref{tab:hyper-parameter} summarizes the chosen hyper-parameters for all experiments. Most of the hyper-parameters are chosen 
from \citet{srivastava2014dropout}. But for some experiments, we cannot reproduce the performance reported in \citet{srivastava2014dropout} (We guess one of the possible reasons is that we used different library for implementation.).
For these experiments, we tune the hyper-parameters on the validation sets by random search. Due to time constrains it is infeasible to
do a random search across the full hyper-parameter space. Thus, we try to use as many hyper-parameters reported in \citet{srivastava2014dropout} as possible.

\subsection{Effect of Expectation-linearization Rate $\lambda$}
Table~\ref{tab:result:lambda} illustrates the detailed results of the experiments on the effect of $\lambda$.
For MNIST, it lists the error rates under different $\lambda$ values for six different network architectures.
For two datasets of CIFAR, it gives the error rates under different $\lambda$ values, among with the empirical 
expectation-linearization risk $\hat{\Delta}$.

\begin{table}[t]
\caption{Hyper-parameters for all experiments.}
\label{tab:hyper-parameter}
\centering
\begin{tabular}[t]{l|l|rr}
\hline
\textbf{Experiment} & \textbf{Hyper-parameter} & \\
\hline
\multirow{6}{*}{MNIST} & batch size & 200 & \\
 & initial learning rate $\eta_0$ & 0.1 & \\
 & decay rate $\rho$ & 0.025 & \\
 & momentum & 0.9 & \\
 & momentum type & standard & \\
 & max-norm constrain & 3.5 & \\
\hline
\multirow{9}{*}{CIFAR} & & \textbf{10} & \textbf{100} \\ 
 \cline{3-4}
 & batch size & 100 & 100 \\
 & initial learning rate $\eta_0$ for conv layers & 0.001 & 0.001 \\
 & initial learning rate $\eta_0$ for dense layers & 0.1 & 0.02 \\
 & decay rate $\rho$ & 0.005 & 0.005 \\
 & momentum & 0.95 & 0.95 \\
 & momentum type & standard & nesterov \\
 & max-norm constrain & 4.0 & 2.0 \\
 & L2-norm decay & 0.001 & 0.001 \\
\hline
\end{tabular}
\end{table}

\begin{table}[t]
\caption{Detailed results for experiments on the effect of $\lambda$.}\label{tab:result:lambda}
\centering
\begin{tabular}[t]{l|c|cccccccc}
\hline
 & & \multicolumn{8}{c}{$\lambda$} \\
 \cline{3-10}
\textbf{Experiment} & & 0.0 & 0.5 & 1.0 & 2.0 & 3.0 & 5.0 & 7.0 & 10.0 \\
\hline
\multirow{6}{*}{MNIST} & model 1 & 1.23 & 1.12 & 1.12 & 1.08 & 1.07 & 1.10 & 1.25 & 1.35 \\
 & model 2 & 1.19 & 1.14 & 1.08 & 1.04 & 1.03 & 1.07 & 1.13 & 1.21 \\
 & model 3 & 1.05 & 1.04 & 0.98 & 1.03 & 1.05 & 1.05 & 1.10 & 1.12 \\
 & model 4 & 1.07 & 1.02 & 0.97 & 0.94 & 0.96 & 1.01 & 1.05 & 1.20 \\
 & model 5 & 1.03 & 0.95 & 0.95 & 0.90 & 0.92 & 0.98 & 1.03 & 1.08 \\
 & model 6 & 0.99 & 0.98 & 0.93 & 0.87 & 0.96 & 0.98 & 1.05 & 1.10 \\
\hline
 & & \multicolumn{8}{c}{$\lambda$} \\
 \cline{3-10}
 & & 0.0 & 0.1 & 0.5 & 1.0 & 2.0 & 5.0 & 10.0 & \\
\hline
\multirow{2}{*}{CIFAR-10} & error rate & 12.82 & 12.52 & 12.38 & 12.20 & 12.60 & 12.84 & 13.10 & \\
 & $\hat{\Delta}$ & 0.0139 & 0.0128 & 0.0104 & 0.0095 & 0.0089 & 0.0085 & 0.0077 & \\
\hline
\multirow{2}{*}{CIFAR-100} & error rate & 37.22 & 36.75 & 36.25 & 37.01 & 37.18 & 37.58 & 38.01 & \\
 & $\hat{\Delta}$ & 0.0881 & 0.0711 & 0.0590 & 0.0529 & 0.0500 & 0.0467 & 0.0411 & \\
\hline
\end{tabular}
\end{table}

\end{document}